\documentclass{article} 
\usepackage{iclr2026_conference,times}

\usepackage{amsmath,amsfonts,bm}

\def\eqref#1{equation~\ref{#1}}

\def\1{\bm{1}}

\def\vx{{\bm{x}}}

\DeclareMathAlphabet{\mathsfit}{\encodingdefault}{\sfdefault}{m}{sl}
\SetMathAlphabet{\mathsfit}{bold}{\encodingdefault}{\sfdefault}{bx}{n}

\usepackage{hyperref}
\usepackage{url}

\usepackage{graphicx} 
\usepackage{caption}  
\usepackage{multirow} 
\usepackage{algorithm}      
\usepackage{algpseudocode}  
\usepackage{longtable,booktabs}

\usepackage{amsthm}
\usepackage{xcolor}  
\usepackage{enumitem}
\usepackage{amssymb,pifont}
\newtheorem{proposition}{Proposition}
\newtheorem{definition}{Definition}
\algrenewcommand\algorithmicrequire{\textbf{Input:}}
\algrenewcommand\algorithmicensure{\textbf{Output:}}

\iclrfinalcopy
\title{Temporal Geometry of Deep Networks: Hyperbolic Representations of Training Dynamics for Intrinsic Explainability}

\author{Ambarish Moharil\\
Data Governance Department\\
Jheronimus Academy of Data Science\\
Eindhoven University of Technology\\
\texttt{a.moharil@tue.nl} \\}

\begin{document}

\maketitle

\begin{abstract}
Intrinsic explainability remains a challenging problem, particularly in contexts where multilayer perceptrons (MLPs) require dynamic re-training within an optimization environment. This paper investigates how MLPs and their training dynamics can be represented and studied in non-Euclidean spaces; our representation features the Poincaré model of hyperbolic geometry. We aim to capture the geometric evolution of their weighted topology and self-organization over time. Instead of restricting the analysis to single checkpoints---as per established measure-based explainability methods---we construct temporal \textit{parameter graphs}, i.e., snapshots over time $T$ steps of the optimization/training process for MLPs. This reflects the view that neural networks encode information not only in their weights but also in the trajectory traced during training. Drawing on the idea that many complex networks admit embeddings in hidden metric spaces where distances correspond to connection likelihood, we present a geometric and temporal graph-based metalearning framework for obtaining dynamic hyperbolic representations of the underlying neural parameter graphs. Our model embeds temporal parameter graphs in the Poincaré model ball, and learns from them while maintaining equivariance to within-snapshot neuron permutations and invariance to permutations of past snapshots. In doing so, the approach preserves functional equivalence over time and recovers the latent evolving geometry of the network. Experiments on regression and classification tasks with trained MLPs show strong meta-network performance, accompanied by hyperbolic temporal representations. This reveals how the network structure emerges over time under specific training environments, thus providing insights into the network's self-organization.
\end{abstract}

\section{Introduction}
\vspace{-2mm}
The central problem that we address in this work is the geometric evolution of neural networks, specifically how the weighted topology of a deep neural network (DNN) model reorganizes during training and how this self-organization shapes the (neural network) function. We argue that studying the weighted topology is key to figure out the role of neural weights, that is the parameters used for any decision, are not just incidental outputs of some optimization process, but also the \textit{structures} through which generalization, compression, and transfer learning arise \citep{han2016deepcompressioncompressingdeep, li2020understandinggeneralizationdeeplearning}. Our focus in this work is not only to investigate the performance of hyperbolic and temporal 
meta-networks, but also to expose how a neural network’s structure forms and shifts during training. A temporal geometric view makes it possible to track the emergence of hierarchy, the 
consolidation of functional modules, and the redistribution of influence across layers by studying 
the evolution of its parameter graph. This perspective offers intrinsic explanations grounded in 
the model’s own dynamics and self-organization, rather than relying on post-hoc probes for explanation.

A recent approach has been to treat neural networks as data objects \textit{themselves} and to train \textit{meta-networks} that take neural parameters as inputs to predict the generalization performance of the underlying model (accuracy, $R^{2}$, etc.), classifying the implicit neural representations (INRs) of the underlying neural model or to generate weights for transfer learning across domains \citep{unterthiner2021predictingneuralnetworkaccuracy, schurholt2021self, dupont2022data, navon2024equivariantdeepweightspace, zhou2023bneuralfunctionaltransformers, you2020graphstructureneuralnetworks, lim2023graphmetanetworksprocessingdiverse, kofinas2024graphneuralnetworkslearning}. Early works on the topic flattened weights and biases into a single tensor, but this discarded structural relations and ignored the underlying geometric priors of the weight space, such as functional invariance and equivariance to permutations \citep{ hecht1990algebraic, kofinas2024graphneuralnetworkslearning}. Later work enforced data symmetries, such as permutation equivariance, ensuring that reordering the neurons did not affect the network functionality \citep{zhou2023permutationequivariantneuralfunctionals, zhou2023bneuralfunctionaltransformers, navon2024equivariantdeepweightspace, hecht1990algebraic}. These methods, although robust, often require architecture-specific tuning and Euclidean representations, making it challenging to interpret the network's internal self-organization.

A complementary direction has leveraged graph-theoretic and network-science perspectives. Here, neurons and biases become network nodes, whereas the model weights are tied to the existence of an edge link between two nodes, with meta-learning being performed on these \textit{parameter graphs} \citep{you2020graphstructureneuralnetworks, lim2023graphmetanetworksprocessingdiverse, kofinas2024graphneuralnetworkslearning}. These methods have performed well for INR classification and generalization prediction tasks, but usually operate on single checkpoints (along the training trajectory) and make zero-shot predictions, leaving the temporal traces of optimization underused \citep{lim2023graphmetanetworksprocessingdiverse, kofinas2024graphneuralnetworkslearning}. On the other hand, few studies on complex network graphs in physical domains \citep{simas2014distanceclosurescomplexnetworks, Allard2017} have considered how geometric priors could inform edge weight dynamics; a link that can be crucial in making deep learning interpretable.

Network science points to two priors of particular relevance: small-world effects and modularity \citep{song2005self, kim2007fractality, wei2013information, fronczak2024scaling}. They also often admit embeddings in metric spaces where the distances reflect the connection probabilities \citep{Allard2017, Krioukov2010PRE}. If parameter graphs display similar tendencies, embedding them in metric spaces with distance-biased learning can yield compact, interpretable representations. Most existing methods use Euclidean embeddings \citep{lim2023graphmetanetworksprocessingdiverse, kofinas2024graphneuralnetworkslearning}, but these require high dimensions and distort structure when projected \citep{nickel2014reducing, nickel2017poincareembeddingslearninghierarchical}. On the other hand, hyperbolic spaces preserve hierarchical relations with low distortion \citep{gromov1987hyperbolic}, yet no prior work has embedded neural parameter graphs within hyperbolic spaces while preserving permutation symmetry. The closest example addresses static graphs of multilayer brain networks \citep{guillemaud2025hyperbolicembeddingmultilayernetworks}.

We close this gap by developing a geometric temporal hyperbolic framework for parameter graphs that
treats training itself as an object of study. Conceptually, we view the optimization of a neural
network as a trajectory across weighted parameter graphs evolving in a negatively curved space, and we
argue that intrinsic explanations should be grounded in this evolution rather than in isolated
checkpoints. Methodologically, we (a) construct a temporal hyperbolic meta network (GTH--GMN) that
(b) models MLP parameter traces as permutation equivariant parameter graphs, (c) embedding them in the
Poincar\'e ball, and (d) coupling distance-biased hyperbolic attention with meta evolution of the
attention kernels over time \citep{pareja2019evolvegcnevolvinggraphconvolutional, Dhgat}. The architecture is designed to respect the symmetries of weight space: it is equivariant to the neuron permutations within each snapshot and invariant to permutations of past snapshots, so that
functionally equivalent networks share the same representation, while their latent geometry can still
be recovered. A signed weight regressor links geodesic distance to edge magnitude through a
power law prior \citep{Allard2017}, decoupling magnitude from polarity and importing scale free
inductive biases from network science \citep{song2005self, kim2007fractality, wei2013information,
fronczak2024scaling}. Empirically, we show on INR classification, CIFAR--10 generalization
prediction, and sinusoid regression, that this temporal hyperbolic encoder matches or heavily approaches
strong Euclidean and tensor-based baselines, while producing compact embeddings whose radial and
angular structure tracks the self-organization of the underlying networks. In this way, the method
offers a concrete route toward intrinsic explainability through the geometry of
parameter graph trajectories \citep{unterthiner2021predictingneuralnetworkaccuracy, schurholt2021self,
dupont2022data, navon2024equivariantdeepweightspace, zhou2023bneuralfunctionaltransformers,
you2020graphstructureneuralnetworks, lim2023graphmetanetworksprocessingdiverse,
kofinas2024graphneuralnetworkslearning, Allard2017, Krioukov2010PRE, nickel2014reducing,
nickel2017poincareembeddingslearninghierarchical, gromov1987hyperbolic,
guillemaud2025hyperbolicembeddingmultilayernetworks, pareja2019evolvegcnevolvinggraphconvolutional,
Dhgat}.

\section{Related Work}
\vspace{-2mm}
\paragraph{Neural Meta Networks.}
Neural networks can themselves be treated as data. Early meta-network approaches flattened parameters or relied on simple statistics, which ignored neuron permutation symmetry and had limited cross-architecture generalization \citep{unterthiner2021predictingneuralnetworkaccuracy, eilertsen2020classifying, dupont2022data, de2023deep}. Subsequent works developed permutation-equivariant meta-networks that respect weight-space symmetries \citep{schurholt2021self, navon2024equivariantdeepweightspace, zhou2023permutationequivariantneuralfunctionals, zhou2023bneuralfunctionaltransformers, maron2018invariant}. Recently, Graph-based approaches have extended this view by modeling parameters as graphs \citep{lim2023graphmetanetworksprocessingdiverse, kofinas2024graphneuralnetworkslearning}. Graph Meta Networks (GMN) construct parameter graphs to learn graph neural networks (GNNs) that remain equivariant to neural DAG automorphisms, covering CNNs, Transformers, and multilayer perceptrons (MLPs) \citep{lim2023graphmetanetworksprocessingdiverse}. These graph-based approaches model individual neurons and biases as \textit{parameter graph} nodes, where edge existence depends on the weight between adjacent nodes, enabling GNNs or relational transformers to process architectures with varying depth, width, and skip connections \citep{lim2023graphmetanetworksprocessingdiverse,kofinas2024graphneuralnetworkslearning}. Other work has used higher-level graphs for NAS, federated learning, and parameter sharing \citep{liu2018darts, you2020graphstructureneuralnetworks, litany2022federated}, though often at a coarser level of abstraction. GNN provides a foundation of permutation-equivariant operators for nodes and edges \citep{hamilton2020graph, maron2018invariant, kim2021learningdynamicgraphrepresentation, lim2023graphmetanetworksprocessingdiverse}. Our contribution builds on this trajectory but addresses two gaps. First, most approaches rely on single checkpoints and do not exploit the temporal traces generated during optimization. Second, embeddings are almost always Euclidean, which limits their capacity to represent hierarchical or heavy-tailed structures. We therefore focus on \emph{temporal parameter graphs} of MLPs, embedding them in hyperbolic space to capture both the evolving geometry and functional symmetries over the course of training.
\vspace{-4mm}
\paragraph{Hyperbolic GNNs.}
Poincar\'e and Lorentz embeddings first established that hierarchical relations admit compact, faithful representations in negatively curved spaces \citep{nickel2017poincareembeddingslearninghierarchical, nickel2018learningcontinuoushierarchieslorentz} ( For background on hyperbolic geometry, refer to Appendix \ref{sec:app_prelim}). Building on this insight, hyperbolic neural models extended message passing, attention, and diffusion to non-Euclidean manifolds for static graphs \citep{NEURIPS2018_dbab2adc, chami2019hyperbolicgraphconvolutionalneural, zhang2019hyperbolicgraphattentionnetwork, wen2024hyperbolicgraphdiffusionmodel}. The key challenge is that hyperbolic spaces are not vector spaces, so Euclidean modules do not apply verbatim \citep{HTGN}. Two strategies have emerged: design fully manifold counterparts for core layers \citep{chen2021fully}, or lift computations to tangent spaces and map back while enforcing model-specific constraints \citep{NEURIPS2018_dbab2adc, chami2019hyperbolicgraphconvolutionalneural, LorentzOps2022}. Manifold optimization further supports these pipelines through Riemannian adaptive methods \citep{sakai2020riemannianadaptiveoptimizationalgorithm}. In our framework, node embeddings are also updated intrinsically: alongside the Euclidean parameter pass for kernels and regressors, we run a dedicated Riemannian optimizer that updates positions $z^{(t)}$ on the Poincar\'e ball in a geometry-consistent second phase.
\vspace{-4mm}
\paragraph{Temporal Learning in Euclidean and Hyperbolic Spaces.}
Early approaches to temporal graph learning largely relied on Euclidean neural architectures. 
Sequence-aware graph models combined recurrent updates with graph convolutions. For example, the spatial 
operators in convolution-based long-short term memory (LSTM) networks were replaced with Chebyshev filters so that temporal dynamics and topology could be 
processed jointly \citep{seo2016structuredsequencemodelinggraph, shi2015convolutionallstmnetworkmachine, defferrard2017convolutionalneuralnetworksgraphs}. 
Later work introduced uncertainty modeling over time by placing latent distributions at each step 
\citep{Hajiramezanali2019VGRNN}, and eventually allowed the graph encoder itself to evolve through recurrent 
updates, so the parameters of the convolutional layers changed along with the temporal signal 
\citep{pareja2019evolvegcnevolvinggraphconvolutional}. 
However, Euclidean geometry can flatten or obscure the heavy-tailed and hierarchical patterns that naturally 
arise in many temporal networks \citep{Krioukov2010PRE, nickel2017poincareembeddingslearninghierarchical}. This motivated the development of temporal models built directly in hyperbolic space, where message passing and recurrence are designed to respect the curvature. These methods combine hyperbolic graph layers, hyperbolic recurrent units, and temporal attention mechanisms, along with geometric consistency losses that stabilize learning \citep{HTGN}. Other approaches use hyperbolic diffusion operators, dilated causal convolutions, and curved decoders adapted to long-horizon forecasting \citep{HGWavenetBai_2023}.  Hyperbolic attention variants compute similarity using geodesic distances and perform aggregation using Einstein 
gyromidpoints. Extensions that incorporate temporal context improve multi-step forecasting and link prediction \citep{Dhgat, li2023timeawarehyperbolicgraphattention}. Continuous time formulations push this further by combining hyperbolic variational encoders with temporal encodings for event stream modeling \citep{HVGNN2024}. 
\textit{Across these developments, the unifying ideas are operators that preserve the manifold, mappings that respect curvature, and explicit mechanisms that maintain temporal coherence. Despite this progress, difficulties remain in scaling hyperbolic temporal models and in adapting curvature over 
long sequences} \citep{LorentzOps2022, HTGN}. To address these limitations, we follow the kernel evolution strategy introduced in recurrent graph encoders 
\citep{pareja2019evolvegcnevolvinggraphconvolutional}, \textit{which updates the parameters of the attention kernels directly over time}. This avoids storing the entire sequence of hyperbolic embeddings, significantly reduces 
memory usage, and enables efficient learning on long optimization traces while maintaining temporal smoothness 
\citep{pareja2019evolvegcnevolvinggraphconvolutional} (related work in Appendix \ref{sec:app_add_rw}).

\section{Research Method}
Figure~\ref{fig:architecture} provides an overview of our approach. Given a training trace of an MLP, we encode each checkpoint as a parameter graph where nodes represent neurons or biases and edges represent weights. The sequence of graphs is embedded in the Poincaré ball and processed with a hyperbolic graph attention network, while temporal evolution is handled through kernel meta–evolution that updates attention parameters recurrently across snapshots. The model outputs link probabilities and signed weight predictions, and node embeddings are refined by intrinsic Riemannian optimization. This yields compact temporal representations that capture both structural symmetries and geometric self-organization of the network over training.

\vspace{-2mm}
\begin{figure}[h!]
  \centering
  \includegraphics[width=0.99\linewidth]{images/GHGATPot2-1000.drawio.png}
  \caption[Temporal Hyperbolic Graph Meta-Network]{%
    \small{\textbf{Architecture of the Geometric Temporal Hyperbolic Graph Meta-Network (GTH-GMN).}}}
  \label{fig:architecture}
\end{figure}

\subsection{Temporal Graph Construction from MLP Traces}\label{subsec:graph_from_mlp}
\vspace{-2.5mm}
We represent the evolving weight matrices of an MLP as a sequence of graph snapshots of parameters, ensuring that the permutation symmetries are respected. At each epoch $t$, individual neurons and biases are represented as nodes ($u \in V_t$), while the parameters define the weighted edges ($\{u,v\}\in E_t$ with signed label $w^\star_{uv,t}$). Nodes with zero weights at epoch $t$ do not register adjacency \citep{lim2023graphmetanetworksprocessingdiverse, kofinas2024graphneuralnetworkslearning}. For an MLP with $L$ layers and widths $(n_0,\dots,n_L)$, this yields a snapshot $G_t=(X_t,E_t,W_t)$. Input and output layers are fixed so that reordering neurons within hidden layers results only in an equivariant graph structure \citep{lim2023graphmetanetworksprocessingdiverse}. The set of nodes includes inputs, hidden neurons, and explicit bias nodes \citep{lim2023graphmetanetworksprocessingdiverse, kofinas2024graphneuralnetworkslearning}. Each node feature ($x_{u,t}$) contains a layer label $\ell(u)$ and a type $\tau(u)$. The neuron edges connect successive layers, while the bias edges connect a bias node to its layer. Directed pairs are symmetrized by averaging across antiparallel directions, treating missing partners as zero, so that the undirected edge set $E_t$ preserves signed magnitudes without spurious artifacts. Each node ($x_{u,t}$) is further endowed with structural signals that remain invariant under permutations. From the edges incident to $u$ we compute absolute and signed strength scores and then standardize them across the graph \citep{unterthiner2021predictingneuralnetworkaccuracy}. The resulting $z$–scores ($\widetilde{s}^{\mathrm{abs}}(u,t),\widetilde{s}^{\mathrm{sgn}}(u,t)$) quantify how strongly a node is embedded in its neighborhood. Together with $\ell(u)$ and $\tau(u)$, they yield the feature vector $x_{u,t}=\big[\ell(u),\tau(u),\widetilde{s}^{\mathrm{abs}}(u,t),\widetilde{s}^{\mathrm{sgn}}(u,t)\big].$ Edge attributes ($e_{uv,t}$) are deliberately minimal: they encode only whether an edge is a bias connection and whether it spans layers, never leaking functions of $w^\star_{uv,t}$ into the model. Repeating this construction produces a temporal sequence $\{G_t\}_{t=1}^T$ that captures the self-organization of the MLP while remaining invariant to hidden layer permutations.

All descriptors ($x_{u,t}, e_{uv,t}$) are embedded in the Euclidean tangent space at the origin $T_0\mathbb{B}^d_c\simeq \mathbb{R}^n$, which provides a stable learning backbone. Discrete labels ($\ell(u),\tau(u)$) are mapped through learned embeddings, while standardized strength features are passed through a small MLP. A residual block refines the result, producing a $n$ -dimensional representation $X_{u,t}\in T_0$. For edges, the binary context $e_{uv,t}=(\text{is\_bias},\Delta\ell)$ is encoded in a bias term $b_{uv,t}$ and a gate of near identity $g_{uv,t}$ using an MLP, which modulates attention scores without altering the geometry of the manifold. Thus, normalization, dropout, and mixing remain stable in Euclidean space, while curvature is introduced only in the subsequent hyperbolic message passing layers \citep{Dhgat, zhang2019hyperbolicgraphattentionnetwork, NEURIPS2018_dbab2adc}. The final output is a temporal sequence of parameter graphs with node characteristics ($X_{u,t}$) and edge characteristics ($e_{uv,t}, b_{uv,t}, g_{uv,t}$) that respect the permutation symmetry and provide a clean basis for the prediction and regression of semi-supervised link probability.

\subsection{Hyperbolic Graph Attention Layer: Intuition and Formulation}
\label{subsec:hgat}

The goal of this layer is to compute the updated positions of the nodes on the ball by applying manifold-aware structural self-attention, which respects the curvature of the geometry as well as the temporal evolution of parameter graphs. Each parameter graph snapshot $G_t$ lives on the Poincar\'e ball, which naturally encodes hierarchical 
and heavy–tailed structure. A hyperbolic attention layer, therefore, cannot simply apply Euclidean 
linear maps: every projection, comparison, and aggregation must respect the curvature of the underlying manifold. 
We follow a simple principle throughout: \emph{compute in the tangent space, transport to the ball only 
when geometry matters, and aggregate in a model where barycenters (a manifold-consistent version of weighted average) are well defined}. 
This yields a stable attention mechanism whose scores and updates have direct geometric meaning \citep{Dhgat, li2023timeawarehyperbolicgraphattention}.

\vspace{-2mm}
\paragraph{Hyperbolic Affine Maps (Q/K/V on the Ball).}
Each attention head first computes hyperbolic query, key, and value vectors,
which act as the basic attention primitives (equation \ref{eq:qkv_time}): the queries encode how a node seeks information, the keys encode how it is compared to its neighbors in hyperbolic space,
and the values carry the features that will be aggregated along curved geodesics.
Since linear maps are not directly compatible with curved geometry, we realize them as tangent space projections, followed by exponential lifting, which transports the updated tangent vector back onto the manifold along the geodesic defined at that point~\citep{NEURIPS2018_dbab2adc, Dhgat}. 
Given the Euclidean parameters $W$ and $b$, the hyperbolic affine map ($\Phi$) is defined as
\begin{equation}
\Phi^{(t)}_{W,b}(x)=\exp_{\Phi^{(t)}_W(x)}\!\Big(\mathrm{PT}_{0\to \Phi^{(t)}_W(x)}(b)\Big),
\qquad 
\Phi^{(t)}_W(x)=\exp_{0}\!\big(W\,\log_{0}(x)\big),
\label{eq:hyp_affine_time}
\end{equation}

where $\log_0$ and $\exp_0$ move between the tangent space and the ball at the origin, 
and $\mathrm{PT}$ denotes parallel transport. 
This construction should be viewed as the hyperbolic analogue of a linear map plus bias: the matrix $W$ acts in the flat tangent space, while $b$ is transported to the tangent space at $\Phi_W^{(t)}(x)$ where it is applied before lifting back onto the ball. Applying $\Phi$ to each node yields the corresponding hyperbolic query, key, and value vectors as follows:
\begin{equation}
q^{(r,t)}_i=\Phi^{(t)}_{W^{(r,t)}_{\mathrm{q}},b^{(r,t)}_{\mathrm{q}}}(z^{(t)}_i), \quad
k^{(r,t)}_i=\Phi^{(t)}_{W^{(r,t)}_{\mathrm{k}},b^{(r,t)}_{\mathrm{k}}}(z^{(t)}_i), \quad
v^{(r,t)}_i=\Phi^{(t)}_{W^{(r,t)}_{\mathrm{v}},b^{(r,t)}_{\mathrm{v}}}(z^{(t)}_i).
\label{eq:qkv_time}
\end{equation}

\vspace{-2mm}
\paragraph{Attention Logits and Structural Gating.}
Attention on the Poincar\'e ball must measure similarity by geodesic distance, in line with prior 
hyperbolic graph attention formulations \citep{zhang2019hyperbolicgraphattentionnetwork, li2023timeawarehyperbolicgraphattention, Dhgat}. 
For each directed edge $(u\!\to\!v)$, we therefore score proximity as the negative hyperbolic distance, since smaller geodesic distances correspond to stronger similarity, and using the negative ensures that nearer nodes receive exponentially larger attention weights under the softmax (equation \ref{eq:logit_time}).
\begin{equation}
\theta^{(r,t)}_{u\to v}=-\,d_c(q^{(r,t)}_u,\,k^{(r,t)}_v)+b^{(t)}_{uv},
\label{eq:logit_time}
\end{equation}
where $b^{(t)}_{uv}$ injects edge context such as bias indicator or layer separation. 
Softmax normalization produces the structural attention coefficients in the following manner
\begin{equation}
\begin{aligned}
\alpha^{(r,t)}_{u\to v}
&=\frac{\exp\!\big(\theta^{(r,t)}_{u\to v}\big)}
       {\sum_{w:(w\to v)\in E_t}\exp\!\big(\theta^{(r,t)}_{w\to v}\big)}
\qquad,\qquad
\widetilde{\alpha}^{(r,t)}_{u\to v}
=\frac{\alpha^{(r,t)}_{u\to v}\,g^{(t)}_{uv}}
       {\sum_{w:(w\to v)\in E_t}\alpha^{(r,t)}_{w\to v}\,g^{(t)}_{wv}}.
\end{aligned}
\label{eq:alpha_and_tilde}
\end{equation}

Because some edges carry more structural influence (e.g., cross–layer or bias connections), we modulate these coefficients using a gate $g^{(t)}_{uv}$ (right part of equation \ref{eq:alpha_and_tilde}). 
Renormalizing yields $\widetilde{\alpha}^{(r,t)}_{u\to v}$, which behaves like standard attention but incorporates curvature–aware structural reweighting \citep{li2023timeawarehyperbolicgraphattention, Dhgat}, which is achieved by multiplying each coefficient by its gate and normalizing across the gated neighbors so that their relative influence is correctly rescaled.
\vspace{-2mm}
\paragraph{Geometric Aggregation via Einstein Gyromidpoints.}
Averaging vectors in curved space cannot be done linearly because Euclidean addition does not respect the manifold’s geometry: the sum of two points typically lies outside the manifold and does not correspond to any meaningful geodesic midpoint. We therefore use Einstein gyromidpoints, computed in the Klein model where closed–form barycenters exist 
\citep{zhang2019hyperbolicgraphattentionnetwork, Dhgat}. 
The Poincar\'e to Klein map ($\kappa$) and its inverse are
\begin{equation}
\begin{aligned}
\kappa(z) &= \frac{2\sqrt{c}\,z}{1 + c\|z\|^2}, 
\qquad
\kappa^{-1}(y) = \frac{1}{\sqrt{c}}\,\frac{y}{1 + \sqrt{\,1 - \|y\|^2\,}}.
\end{aligned}
\label{eq:poincare_klein_maps}
\end{equation}
with Lorentz factor $\gamma(y)=1/\sqrt{1-\|y\|^2}$. 
For node $v$ and head $r$, the Klein barycenter is
\begin{equation}
\kappa\!\left(m^{(r,t)}_v\right)=
\frac{\sum_{u:(u\to v)\in E_t}\widetilde{\alpha}^{(r,t)}_{u\to v}\,\gamma(\kappa(v^{(r,t)}_u))\,\kappa(v^{(r,t)}_u)}
{\sum_{u:(u\to v)\in E_t}\widetilde{\alpha}^{(r,t)}_{u\to v}\,\gamma(\kappa(v^{(r,t)}_u))}.
\label{eq:gyro_mid_klein_time}
\end{equation}
Mapping back and returning to the tangent space gives the multi–head aggregation (right-part equation \ref{eq:gyro_mid_final_time_update}) for computing the node update, which is simply the average of these tangent updates:
\begin{equation}
m^{(r,t)}_v=\kappa^{-1}\!\left(\kappa(m^{(r,t)}_v)\right), \qquad
\widehat{m}^{(r,t)}_v=\log_{0}(m^{(r,t)}_v), \quad M^{(t)}_v=\frac{1}{H}\sum_{r=1}^{H}\widehat{m}^{(r,t)}_v.
\label{eq:gyro_mid_final_time_update}
\end{equation}

This design has a critical interpretive consequence: attention uses geodesics 
to decide ``who influences whom,’’ while aggregation respects the global curvature, 
ensuring that hierarchical structure, hub nodes, and boundary effects are preserved. Each node then receives a gated residual update 
(addition in the tangent space followed by exponential lifting). 
A projection step enforces a safety margin inside the ball to avoid numerical instability near the boundary
\citep{Dhgat}.
\vspace{-2mm}
\paragraph{Temporal Evolution of Attention Kernels.}
 Now, to model temporal smoothness across snapshots, we meta–evolve the attention parameters using a GRU, inspired by \citep{pareja2019evolvegcnevolvinggraphconvolutional}. 
For each layer $\ell$ and head $r$, we maintain a recurrent state $u^{(t)}_{\ell,r}$ as follows:
\begin{equation}
\begin{aligned}
p^{(t-1)}_{\ell} &= \frac{1}{N}\sum_{i=1}^{N} X^{(t-1)}_{\ell,i}, \quad
u^{(t)}_{\ell,r} = \mathrm{GRU}\!\left(p^{(t-1)}_{\ell},\,u^{(t-1)}_{\ell,r}\right).
\end{aligned}
\label{eq:gru_evolution}
\end{equation}

The GRU emits fresh Q/K/V parameters via an MLP:
\begin{equation}
W^{(t)}_{\ell,r, k/q/v}=\phi_{\text{MLP}}\!\big(W_{\mathrm{out}}u^{(t)}_{\ell,r}+b_{\mathrm{out}}\big)\in\mathbb{R}^{3d^2+3d},
\label{eq:param_vector}
\end{equation}
which are then used in the hyperbolic affine maps (\ref{eq:hyp_affine_time})--(\ref{eq:qkv_time}). 
This mechanism allows the attention kernels to evolve smoothly over time without storing past embeddings, 
providing temporal consistency while preserving permutation invariance and limiting memory to compact recurrent states.
\vspace{-2mm}
\subsection{Signed Weight Regression}
\label{subsec:signedhead}
Before introducing the mathematical details of this module, we outline its role in the architecture. 
A standard link predictor estimates whether an edge should exist, but in weighted networks, we also need 
to recover the actual strength and polarity of each connection. Magnitude reflects how strongly two nodes 
influence each other, while the sign encodes the direction of that influence. 
In many real networks, these magnitudes follow heavy-tailed distributions 
\citep{song2005self, Krioukov2010PRE, wei2013information, Allard2017}, and hyperbolic distance offers a 
natural geometric proxy for such patterns. The signed regression head therefore serves two purposes: it 
provides additional supervision for the temporal encoder, and it imposes a geometric prior that ties 
magnitudes to hyperbolic distances while modeling polarity in the tangent space. 
In contrast, Euclidean regressors or generic MLP decoders \citep{goyal2018dyngemdeepembeddingmethod} 
ignore curvature and risk losing this interpretive structure.
\vspace{-2mm}
\paragraph{Magnitude as a Power Law of Hyperbolic Distance.}
We begin by predicting node-dependent scale and decay factors from the tangent features 
$X^{(t)}_i \in T_0\mathbb{B}^d_c \simeq \mathbb{R}^d$ $,
s_i=f_{\sigma}(X^{(t)}_i),\qquad k_i=f_{\kappa}(X^{(t)}_i),$
where $f_{\sigma}$ and $f_{\kappa}$ are MLPs. $s_{i}$ \& $\kappa_{i}$ adjust the local steepness and scale of the magnitude–distance relation. For each edge $(u,v)$ these corrections are used symmetrically as $(s_u,s_v)$ and $(k_u,k_v)$.  
The slope of the power law is then allowed to vary slightly with local context:
\begin{equation}
\alpha_{uv}=\alpha_0+\delta\alpha\cdot\tanh\!\big(\phi_{\text{MLP}}([X^{(t)}_u,X^{(t)}_v,e_{uv}])\big),
\label{eq:alpha}
\end{equation}
where $\alpha_0$ is a global baseline, $\delta\alpha$ a small deviation, and 
$\phi_{\text{MLP}}$ an MLP \citep{Allard2017}.  
This design captures mild structural asymmetries between edge types while keeping the decay law 
interpretable and stable.
Given these components, the predicted magnitude follows a heavy-tailed power law 
\citep{Allard2017}:
\begin{equation}
\widehat{|w^{(t)}_{uv}|}=\exp(\log\nu+s_u+s_v)\,\exp\!\big(-\,(1-\tfrac{\alpha_{uv}}{d})(k_u+k_v)\big)\,\big(d^{(t)}_{uv}\big)^{-\alpha_{uv}},
\label{eq:powerlaw_mag}
\end{equation}
where $\nu>0$ is a global scale and $d^{(t)}_{uv}=d_c(z^{(t)}_u,z^{(t)}_v)$ is the Poincar\'e distance.
Large magnitudes, therefore, correspond to smaller hyperbolic distances, while small magnitudes 
produce larger metric separation \citep{simas2014distanceclosurescomplexnetworks}.  
This establishes a direct geometric interpretation of weight strength.
\vspace{-2mm}
\paragraph{Polarity in the Tangent Space.}
Distances on the manifold  are positive definite \citep{simas2014distanceclosurescomplexnetworks} . 
Polarity is therefore modeled as a directional effect in the tangent space, where inner products are 
well defined.  
We map $z^{(t)}_v$ into the Tangent Space at $z^{(t)}_u$ using the logarithmic map (equation \ref{eq:conformal_norm}),
\begin{equation}
\delta^{(t)}_{u\to v}=\log_{z^{(t)}_u}(z^{(t)}_v),\quad
\eta_u=W_\eta X^{(t)}_u,\quad
\xi_u=\frac{\eta_u}{\lambda_{z^{(t)}_u}\|\eta_u\|},\quad
\lambda_{z^{(t)}_u}=\frac{2}{1-c\|z^{(t)}_u\|^2},
\label{eq:conformal_norm}
\end{equation}
and normalize the source feature $\eta_u$ by the conformal factor so that directionality is consistent 
with the Riemannian metric.  
The sign logit is then given by a hyperbolically consistent inner product,
\begin{equation}
s^{(t)}_{uv}=\beta\,\langle \delta^{(t)}_{u\to v},\xi_u\rangle,\qquad
\widehat{w}^{(t)}_{uv}=\widehat{|w^{(t)}_{uv}|}\,\tanh(s^{(t)}_{uv}).
\label{eq:sign_logit}
\end{equation}
This construction separates magnitude and polarity in a geometrically faithful manner (equation \ref{eq:sign_logit}). 
A distance-based heavy-tailed law controls magnitude (equation \ref{eq:powerlaw_mag}), while polarity arises from the angle 
between transported displacements and conformally normalized source features (equation \ref{eq:conformal_norm}).  
As a result, short hyperbolic distances naturally represent strong connections, whereas longer distances 
encode weaker or negligible ones, and the directional alignment in the tangent space determines the sign. The module therefore provides an interpretable decomposition of each weight into strength and direction, consistent with the geometry of the Poincar\'e ball.

\subsection{Link Decoder, Supervision, and Objective}\label{subsec:linkdecoder_objective}
\vspace{-2mm}
To translate the hyperbolic embeddings into edge predictions, we use the Fermi–Dirac decoder ($\psi^{(t)}$) (equation \ref{eq:fermi-dirac})
\citep{nickel2017poincareembeddingslearninghierarchical, nickel2018learningcontinuoushierarchieslorentz, wen2024hyperbolicgraphdiffusionmodel}, 
which links \textit{connectivity} directly to the geodesics on the Poincaré ball, for a given time step $t$:
\begin{equation}
    \psi^{(t)}(u,v)=\Big(1+\exp\!\big(\tfrac{d_c(z^{(t)}_u,z^{(t)}_v)-R}{T}\big)\Big)^{-1}.
\label{eq:fermi-dirac}
\end{equation}
Here, the radius \(R\) controls the effective neighborhood boundary and the temperature \(T\) controls the sharpness of the transition, both learned during training. Under this decoder, nodes that lie close in hyperbolic distance are classified as likely neighbors, and distant nodes as unlikely. Training uses a binary cross-entropy loss together with dynamic negative sampling from both same-layer and uniform distributions. Moreover, connectivity alone does not describe the strength or direction of influence between two parameters. For this reason, we include a signed weight regressor (Section~\ref{subsec:signedhead}) that predicts the full value of each positive edge.  
The regression loss encourages the predicted weight to match the true magnitude, while a separate classification loss predicts its polarity.  
The magnitude loss measures the difference between predicted and true weights, while the polarity loss compares the predicted sign logit with the binary label indicating whether the weight is positive or negative. Formal expressions for these losses and their components are provided in Appendix \ref{subsec:algos_app}. This design ties geometric separation to functional strength: large-magnitude edges correspond to smaller hyperbolic distances, and the sign is encoded through directions in the tangent space.  
As a result, the embeddings acquire semantic meaning: hubs drift outward, stronger edges cluster radially, and excitatory versus inhibitory effects appear as angular variation. To maintain coherent geometry across time, we add a small set of regularizers. A slope prior keeps the decay exponents near a global baseline, a temporal smoothness term limits abrupt movement between snapshots, and a ranking loss enforces that stronger interactions correspond to shorter 
hyperbolic distances.  
The overall objective is a weighted sum of the Fermi–Dirac cross-entropy, the magnitude and sign supervision terms, and these geometric regularizers. Training follows a gentle curriculum that anneals the ranking margin and progressively hardens negative 
sampling to stabilize learning before introducing difficult examples. Ablations about specific terms can be found in Appendix \ref{sec:ablations_app}.


Optimization proceeds in two distinct phases. This separation is necessary because the attention stack and regressors operate in Euclidean space, whereas the node embeddings evolve on a curved manifold; using a single update rule would either ignore curvature or force the Euclidean modules to inherit manifold constraints they were not designed for. In the first phase, a Euclidean pass updates all kernel weights, regressors, and the link decoder using standard backpropagation. In the second phase, a Riemannian pass refines the node positions $z^{(t)}$ directly on the Poincar\'e ball. This intrinsic step begins by rescaling Euclidean gradients with the conformal factor $\lambda_z^2 = 2/(1-c\|z\|^2)$ so that updates follow the manifold geometry rather than the ambient space \citep{bridson1999metric,Bonnabel_2013,nickel2017poincareembeddingslearninghierarchical}. To stabilize learning, first and second moments of these gradients are accumulated in a RAMSGrad–style procedure \citep{bécigneul2019riemannianadaptiveoptimizationmethods, sakai2020riemannianadaptiveoptimizationalgorithm}, which extends adaptive optimization to curved spaces. Each accepted update is applied via the exponential map, projecting the tangent step back onto the manifold. Finally, moment vectors are parallel transported to the new tangent space, keeping the optimizer’s memory coherent across iterations. This two–phase design decouples the stability of the Euclidean attention stack from the dynamics of hyperbolic embeddings, ensuring that geometry is respected while long training traces remain stable. We describe the algorithm in Appendix \ref{subsec:algos_app}. Overall, this scheme does more than minimize loss: it shapes an embedding space where magnitudes follow radial distance, polarity is encoded in tangent orientation, and temporal smoothness emerges from controlled geodesic motion. Geometry thus becomes a built-in inductive bias for compact and interpretable representations.

\vspace{-3mm}
\section{Experimental Setup and Results}\label{sec:exp}
\vspace{-2mm}
\paragraph{Classification of Images via INR Traces.}
This experiment evaluates whether the temporal evolution of the parameter graph of the intrinsic neural representation (INR) of an image contains enough structure for a meta-network to infer the class of the underlying image. 
Crucially, the meta-network never sees the image itself; it only receives the evolution 
of the INR weights over optimization.
We fit each image from MNIST and Fashion-MNIST using a shallow INR, recording its parameter values over optimization, yielding a temporal trace of parameter graphs. Each trace contains $T\in[80-100]$ checkpoints, i.e., a sequence of signed parameter-graphs. Dataset splits are made at the \emph{image} level to avoid leakage; we use $45{,}000$ images for training, $5{,}000$ for validation, and $10{,}000$ for testing. A spatio-temporal meta-network is pretrained with intrinsic (hyperbolic) optimization on a joint link-prediction and signed weight-regression objective over the full kept trace per image, along with other losses in our curriculum; within each checkpoint, edge embeddings are pooled to a snapshot vector, and snapshot vectors are then pooled over time to produce a fixed $n$-dimensional representation. A small feed-forward classifier is trained on these representations to predict the digit or fashion class. The same encoding procedure is applied at validation and test time, and we report classification accuracy on the held-out test images. Results are shown in Table \ref{tab:inr_temporal}.
\vspace{-1mm}
\begin{table}[h!]
\caption{\small{\textbf{INR classification in static vs. temporal settings.} 
Average test accuracies ($\pm$ standard error) on MNIST and Fashion-MNIST INRs. 
Baselines are reproduced from DWSNets \citep{navon2024equivariantdeepweightspace} and NFNs \citep{zhou2023permutationequivariantneuralfunctionals}. 
Our spatio-temporal variant uses pooled representations across $K$ traces.}}
\label{tab:inr_temporal}
\centering
\begin{tabular}{lcc}
\toprule
Model & MNIST INR & Fashion-MNIST INR \\
\midrule
MLP                & 17.55 $\pm$ 0.01 & 19.91 $\pm$ 0.47 \\
MLP + Perm. aug    & 29.26 $\pm$ 0.18 & 22.76 $\pm$ 0.13 \\
MLP + Alignment    & 58.98 $\pm$ 0.52 & 47.79 $\pm$ 1.03 \\
INR2Vec (Arch.)    & 23.69 $\pm$ 0.10 & 22.33 $\pm$ 0.41 \\
Transformer        & 26.57 $\pm$ 0.18 & 26.97 $\pm$ 0.33 \\
DWSNets  & 85.71 $\pm$ 0.57 & 67.06 $\pm$ 0.29 \\
NFN$_{\text{HNP}}$  & 92.5 $\pm$ 0.07 & 72.7 $\pm$ 1.53 \\
NFN$_{\text{NP}}$  & 92.9 $\pm$ 0.22 & 75.6 $\pm$ 1.07 \\
\midrule
\textbf{Ours (Temporal, $K$-trace pooling)} & \textbf{95.6 $\pm$ 0.18} & \textbf{80.72 $\pm$ 0.29} \\
\bottomrule
\end{tabular}
\end{table}
\vspace{-2mm}
\paragraph{Predicting DNN Generalization from Weights.} 
This experiment tests whether the \emph{temporal evolution} of the parameters of a model contains 
a predictive signal about its test accuracy. More specifically, can we recover accuracy-related information solely from a deep neural network's weight structure? Here, each trial $i$, corresponds to an MLP trained 
on a subset of the CIFAR-10 dataset. For each trial $i$ we gather checkpoints $\{\theta_{i,t}\}_{t=1}^{T_i}$ and the corresponding test accuracy (per-epoch) $y_{i,t}$, and convert every checkpoint to a parameter graph $G_{i,t}$ with corresponding node/edge features. Traces are encoded over fixed-length (stride-rolled) windows or complete sequences, with \emph{last-step} supervision, i.e., the intermediate steps evolve while the final step is supervised. We classify the divisions by trial, and the fidelity of the classification is measured using Kendall’s $\tau$ \citep{zhou2023permutationequivariantneuralfunctionals, navon2024equivariantdeepweightspace, lim2023graphmetanetworksprocessingdiverse}. See Results in (Table~\ref{tab:cifar10_tau}). 

\vspace{-2mm}
\begin{table*}[h!]
\centering
\begin{minipage}{0.45\linewidth}
\centering
\caption{\small{Kendall’s $\tau$ correlation on the CIFAR-10 benchmark. 
This table compares the rank correlation between predicted and ground-truth accuracies of neural architectures (INRs)
as captured by different meta-network representations. Higher values indicate stronger consistency between 
the learned representation space and the empirical performance ranking of architectures \citep{unterthiner2021predictingneuralnetworkaccuracy,zhou2023permutationequivariantneuralfunctionals}.}} %

\begin{tabular}{l c}
\toprule
\textbf{Metanetwork} & \textbf{CIFAR-10} \\
\midrule
\textbf{NFN$_{\text{HNP}}$}  &$ \mathbf{0.934 \pm 0.001}$ \\
NFN$_{\text{NP}}$   & $0.922 \pm 0.001$ \\
StatNN              & $0.915 \pm 0.002$ \\
\midrule
GTH-GMN (ours)       &$0.846 \pm 0.004$ \\
\bottomrule
\end{tabular}
\label{tab:cifar10_tau}
\end{minipage}
\hfill
\begin{minipage}{0.45\linewidth}
\centering
\caption{\small{Results of meta-net representations on sinusoid-MLPs. 
Entries report \textbf{test MSE} of a downstream MLP regressor trained on the learned representations. 
We compare with other baselines by \cite{lim2023graphmetanetworksprocessingdiverse,navon2024equivariantdeepweightspace,zhou2023permutationequivariantneuralfunctionals}.}}
\begin{tabular}{l c}
\toprule
Metanetwork & Test MSE \\
\midrule
MLP                  & 7.39 $\pm$ 0.19 \\
MLP + Perm. aug      & 5.65 $\pm$ 0.01 \\
MLP + Alignment      & 4.47 $\pm$ 0.15 \\
INR2Vec (Arch.)      & 3.86 $\pm$ 0.32 \\
Transformer          & 5.11 $\pm$ 0.12 \\
DWSNets              & 1.39 $\pm$ 0.06 \\
GMN                  & 1.13 $\pm$ 0.08 \\
\midrule
GTH-GMN (ours) & 1.06 $\pm$ 0.24 \\
\bottomrule
\end{tabular}
\label{tab:sine_rmse}
\end{minipage}
\vspace{-5mm}
\end{table*}
\vspace{-2mm}
\paragraph{Predicting Sine Wave Frequency.}
Additionally, we also evaluate our temporal model on the benchmark introduced by \citep{navon2024equivariantdeepweightspace}, in which 
each input network is an MLP trained to fit a one-dimensional sinusoidal function of the form 
$x \mapsto a \sin(bx)$, with coefficients $a,b \in \mathbb{R}$ varying across samples. 
The goal is to learn a meta-network encoder that maps these trained MLP temporal parameter graphs to meaningful 
representations, while remaining invariant to neuron permutations, where representation quality is assessed in a contrastive-learning framework inspired by SimCLR \citep{lim2023graphmetanetworksprocessingdiverse, navon2024equivariantdeepweightspace, zhou2023permutationequivariantneuralfunctionals, chen2020simpleframeworkcontrastivelearning}. 
We introduce a \emph{temporal} variant: link $K$ successive checkpoints per INR, build an edge–centric hyperbolic representation, mean–pool, and predict $(a,b)$ with a small MLP. See results in Table~\ref{tab:sine_rmse}. 
\vspace{-3mm}
\begin{figure}[h!]
    \centering
\scalebox{0.75}{
\begin{minipage}{0.48\textwidth}
    \centering
    \includegraphics[width=\linewidth]{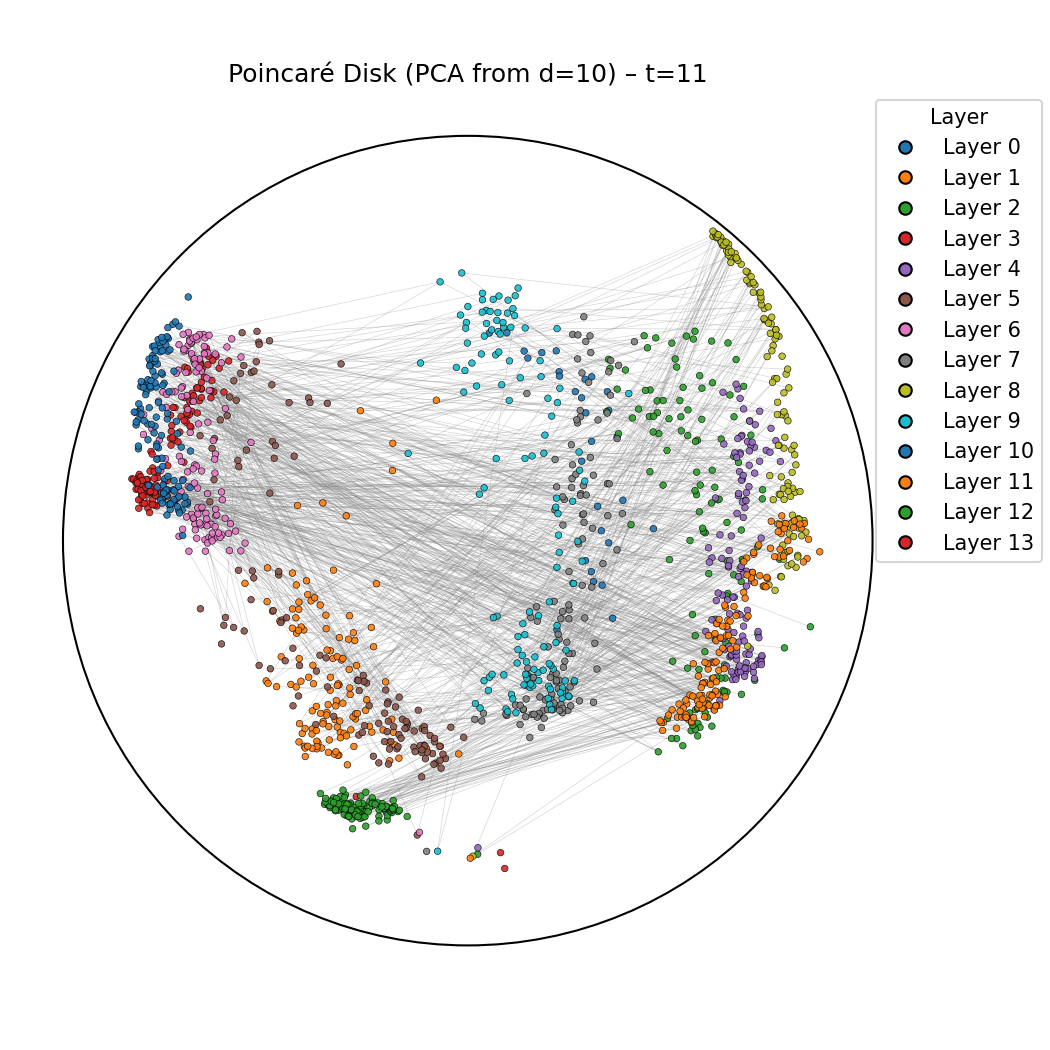}
\end{minipage}\hfill
\begin{minipage}{0.48\textwidth}
    \centering
    \includegraphics[width=\linewidth]{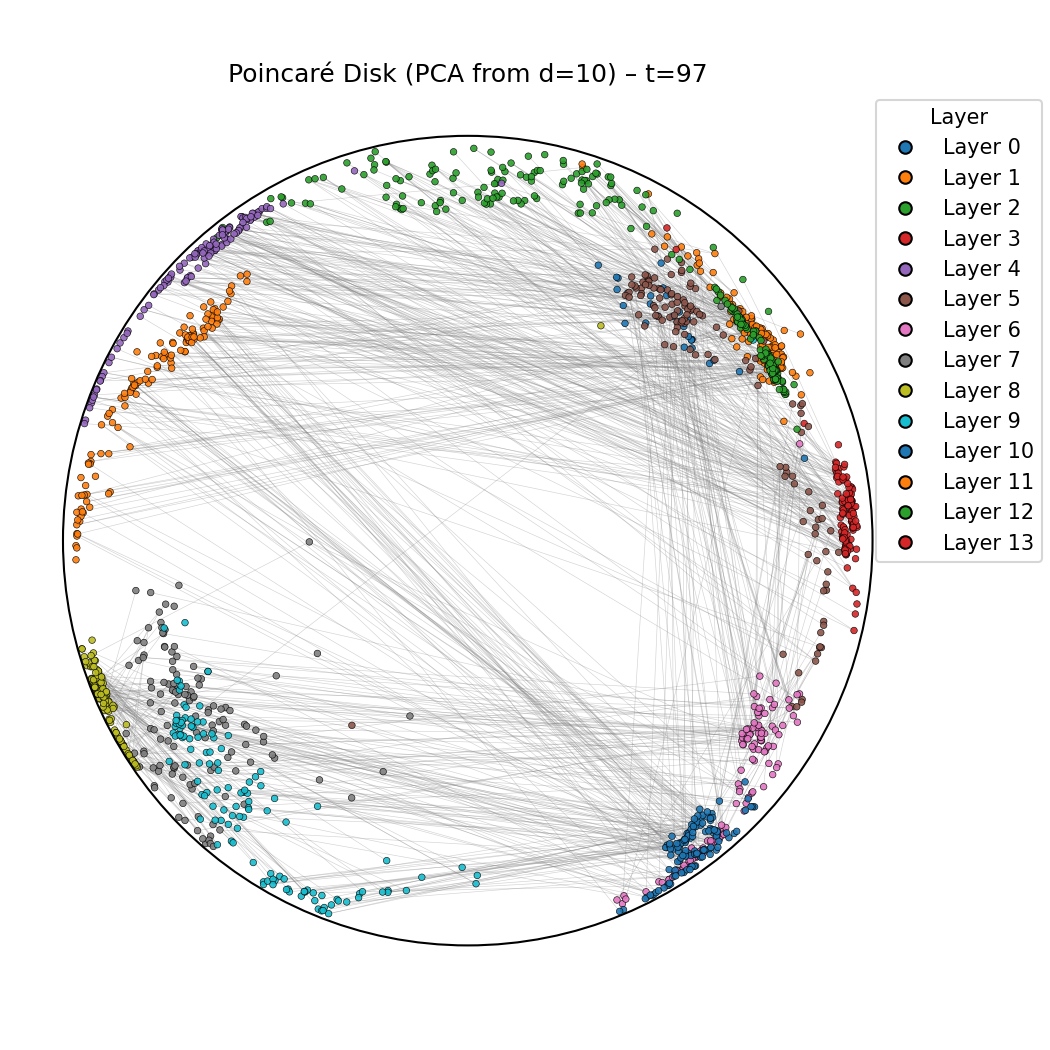}
\end{minipage}}
\vspace{-5mm}
    \caption{\small{Hyperbolic embeddings of parameter graphs from a 13-layer MLP trained as an INR on 
CIFAR-10, shown at checkpoints $t=11$ (left) and $t=97$ (right). Embeddings are learned in 
$d=10$ dimensions with 4 HGAT layers and 4 EvolveGCN--H kernels (tangent dim.\ 128) and 
projected to the Poincaré disk via PCA. Nodes are colored by layer index, with edges in gray, 
illustrating the progressive reorganization and separation of layers over training.}}
\vspace{-5mm}
    \label{fig:poincare_embeds}
\end{figure}
\section{Discussion}
\vspace{-2mm}
\textbf{Interpreting the Results.}
Across the three experiments in Section \ref{sec:exp}, we aimed to test whether a single temporal hyperbolic meta network can generalize across very different prediction tasks. The results suggest that the evolution of the parameter graph carries a stable and predictive structure. In the INR classification, the encoder achieves average test accuracies of $95.6 \pm 0.18 \%$ on MNIST and $80.72 \pm 0.29 \%$ on Fashion MNIST, revealing that the way INRs self-organize during training encodes class-specific geometric structures that the model is able to consistently separate. In the CIFAR 10 generalization prediction task, we achieve a relatively modest Kendall’s $\tau = 0.846 \pm 0.004$ compared to the neural functional baselines. Our factorization emphasizes global geometric structure and temporal coherence over preserving all fine-grained tensor details. It has been shown that representation learning of neural networks using Autoencoders often exhibits a smoothening bias as MSE reconstruction loss tends to coarse-averaged representations \citep{meynent2025structureenoughleveragingbehavior}.
In contrast, permutation-equivariant neural functionals operate directly on raw weight tensors \citep{zhou2023permutationequivariantneuralfunctionals} and retain more accuracy-correlated micro-structure at the cost of reduced geometric interpretability. 
This likely contributes to the relatively lower Kendall’s $\tau$ achieved over the CIFAR-10 dataset \citep{ACE-HGN}. We see this as a trade-off between raw performance over the generalization prediction task and generating geometric and interpretable embeddings of the underlying DNN. Moreover, over the sinusoid benchmark, our temporal hyperbolic model reaches the $1.06 \pm 0.24$ 
test MSE, comparable in mean to GMN’s $1.13 \pm 0.08$, though with higher variance. 
This arises from the interaction of curvature \citep{ACE-HGN} and temporal coupling: in the Poincaré 
ball, the effective step size increases with radius, so small early differences can 
produce divergent trajectories, and the recurrent kernel further amplifies such 
variation \citep{nickel2017poincareembeddingslearninghierarchical, Bonnabel_2013, NEURIPS2018_dbab2adc}. Variability can be reduced with more conservative Riemannian step control, 
stronger temporal regularization, or light ensembling, which are standard techniques 
for stabilizing hyperbolic and recurrent models \citep{NEURIPS2018_dbab2adc, chen2024trustregioneigenvaluefilteringprojected, sun2021hyperbolicvariationalgraphneural}.

\textbf{Implications for Explainability.}
The embeddings reveal coherent patterns of self-organization: points drift toward the boundary, 
layers separate in both radial and angular directions, and trajectories evolve smoothly across 
training (Figure \ref{fig:poincare_embeds}). These behaviors suggest several potential signals for intrinsic explainability. Radial position and drift may reflect how influence strengthens or weakens over time; angular separation may indicate the emergence of functional specialization; geometric 
proximity and repeated strong connections may reveal central processing hubs; and consistent 
magnitude–distance alignment may highlight connections that are both geometrically and functionally 
salient. Conversely, nodes whose trajectories collapse together or drift outward uniformly may 
signal redundancy and pruning opportunities \citep{nickel2017poincareembeddingslearninghierarchical, Serrano2008}. Additionally, Appendix Section \ref{app:future works} explains how our approach can be extended beyond multilayered perceptrons (MLPs).


\section*{Acknowledgements}
I thank Eindhoven University of Technology (TU/e) for providing access to the Umbrella cluster, which supported part of the experimentation. Moreover, I would like to extend my thanks to Dr. Indika Kumara and Dr. Damian Tamburri, from Tilburg University and Eindhoven University respectively for their supervisory contribution during the course of this project. I also acknowledge SURF (www.surf.nl) for enabling use of the National Supercomputer Snellius. The use of large language models (LLMs) was restricted exclusively to polishing the text for brevity under space constraints. The author(s) take full responsibility for the research ideation and the experimentation conducted and reported in this work.

\bibliography{iclr2026_conference}
\bibliographystyle{iclr2026_conference}
\newpage
\appendix
\section*{Appendix}
\section{Notations}
\begin{longtable}{@{}p{0.24\textwidth}p{0.72\textwidth}@{}}
\toprule
\multicolumn{2}{c}{\textbf{Parameter Graphs \& Features}}\\
\midrule
$G_t$ & Parameter-graph snapshot at epoch $t$ \\
$V,\ E_t$ & Node set; edge set at epoch $t$ \\
$w^\star_{uv}$ & Signed edge label on $\{u,v\}$ \\
$e_{uv}$ & Edge context (bias/kernel, interlayer flag) \\
$L,\ d,\ c$ & \#layers; embedding dim; curvature parameter \\
$\ell(u)$ & Layer label of node $u$ \\
$\tau(u)$ & Type label (input / neuron / bias) \\
$s_{\mathrm{abs}}(u)$ & Incident absolute-weight sum at $u$ \\
$s_{\mathrm{sgn}}(u)$ & Incident signed-weight sum at $u$ \\
$\tilde s_{\mathrm{abs}},\ \tilde s_{\mathrm{sgn}}$ & Standardized node-weight signals \\
\boldmath$\vx_u$ & Final node feature vector \\
$V_\ell,\ N_\ell$ & Nodes in layer $\ell$; their count \\
\addlinespace[0.5ex]
\midrule
\multicolumn{2}{c}{\textbf{Temporal States and Recurrence}}\\
\midrule
$t$ & Time/epoch index \\
$X^{(t)}_i$ & Tangent-backbone feature of node $i$ at $t$ \\
$p^{(t-1)}_{\ell}$ & Mean-pooled tangent for layer $\ell$ \\
$u^{(t)}_{\ell,r}$ & Recurrent state (layer $\ell$, head $r$) \\
$\theta^{(t)}_{\ell,r}$ & Packed attention parameters (head $r$) \\
\addlinespace[0.5ex]
\midrule
\multicolumn{2}{c}{\textbf{Hyperbolic Geometry (Poincar\'e Ball)}}\\
\midrule
$\mathbb{B}^d_c$ & $d$-dimensional Poincar\'e ball \\
$d_{\mathbb{B}}(z_i,z_j)$ & Geodesic distance on $\mathbb{B}^d_c$ \\
$\exp_{x}(\cdot),\ \log_{x}(\cdot)$ & Exponential / logarithmic maps at $x$ \\
$\lambda_{z}$ & Conformal factor at point $z$ \\
$\operatorname{PT}_{x\to y}(\cdot)$ & Parallel transport from $x$ to $y$ \\
$K(\cdot),\ K^{-1}(\cdot)$ & Poincar\'e$\leftrightarrow$Klein coordinate maps \\
\addlinespace[0.5ex]
\midrule
\multicolumn{2}{c}{\textbf{Hyperbolic Attention Block}}\\
\midrule
$r$ & Attention head index \\
$q^{(r,t)}_{u},\ k^{(r,t)}_{v},\ v^{(r,t)}_{u}$ & Query, key, value points on $\mathbb{B}^d_c$ \\
$\tau^{(r,t)}$ & Softmax temperature (head $r$) \\
$b_{uv}$ & Edge bias from context \\
$g_{uv}$ & Edge gate from context \\
$\alpha^{(r,t)}_{u\to v}$ & Attention coefficient (head $r$) \\
$m^{(r,t)}_{v}$ & Head message in tangent space \\
$M^{(t)}_{v}$ & Multi-head combined message \\
$z^{(t)}_{v}$ & Node position update on $\mathbb{B}^d_c$ \\
$\mathcal{N}(v)$ & Neighborhood of node $v$ \\
$y$ & Klein coordinate ($\lVert y\rVert<1$) \\
$\gamma(y)$ & Lorentz factor in Klein model \\
\addlinespace[0.5ex]
\midrule
\multicolumn{2}{c}{\textbf{Decoders and Regressors}}\\
\midrule
$\psi^{(t)}(u,v)$ & Fermi–Dirac link probability \\
$R,\ T$ & Link radius; temperature (FD decoder) \\
$\hat w^{(t)}_{uv},\ |\hat w^{(t)}_{uv}|$ & Predicted signed weight; magnitude \\
$\alpha_{uv}$ & Distance–magnitude exponent (edge-wise) \\
$\nu,\ \beta$ & Magnitude scale; sign-logit scale \\
$\delta^{(t)}_{u\to v}$ & Hyperbolic displacement (source $u$ to $v$) \\
$\xi_u$ & Conformal direction for sign prediction \\
$s^{(t)}_{uv}$ & Sign logit \\
$\varepsilon$ & Small positive constant (stability) \\
\addlinespace[0.5ex]
\midrule
\multicolumn{2}{c}{\textbf{Losses and Optimization}}\\
\midrule
$\mathcal{L}_{\mathrm{FD}}$ & BCE loss for Fermi–Dirac link decoder \\
$\mathcal{L}_{\mathrm{wrec}}$ & Magnitude regression loss \\
$\mathcal{L}_{\mathrm{sign}}$ & Sign prediction loss (BCE) \\
$\mathcal{L}_{\alpha}$ & Prior/regularizer on $\alpha_{uv}$ \\
$\mathcal{L}_{\mathrm{smooth}}$ & Temporal smoothness penalty \\
$\mathcal{L}_{\mathrm{rank}}$ & Pairwise ranking loss \\
$E^+,\ E^-$ & Positive edges; sampled negatives \\
$\gamma_{\text{rank}}$ & Ranking margin parameter \\
$\nabla^{\mathrm{Euc}}_z \mathcal{L},\ \nabla^{\mathrm{Riem}}_z \mathcal{L}$ & Euclidean / Riemannian gradients \\
$\Delta_i$ & Intrinsic update step at node $i$ \\
$z^{(t+1)}_i$ & Updated node position \\
$\eta$ & Learning rate \\
\bottomrule
\end{longtable}

\section{Theoretical Preliminaries}\label{sec:app_prelim}
\subsection{Riemannian Manifold}

A Riemannian manifold $(\mathcal{M}, g^{\mathcal{M}})$ is a smooth manifold $\mathcal{M}$ endowed with a Riemannian metric $g^{\mathcal{M}}$, which assigns to each tangent space $T_x\mathcal{M}$ at a point $x \in \mathcal{M}$ a positive definite inner product. The tangent space $T_x\mathcal{M}$ consists of all tangent vectors at $x$ and may be regarded as the best linear, first-order approximation to the manifold in a neighborhood of that point. The metric equips this linear space with a bilinear form $ \langle \cdot , \cdot \rangle : T_x\mathcal{M} \times T_x\mathcal{M} \to \mathbb{R}$, from which a norm is induced by $\|v\|_g = \sqrt{g(v,v)}$ for any $v \in T_x\mathcal{M}$. With this structure, the manifold acquires the ability to measure lengths, angles, and volumes, thereby extending the familiar tools of Euclidean geometry to curved spaces. If $\gamma : [a,b] \to \mathcal{M}$ is a smooth curve, its Riemannian length is defined as $L(\gamma) = \int_a^b \|\gamma'(t)\|_g \, dt$, a quantity that depends on the metric through the induced norm on tangent vectors. Among all curves connecting two points $x, y \in \mathcal{M}$, geodesics play a central role: they are the locally length-minimizing paths, serving as the natural analogue of straight lines in Euclidean space. The Riemannian distance is accordingly given by $d_{\mathcal{M}}(x,y) = \inf_{\gamma} L(\gamma)$, where the infimum is taken over all smooth curves $\gamma$ with $\gamma(a) = x$ and $\gamma(b) = y$. At this point, fundamental questions of differential geometry naturally arise. Given a point $x \in \mathcal{M}$ and a tangent vector $v \in T_x\mathcal{M}$, how can one move from the linearized tangent space back to the curved manifold? Conversely, given two points $x,y \in \mathcal{M}$, how can their displacement be expressed in the linear coordinates of $T_x\mathcal{M}$? These questions motivate the introduction of the \textbf{exponential} and \textbf{logarithmic} maps. The exponential map at $x$, denoted $\exp_x : T_x\mathcal{M} \to \mathcal{M}$, takes a tangent vector $v \in T_x\mathcal{M}$ and returns the point on the manifold obtained by following the geodesic that begins at $x$ in the direction of $v$, parameterized so that unit time corresponds to unit length. In this way, the exponential map provides a principled mechanism for ``lifting'' linear displacements in $T_x\mathcal{M}$ into the curved geometry of $\mathcal{M}$. The logarithmic map, denoted $\log_x : \mathcal{M} \to T_x\mathcal{M}$, is defined locally as the inverse of the exponential map. For a point $y \in \mathcal{M}$ sufficiently close to $x$, it yields the unique vector $v \in T_x\mathcal{M}$ such that $\exp_x(v) = y$. Thus, the log map addresses the complementary problem: it represents the displacement between manifold points in the linear framework of the tangent space. Taken together, these constructions endow the Riemannian manifold with a powerful interpretative scheme. Tangent vectors become infinitesimal displacements that may be integrated into finite motions along the manifold via $\exp_x$, while finite displacements can be linearized into tangent vectors through $\log_x$. This duality shows why exponential and logarithmic maps are often viewed as rigorous analogues of addition and subtraction on curved spaces, providing the formal apparatus that underpins the heuristic extension of Euclidean intuition to Riemannian geometry.  

\subsection{Poincaré Ball Model}

A hyperbolic manifold can be described as a Riemannian manifold of constant negative curvature $c$ with $c < 0$. Among the isomorphic realizations of hyperbolic space, three models are commonly employed: the Poincaré ball model, the Lorentz model, and the Klein model. In what follows, we focus on the Poincaré ball model, which offers a particularly convenient formulation for learning problems in hyperbolic geometry.

The Poincaré ball model is denoted as $(\mathbb{B}^n_c, g^B_x)$, where the domain $\mathbb{B}^n_c = \{ x \in \mathbb{R}^n : \|x\|^2 < -1/c \}$ is the open $n$-dimensional ball of radius $1/\sqrt{|c|}$. Its Riemannian metric tensor is defined as $g^B_x = (\lambda^c_x)^2 g^E_x$, where $g^E_x$ is the Euclidean metric, i.e.\ the identity matrix $I_n$, and $\lambda^c_x = \frac{2}{1+c\|x\|^2}$ is the conformal factor that encodes the curvature dependence.

Given two points $x, y \in \mathbb{B}^n_c$, the induced geodesic distance is given by
\begin{equation}
d^c_B(x,y) = \frac{1}{\sqrt{|c|}} \cosh^{-1} \left( 1 - \frac{2c\|x-y\|^2}{(1+c\|x\|^2)(1+c\|y\|^2)} \right).
\end{equation}

Closed-form expressions for the exponential and logarithmic maps in the Poincaré ball model were derived in \cite{NEURIPS2018_dbab2adc}. The exponential map at $x \in \mathbb{B}^n_c$ applied to a tangent vector $v \in T_x\mathbb{B}^n_c$ takes the form
\begin{equation}
\exp^c_x(v) = x \oplus_c \left( \tanh \left( \sqrt{|c|} \frac{\lambda^c_x \|v\|}{2} \right) \frac{v}{\sqrt{|c|}\|v\|} \right),
\end{equation}
where $\oplus_c$ denotes Möbius addition. This expression formalizes the process of mapping linear displacements in the tangent space to points on the curved manifold along the geodesic through $x$.

The logarithmic map, which serves as the inverse of the exponential map, is given by
\begin{equation}
\log^c_x(y) = \frac{2}{\sqrt{|c|}\lambda^c_x} \tanh^{-1} \left( \sqrt{|c|} \| -x \oplus_c y \| \right) \frac{-x \oplus_c y}{\| -x \oplus_c y \|}.
\end{equation}
Here, $\log^c_x(y)$ produces the unique tangent vector at $x$ that corresponds to the geodesic displacement from $x$ to $y$.  

Another key operation in hyperbolic geometry is \emph{parallel transport}, which allows one to move a tangent vector from $T_x\mathbb{B}^n_c$ to $T_y\mathbb{B}^n_c$ along the geodesic connecting $x$ and $y$, without altering its length or intrinsic orientation. This operation is formally defined as
\begin{equation}
PT^c_{x \to y}(v) = \frac{\lambda^c_x}{\lambda^c_y} \, \mathrm{gyr}[y, -x] \, v,
\end{equation}
where $\mathrm{gyr}[y, -x]$ is the gyration operator associated with Möbius addition, capturing the non-associative structure of hyperbolic translations \cite{ungar2007hyperbolic,ungar2022gyrovector}. Taken together, the distance function, exponential and logarithmic maps, and parallel transport furnish the Poincaré ball with the full differential-geometric apparatus needed for optimization and representation learning in spaces of constant negative curvature.  

\section{Additional Related Work}\label{sec:app_add_rw}
\vspace{-2mm}
\paragraph{Network science and geometric models}
The study of complex networks has long emphasized the interplay between efficiency, clustering, and degree heterogeneity. 
Small-world properties, for instance, can be captured through measures of global and local efficiency \citep{PhysRevLett.87.198701}. 
Geometry itself may emerge from simple generative rules: simplicial-complex models yield clustered and modular graphs whose spectral dimension is finite, with curvature distributions obeying Gauss--Bonnet relations \citep{Wu_2015}. 
Empirical evidence further suggests that hidden metric spaces shape both connectivity and weights, as shown by the strong coupling between triangle formation and edge strengths \citep{Allard2017}. 

A recurrent observation is the coexistence of heavy-tailed degree distributions and high clustering \citep{Kim_2007, Serrano_2008, Amaral}, a structural combination that classical random-graph or lattice-based models cannot reproduce \citep{Newmann, Cohen2010}. 
Renormalization studies reveal that such networks are often self-similar: under box covering, they remain scale-free, exhibiting fractal organization \citep{song2005self}. 
This fractality can be traced to critical branching “skeletons,” whereas supercritical skeletons drive small-world behavior \citep{kim2007fractality}. 
Beyond standard box dimension, information-dimension measures have been developed to capture self-similar structure in real and weighted networks \citep{wei2013information}. 
A recent synthesis unifies these microscopic and macroscopic exponents, demonstrating robust scaling laws across web, brain, and citation networks \citep{fronczak2024scaling}. 

Such findings motivate latent-space models in which link probabilities decay with hidden distance \citep{Papadopoulos2008, Papadopoulos_2012}. 
The hyperbolic framework of \citet{Krioukov2010PRE} provides a particularly powerful resolution, reconciling hierarchical expansion with geometric growth. 
In this view, the popularity--similarity model represents degree through radial coordinates and similarity through angular ones, thereby unifying hub formation with community structure across diverse real-world systems \citep{Krioukov2010PRE, Allard2017}.
\vspace{-2mm}

\paragraph{Neuroscience and Graph Learning}
Neurobiological work has used graph theory to study how anatomical and functional connectomes are organized, translating empirical association matrices into graphs and then asking what principles explain their structure. Early cortical mapping of the cat showed that laminar hierarchy and nonmetric multidimensional scaling reveal coherent system-level topologies that are only partly explained by simple neighborhood wiring, suggesting additional long-range organizing factors \citep{Scannell1995}. This line of inquiry converged on the view that brain networks combine high clustering with short paths, a small world organization that can be quantified by clustering, path length, small worldness, and efficiency, and related to plausible cost-benefit trade-offs \citep{Bassett2006SmallWorld}. Reviews then systematized how to construct graphs from imaging data and how measures of segregation, integration, centrality, motifs, assortativity, and robustness jointly characterize architecture, while emphasizing that node definitions, edge weighting, thresholding, and null models strongly shape inference \citep{Bullmore2009, RUBINOV20101059}.

Building on this foundation, learning based approaches seek predictive models that preserve neurobiological interpretability. BrainGNN encodes region identity with region-of-interest-aware convolutions and performs region selection pooling to yield subject-level and cohort-level biomarkers that align with prior findings \citep{Li2021BrainGNN}. BrainNNExplainer complements this with a group-level explainer that learns a sparse global edge mask, improving diagnosis while revealing disorder-specific subgraphs \citep{cui2021brainnnexplainerinterpretablegraphneural}. Dynamic representations extend beyond static functional connectivity: STAGIN learns spatio-temporal graph embeddings with attention across windows of connectivity \citep{kim2021learningdynamicgraphrepresentation}; IBGNN proposes an interpretable backbone that couples edge weight aware message passing with a shared mask for disorder specific biomarkers \citep{cui2022interpretablegraphneuralnetworks}; TBDS learns sparse directed acyclic graphs from BOLD time series and couples them to signed edge graph neural networks \citep{yu2022learningtaskawareeffectivebrain}; and DynDepNet replaces fixed connectivity with time varying dependencies that are optimized jointly with a recurrent classifier \citep{campbell2023dyndepnetlearningtimevaryingdependency}. Parallel developments adjust data geometry and architectural bias: R Mixup interpolates correlation and adjacency matrices along log Euclidean geodesics on the symmetric positive definite manifold to stabilize training and improve generalization \citep{Kan_2023}; BrainRGIN adapts graph isomorphism style aggregation, region aware pooling, and attention based readouts to predict individual differences with interpretable subnetworks \citep{Thapaliya_2025}; and BioBGT encodes hub roles and modular structure within a transformer by combining diffusion based node importance with module aware self attention \citep{peng2025biologicallyplausiblebraingraph}.

Recent benchmarking advises caution when applying message aggregation to functional connectomes. Broad comparisons indicate that aggregation can underperform strong non-graph baselines as graph density increases, while hybrids that separate global vectorized patterns from localized graph structure provide competitive accuracy and clearer interpretation \citep{han2025rethinkingfunctionalbrainconnectome}. In parallel, work on hyperbolic embeddings of multilayer networks, including formulations in the Lorentz model, points toward latent geometric explanations, although current treatments remain focused on static graphs and often rely on stochastic block models to encode topology \citep{guillemaud2025hyperbolicembeddingmultilayernetworks}.
\section{Theoretical Propositions \& Proofs}

\noindent\textbf{Setup.}
Fix an integer $L\ge 2$. Let the common vertex set be
\[
V \;=\; I \;\dot\cup\; H_1 \;\dot\cup\; \cdots \;\dot\cup\; H_{L-1} \;\dot\cup\; O \;\dot\cup\; B,
\]
a disjoint union of inputs $I$, hidden layers $H_\ell$ ($\ell=1,\dots,L-1$), outputs $O$, and bias nodes $B=\{b^{(1)},\dots,b^{(L)}\}$. Let $T$ be a nonempty index set of time steps (epochs). For each $t\in T$, the parameter snapshot is a directed (or undirected as well), edge-labeled graph
\[
G_t \;=\; (V,E_t,\theta_t),
\]
where $E_t\subseteq V\times V$ is the edge set and $\theta_t:E_t\to\mathbb{R}$ assigns a real weight to each edge present at time $t$. The family $(G_t)_{t\in T}$ is called a \emph{trace}.

\begin{definition}
A family $(\varphi_t)_{t\in T}$ with $\varphi_t:V\to V$ bijective for each $t$ is a snapshot automorphism family for the trace $(G_t)_{t\in T}$ if, for every $t\in T$,
\begin{enumerate}
\item \textbf{Edge preservation at time $t$:}
\[
(u,v)\in E_t \quad\Longleftrightarrow\quad (\varphi_t(u),\varphi_t(v))\in E_t
\quad\text{for all } u,v\in V.
\]
\item \textbf{Node label preservation at time $t$:}
input nodes, output nodes, and bias nodes are fixed points; that is,
\[
\varphi_t(i)=i \ \ \forall\, i\in I,\qquad
\varphi_t(o)=o \ \ \forall\, o\in O,\qquad
\varphi_t(b)=b \ \ \forall\, b\in B.
\] \citep{lim2023graphmetanetworksprocessingdiverse}
\end{enumerate}
\end{definition}

\begin{definition}
Let $\mathcal{T}$ denote the set of all traces on the fixed vertex set $V$. An encoder is a map
\[
F:\mathcal{T}\to Z
\]
into a representation space $Z$ equipped with a (left) group action $\rho$ of the product group $\prod_{t\in T}\mathrm{Aut}(G_t)$ on $Z$. The encoder $F$ is called \emph{snapshot-wise permutation equivariant} if, for every trace $(G_t)_{t\in T}$ and every snapshot automorphism family $(\varphi_t)_{t\in T}$ with $\varphi_t\in\mathrm{Aut}(G_t)$,
\[
F\!\bigl((\varphi_t\!\cdot\! G_t)_{t\in T}\bigr)
\;=\;
\rho\!\bigl((\varphi_t)_{t\in T}\bigr)\,F\!\bigl((G_t)_{t\in T}\bigr).
\]
When $Z$ carries an index structure inherited from $V$ at each $t$ (e.g., per-node or per-edge features over time), $\rho\bigl((\varphi_t)_{t}\bigr)$ is the natural permutation of those indices, acting componentwise in $t$. If, in addition, a readout $P:Z\to Y$ is permutation-invariant with respect to $\rho$, then $P\circ F$ is invariant under all snapshot automorphism families \citep{bronstein2021geometricdeeplearninggrids}.
\end{definition}


\begin{proposition}
For any epoch $t\in T$, a \emph{per-snapshot permutation} $\varphi_t:V\to V$ \emph{belongs to the snapshot automorphism family at time $t$} if and only if it fixes all input, output, and bias nodes and permutes only hidden neurons \emph{within their own layer}; equivalently,
By membership, this is exactly the class of permutations that preserve the time-$t$ edges:
\[
(u,v)\in E_t \;\Longleftrightarrow\; (\varphi_t(u),\varphi_t(v))\in E_t\ \ \text{for all }u,v\in V.
\]
\end{proposition}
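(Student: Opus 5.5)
The plan is to prove the two directions separately. I will work under the standing convention of Section~\ref{subsec:graph_from_mlp}: the edge set $E_t$ is the \emph{layered} edge set of the MLP. That means edges join consecutive layers $I\to H_1\to\cdots\to H_{L-1}\to O$, and each bias node $b^{(\ell)}$ is joined exactly to the neurons of the layer it feeds. Throughout I write $H_L:=O$. I will first treat the generic (dense) case, in which every such weight is nonzero at time $t$.

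\emph{($\Leftarrow$).} Let $\varphi_t$ fix $I\cup O\cup B$ pointwise and satisfy $\varphi_t(H_\ell)=H_\ell$ for each $\ell$. Then the label-preservation condition of the snapshot automorphism definition holds by assumption. For edge preservation, I observe that membership $(u,v)\in E_t$ depends only on the layer and type of $u$ and $v$. Concretely, $(u,v)$ is an edge iff $u$ lies in layer $\ell-1$ and $v$ in layer $\ell$ for some $\ell$, or $u=b^{(\ell)}$ and $v$ lies in layer $\ell$. Since $\varphi_t$ preserves both layer and type, it maps edges to edges. Applying the same argument to $\varphi_t^{-1}$, which is again a within-layer permutation, gives the reverse implication, so the biconditional holds.

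\emph{($\Rightarrow$).} Let $\varphi_t$ be a member of the family at time $t$. Condition~2 of the definition already gives that $\varphi_t$ fixes $I$, $O$ and $B$. It remains to show that $\varphi_t(H_\ell)=H_\ell$ for $1\le\ell\le L-1$. The key step is to use the fixed bias nodes as anchors. In the layered graph, the neighbourhood of $b^{(\ell)}$ is exactly layer $\ell$, so $N(b^{(\ell)})=H_\ell$. Edge preservation gives $\varphi_t(N(b^{(\ell)}))=N(\varphi_t(b^{(\ell)}))=N(b^{(\ell)})$. Hence $\varphi_t(H_\ell)=H_\ell$. Since $\varphi_t$ is a bijection that fixes everything else, it is a within-layer permutation of the hidden neurons.

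The main obstacle is sparsity. When some weights are exactly zero at epoch $t$, the corresponding edges drop out of $E_t$. In particular, a bias edge may vanish, so $N(b^{(\ell)})$ may be a proper subset of $H_\ell$, and the anchoring argument no longer pins down every hidden neuron. For example, an isolated hidden neuron could be swapped with a hidden neuron of another layer that is also isolated, and edges would still be preserved. My first repair is to make the layer-label condition explicit. The encoder features $x_{u,t}$ carry $\ell(u)$, so "node label preservation" should be read as preserving $\ell(\cdot)$ and $\tau(\cdot)$. Under that reading the forward direction is immediate. The converse direction is then restated accordingly: a within-layer permutation belongs to the family precisely when it additionally preserves the (possibly sparse) edge set $E_t$. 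This is exactly the displayed condition in the statement. In the dense case this extra condition is automatic by the ($\Leftarrow$) argument above.
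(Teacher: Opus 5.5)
Your dense-case proof is correct, and your forward direction takes a different route from the paper's. The paper anchors on the fixed \emph{inputs}. In the layered DAG every directed path from $I$ to a hidden node $h$ has length $\ell(h)$, and an edge-preserving bijection fixing $I$ maps such paths to paths of equal length, so $\ell(\varphi_t(h))=\ell(h)$. You anchor on the fixed \emph{bias} nodes through $\varphi_t(N(b^{(\ell)}))=N(b^{(\ell)})=H_\ell$.

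Your identity is one line and applies verbatim to the symmetrized undirected graph the paper actually builds (Section~\ref{subsec:graph_from_mlp}, Algorithm~\ref{alg:mlp_to_graph}). By contrast, a ``maximum directed path length'' only makes sense in the directed DAG; for the undirected graph one would have to switch to shortest-path distance from $I$. The trade-off is the non-degeneracy requirement: you need every bias weight nonzero, while the paper needs every hidden neuron reachable from some input by a directed path.

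Your converse is also more substantive. You derive edge preservation from preservation of layer and type, whereas the paper lists ``preserves $E_t$'' among the hypotheses of ($\Leftarrow$), so that direction holds by definition.

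Your isolated-neuron swap is a valid counterexample to the literal forward direction. It defeats the paper's path-length argument just as it defeats yours, and the paper never states the density assumption it relies on. Note that Algorithm~\ref{alg:mlp_to_graph} inserts every layered pair into $E$ regardless of the weight value, so under that construction the graph is always dense and your first argument is already the complete proof.

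If you keep the repair, \emph{add} the layer-label condition to the pointwise fixing of $I\cup O\cup B$ rather than replacing it. Preserving $\ell(\cdot)$ and $\tau(\cdot)$ alone would admit permutations of $I$, or of $O$, among themselves. Be aware also that the repair makes the forward direction tautological. Stating an explicit non-degeneracy hypothesis under which one of the two anchoring arguments applies would keep the paper's definition intact.
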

\begin{proof}
Fix $t\in T$.

($\Rightarrow$) If $\varphi_t$ belongs to the snapshot automorphism family at time $t$, then by node–label preservation, it fixes $I$, $O$, and $B$. Moreover, as a graph isomorphism of $G_t$ it preserves lengths of directed paths. In an MLP DAG with edges only between adjacent layers, the layer index $\ell(h)$ of a hidden node $h$ equals the maximum length of a directed path from an input to $h$. Since permutations render this maximal path length unchanged under $\varphi_t$ i.e., $\ell(\varphi_t(h))=\ell(h)$ which implies that $\varphi_t$ maps each hidden layer to itself.

($\Leftarrow$) Conversely, if $\varphi_t$ fixes $I$, $O$, and $B$, maps each hidden layer to itself, and preserves $E_t$, then it satisfies the snapshot automorphism conditions at time $t$ by definition.
\end{proof}

\begin{proposition}
For any trace $(G_t)_{t\in T}$ and any family of per-snapshot permutations $(\psi_t)_{t\in T}$ with $\psi_t \in \mathrm{Aut}(G_t)$ at each $t$, the GTH-GMN encoder satisfies:

\begin{enumerate}
\item \textbf{Snapshot equivariance:} at every time $t$, relabeling the snapshot by $\psi_t$ relabels the time-$t$ output in the same way
\item \textbf{Previous-step invariance:} the temporal transition from step $t-1$ to $t$ is invariant to any per-snapshot permutation applied at $t-1$;
\end{enumerate}

Consequently, the full encoder is equivariant to $\phi_t$ at the current step $t$, while the update from $t-1$ to $t$ is unaffected by any admissible relabeling at $t-1$.
\end{proposition}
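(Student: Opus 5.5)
The plan is to prove both claims by tracing a relabeling $\psi_t$ through the pipeline of Sections~\ref{subsec:graph_from_mlp}--\ref{subsec:hgat}, one stage at a time. At each stage I would show that the quantity is either (a) a per-node or per-edge object that is merely re-indexed by $\psi_t$, or (b) a global quantity that does not change. Equivariance then follows because a composition of equivariant maps is equivariant. For the invariance claim, it suffices to notice that information crosses time only through a permutation-invariant summary. Throughout, the previous proposition lets me take $\psi_t$ to fix $I$, $O$ and $B$, to map each hidden layer to itself, and to preserve $E_t$ together with its weights, so that $\theta_t(\psi_t u,\psi_t v)=\theta_t(u,v)$ and the labels $\ell$ and $\tau$ are preserved.

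\textbf{Snapshot equivariance.} The first stage is the node features. Take $x_{u,t}=[\ell(u),\tau(u),\widetilde s^{\mathrm{abs}},\widetilde s^{\mathrm{sgn}}]$. The labels are preserved. The incident strength sums are sums over the neighbourhood $\mathcal N(u)$, and $\psi_t$ maps $\mathcal N(u)$ bijectively onto $\mathcal N(\psi_t u)$ with the same weights. The $z$-score standardization uses a mean and variance over all nodes, which is a permutation-invariant statistic. Together these give $x_{\psi_t u,t}'=x_{u,t}$.

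The edge context $e_{uv,t}$ is a function of $\tau$ and $\ell$ alone, so the same holds for $b_{uv},g_{uv}$. The tangent encoder is applied pointwise with shared weights, so $X$ inherits equivariance.

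Next comes the attention layer at time $t$, with kernels $W^{(r,t)}$ fixed. The maps $\Phi$ act nodewise with shared parameters, so $q,k,v$ are equivariant. The logits $\theta_{u\to v}$ depend only on the pair $(u,v)$, and both softmaxes normalize over the incoming neighbourhood, which $\psi_t$ carries onto the incoming neighbourhood of $\psi_t v$. Hence $\widetilde\alpha_{\psi_t u\to\psi_t v}'=\widetilde\alpha_{u\to v}$. The Klein barycenter (equation~\ref{eq:gyro_mid_klein_time}) is a sum over that neighbourhood, so it is invariant to the order of summation. By induction over the HGAT layers, including the gated residual and the projection step, which are both pointwise, this yields $z'^{(t)}_{\psi_t v}=z^{(t)}_v$.

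The decoders (equations~\ref{eq:powerlaw_mag}, \ref{eq:sign_logit} and \ref{eq:fermi-dirac}) are functions of the endpoint pair only. Their outputs are therefore re-indexed by $\psi_t$ as well, which is exactly the natural action $\rho$.

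\textbf{Previous-step invariance.} The only channel from step $t-1$ to step $t$ is equation~\ref{eq:gru_evolution}. There, $p^{(t-1)}_\ell=\frac1N\sum_i X^{(t-1)}_{\ell,i}$ is a mean over the nodes of layer $\ell$. By the previous proposition, $\psi_{t-1}$ maps layer $\ell$ onto itself, and by the equivariance just shown, $X^{(t-1)}$ is only re-indexed. The mean is therefore unchanged, so the GRU state $u^{(t)}_{\ell,r}$ and the emitted kernels $W^{(t)}$ of equation~\ref{eq:param_vector} are identical. Induction on $t$ extends this to all earlier steps. Combining it with the snapshot equivariance at step $t$, the output at $t$ transforms only through $\psi_t$, which is the ``consequently'' clause.

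\textbf{Main obstacle.} I expect the hard step to be making the invariance argument honest about what actually propagates across time. If the node positions $z^{(t-1)}$ are carried over, for example as initialization or through the temporal smoothness term, then a relabeling at $t-1$ is not invisible at $t$. In that case the claim holds only under a compatibility condition such as $\psi_{t-1}=\psi_t$ on the carried indices, or only for the forward encoder and not for the training loss. The approach I would try first is to restrict the statement to the forward map, in which only the pooled $p^{(t-1)}$ crosses time, and to state that restriction explicitly. I would also have to check that the layer pooling really is per layer, or global over $V$. Either version is invariant, but only the per-layer reading needs the previous proposition.
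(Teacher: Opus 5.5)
Your proposal is correct and follows essentially the paper's route. Snapshot equivariance comes from reindexing through the structural features, the nodewise Q/K/V maps, the neighbourhood softmax and the Klein barycenter; previous-step invariance comes from invariance of the mean-pooled GRU input (global or per hidden layer, both readings the paper also admits). Your restriction to the forward map, since carried-over positions or $\mathcal{L}_{\mathrm{smooth}}$ would see a relabeling at $t-1$, is a caveat the paper leaves implicit.

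One small correction: the previous proposition does not give $\theta_t(\psi_t u,\psi_t v)=\theta_t(u,v)$. The definition of a snapshot automorphism family imposes no weight preservation, and if it did, the relabeled snapshot would simply equal $G_t$. That identity should instead be read as $\theta'_t(\psi_t u,\psi_t v)=\theta_t(u,v)$ for the relabeled graph, which is exactly how your primed computations use it.
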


\begin{proof}
\textbf{1.}
At snapshot $t$, the parameter graph is $G_t=(V,E_t,\theta_t)$. Each node $u\in V$ carries raw node features
\[
x_t(u)\;=\;\big[\;\ell(u),\;\tau(u),\;\tilde{s}_{\mathrm{abs}}(u),\;\tilde{s}_{\mathrm{sgn}}(u)\;\big],
\]
where $\ell(u)$ is the layer index, $\tau(u)\in\{\text{in},\text{hid},\text{out},\text{bias}\}$ the node type, and
$\tilde{s}_{\mathrm{abs}},\tilde{s}_{\mathrm{sgn}}$ are standardized incident-weight aggregations (as defined in Sec.~3.2) computed
\emph{at node $u$} from the coalesced signed weights at time $t$. Each edge $(u,v)\in E_t$ carries raw edge features
\[
e_t(u,v)\;=\;\big[\;\texttt{is\_bias}(u,v),\;\Delta\ell(u,v)\;\big],
\]
encoding whether the edge originates at a bias node and the layer difference $\Delta\ell(u,v)=\ell(v)-\ell(u)$.
By construction, these descriptors are \emph{structural}: they do not depend on arbitrary node indices and are
invariant under within-layer permutations that fix inputs/outputs/biases.

\smallskip
\noindent\textit{Group action on features.}
Let $\psi_t\in\mathrm{Aut}(G_t)$ be any per-snapshot permutation (automorphism) at time $t$. Its action on the
node and edge features is the natural reindexing:
\[
\big[\psi_t(x_t)\big]_{\psi_t(u)} \;=\; x_t(u),\qquad
\big[\psi_t(e_t)\big]_{\psi_t(u),\,\psi_t(v)} \;=\; e_t(u,v).
\]
Equivalently, if $P_{\psi_t}\in\{0,1\}^{|V|\times|V|}$ is the permutation matrix induced by $\psi_t$ on nodes and
$\mathcal{P}_{\psi_t}:=P_{\psi_t}\otimes P_{\psi_t}$ the induced action on ordered node pairs, then in matrix/tensor form
\[
\psi_t(x_t)\;=\;P_{\psi_t}\,x_t,\qquad
\psi_t(e_t)\;=\;\mathcal{P}_{\psi_t}\,\cdot\, e_t,
\]
(where the second equality is understood as reindexing the first two modes of the edge-feature tensor). The values of
$\ell,\tau,\tilde{s}_{\mathrm{abs}},\tilde{s}_{\mathrm{sgn}}$ and $[\texttt{is\_bias},\Delta\ell]$ are preserved because $\psi_t$ fixes node
labels for $I\cup O\cup B$ and only permutes hidden nodes within layers; hence $\ell$ and $\tau$ are unchanged, and the
incident-weight aggregations at $u$ move to $\psi_t(u)$ without alteration of their numeric value.

\smallskip
\noindent\textit{Neighborhoods and reindexing.}
Write $\mathcal{N}_t(u):=\{\,v\in V:(u,v)\in E_t\,\}$ for the (directed or undirected) out-neighborhood at time $t$. Automorphism
implies $\mathcal{N}_t(\psi_t(u))=\psi_t(\mathcal{N}_t(u))$, so any neighbor-wise aggregation satisfies
\[
\sum_{v\in \mathcal{N}_t(\psi_t(u))} f\big(\psi_t(v)\big)
\;=\;
\sum_{v\in \psi_t(\mathcal{N}_t(u))} f\big(v\big)
\;=\;
\sum_{v\in \mathcal{N}_t(u)} f\big(\psi_t(v)\big),
\]
and likewise for means and other symmetric reductions.

\smallskip
\noindent\textit{Equivariance of the hyperbolic graph attention block.}
Let $\mathrm{HGAT}_t$ denote the per-snapshot hyperbolic attention block at time $t$, with shared parameters $W_t$ supplied
by the temporal kernel. Its forward map can be abstracted as
\[
(X_t,z_t)\;=\;\mathrm{HGAT}_t\big(x_t,e_t;\,W_t\big),
\]
where $X_t\in\mathbb{R}^{|V|\times d_h}$ are tangent embeddings and $z_t\in(\mathbb{B}_c^{d_z})^{|V|}$ are hyperbolic embeddings.
Internally, the block computes nodewise (tangent) projections, hyperbolic queries/keys/values via shared Möbius-linear maps,
attention logits that depend \emph{only} on index-symmetric quantities (e.g., $d_c(z_t(u),z_t(v))$ and $e_t(u,v)$), neighbor-wise
softmax normalizations, and Möbius-weighted sums over $\mathcal{N}_t(u)$. All these operations commute with the reindexing
induced by $\psi_t$. Concretely, letting
\[
\ell_{uv}\;=\;g\big(d_c(\phi_{q}(z_t(u)),\phi_{k}(z_t(v))),\,e_t(u,v),\,X_t(u),\,X_t(v)\big),\qquad
\alpha_{uv}\;=\;\mathrm{softmax}_{v\in\mathcal{N}_t(u)} \ell_{uv},
\]
one has $\ell_{\psi_t(u)\psi_t(v)}=\ell_{uv}$ and hence $\alpha_{\psi_t(u)\psi_t(v)}=\alpha_{uv}$; messages and updates therefore reindex
accordingly. It follows that there exists the induced permutation $P_{\psi_t}$ such that
\[
\mathrm{HGAT}_t\big(\,\psi_t(x_t),\,\psi_t(e_t)\,;\,W_t\big)
\;=\;
P_{\psi_t}\,\mathrm{HGAT}_t\big(\,x_t,\,e_t\,;\,W_t\big),
\]
i.e., the per-snapshot block is permutation equivariant in our setting on \emph{both} the tangent and hyperbolic branches.

\textbf{2.}
Fix $t\in T$ and let $\psi\in\mathrm{Aut}(G_{t-1})$ act on node indices with permutation matrix $P_\psi$.
Let $X_{t-1}\in\mathbb{R}^{|V|\times d_x}$ and $Z_{t-1}\in(\mathbb{B}_c^{d_z})^{|V|}$ denote the
\emph{tangent} and \emph{hyperbolic} node embeddings produced at time $t-1$. The action of $\psi$ on embeddings is
\[
(P_\psi X_{t-1})(u) \;=\; X_{t-1}\!\big(\psi^{-1}(u)\big),
\qquad
(P_\psi z_{t-1})(u) \;=\; z_{t-1}\!\big(\psi^{-1}(u)\big).
\]

\emph{Euclidean (tangent) pooling is permutation-invariant.}
Define the pooled tangent vector(s)
\[
p^{(t-1)} \;=\; \frac{1}{|V|}\sum_{u\in V} X_{t-1}(u)
\quad\text{or layerwise}\quad
p^{(t-1)}_\ell \;=\; \frac{1}{|H_\ell|}\sum_{u\in H_\ell} X_{t-1}(u).
\]
Because $\psi$ fixes $I\cup O\cup B$ and only permutes hidden nodes within layers, each $H_\ell$ is preserved.
Summation is order-agnostic; hence
\[
\frac{1}{|V|}\sum_{u\in V} (P_\psi X_{t-1})(u) \;=\; p^{(t-1)},
\qquad
\frac{1}{|H_\ell|}\sum_{u\in H_\ell} (P_\psi X_{t-1})(u) \;=\; p^{(t-1)}_\ell
\quad \forall\,\ell.
\]

\textit{Einstein gyromidpoint aggregation (Klein computation) and permutation invariance.}
Since linear averages are not geometrically compatible on the Poincaré ball, we aggregate values using the
Einstein (gyro) midpoint aggregation, which we restate here. The computation is performed in the Klein model, where weighted barycenters
admit a closed form. Let $\kappa:\mathbb{B}^d_c\!\to\!\mathbb{K}^d$ be the Poincaré-to-Klein map,
\begin{equation}
\kappa\!\big(z^{(t)}\big) \;=\; \frac{2\sqrt{c}\,z^{(t)}}{1+c\|z^{(t)}\|^2}, \qquad z^{(t)}\in \mathbb{B}^d_c,
\label{eq:p2k_time}
\end{equation}
and $\kappa^{-1}:\mathbb{K}^d\!\to\!\mathbb{B}^d_c$ its inverse,
\begin{equation}
\kappa^{-1}\!\big(y^{(t)}\big) \;=\; \frac{1}{\sqrt{c}}\,\frac{y^{(t)}}{1+\sqrt{1-\|y^{(t)}\|^2}}, \qquad y^{(t)}\in \mathbb{K}^d.
\label{eq:k2p_time}
\end{equation}
We also denote by $\gamma(y^{(t)})=1/\sqrt{1-\|y^{(t)}\|^2}$ the Lorentz factor. For node $v$ and head $r$ at time $t$,
the Klein-space barycenter of the incoming neighborhood is
\begin{equation}
\kappa\!\left(m^{(r,t)}_v\right)\;=\;
\frac{\sum\limits_{u:\,(u\!\to\! v)\in E_t}\widetilde{\alpha}^{(r,t)}_{u\to v}\,
\gamma\!\big(\kappa(v^{(r,t)}_u)\big)\,\kappa\!\big(v^{(r,t)}_u\big)}
{\sum\limits_{u:\,(u\!\to\! v)\in E_t}\widetilde{\alpha}^{(r,t)}_{u\to v}\,
\gamma\!\big(\kappa(v^{(r,t)}_u)\big)}.
\label{eq:gyro_mid_klein_time}
\end{equation}
We then map back to the Poincaré ball and linearize at the origin for the residual path:
\begin{equation}
m^{(r,t)}_v \;=\; \kappa^{-1}\!\left(\kappa\!\left(m^{(r,t)}_v\right)\right)\in\mathbb{B}^d_c,
\qquad
\widehat{m}^{(r,t)}_v \;=\; \log_{0}\!\left(m^{(r,t)}_v\right)\in T_0\mathbb{B}^d_c,
\label{eq:gyro_mid_final_time}
\end{equation}
and average across heads,
\begin{equation}
M^{(t)}_v \;=\;\frac{1}{H}\sum_{r=1}^{H}\widehat{m}^{(r,t)}_v.
\label{eq:head_average_time}
\end{equation}

\noindent\textit{Per-snapshot permutation equivariance of the Einstein aggregator.}
Let $\psi_t\in\mathrm{Aut}(G_t)$ be any per-snapshot permutation. Automorphism implies
$\{u:(u\!\to\! v)\in E_t\}$ is sent to $\{u':(u'\!\to\!\psi_t(v))\in E_t\}$ with $u'=\psi_t(u)$, and
snapshot equivariance of attention yields $\widetilde{\alpha}^{(r,t)}_{\psi_t(u)\to \psi_t(v)}
=\widetilde{\alpha}^{(r,t)}_{u\to v}$. Applying \eqref{eq:gyro_mid_klein_time} at node $\psi_t(v)$ and
reindexing the sums by $u'=\psi_t(u)$ gives
\[
\kappa\!\big(m^{(r,t)}_{\psi_t(v)}\big)
=
\frac{\sum_{u} \widetilde{\alpha}^{(r,t)}_{u\to v}\,
\gamma\!\big(\kappa(v^{(r,t)}_{\psi_t(u)})\big)\,\kappa\!\big(v^{(r,t)}_{\psi_t(u)}\big)}
{\sum_{u} \widetilde{\alpha}^{(r,t)}_{u\to v}\,
\gamma\!\big(\kappa(v^{(r,t)}_{\psi_t(u)})\big)}.
\]
Thus the Einstein weighted midpoint is \emph{symmetric} in the multiset of weighted Klein vectors and therefore
commutes with the reindexing induced by $\psi_t$; mapping back by $\kappa^{-1}$ and $\log_0$ preserves this
equivariance. Consequently, $M^{(t)}_{\psi_t(v)}$ is exactly the $P_{\psi_t}$-reindexing of $M^{(t)}_v$.

At time $t\!-\!1$, let the hyperbolic node embeddings be $Z_{t-1}=\{z^{(t-1)}_u\}_{u\in V}\subset\mathbb{B}^d_c$
and the tangent embeddings be $X_{t-1}=\{X_{t-1}(u)\}_{u\in V}\subset\mathbb{R}^{d_x}$.
We summarize these for the temporal kernel using (i) Euclidean means in the tangent space and
(ii) Einstein gyromidpoints in Klein space, consistent with Eqs.~\eqref{eq:p2k_time}–\eqref{eq:head_average_time}.

\smallskip
\noindent Define the global (or layerwise) tangent summaries
\[
p^{(t-1)} \;=\; \frac{1}{|V|}\sum_{u\in V} X_{t-1}(u),
\qquad
p^{(t-1)}_{\ell} \;=\; \frac{1}{|H_\ell|}\sum_{u\in H_\ell} X_{t-1}(u).
\]
These are permutation-invariant because they are simple averages over sets preserved by automorphisms.

\smallskip
\noindent
Let $\kappa$ and $\kappa^{-1}$ be the Poincaré–Klein maps in \eqref{eq:p2k_time}–\eqref{eq:k2p_time}, and
$\gamma(y)=1/\sqrt{\,1-\|y\|^2\,}$. With uniform weights, the \emph{global} Klein barycenter is
\[
\kappa\!\big(\bar z_{t-1}\big)
\;=\;
\frac{\sum\limits_{u\in V}\gamma\!\big(\kappa(z^{(t-1)}_u)\big)\,\kappa\!\big(z^{(t-1)}_u\big)}%
{\sum\limits_{u\in V}\gamma\!\big(\kappa(z^{(t-1)}_u)\big)},
\qquad
\bar z_{t-1} \;=\; \kappa^{-1}\!\big(\kappa(\bar z_{t-1})\big).
\]
For layerwise pooling, restrict the sums to $u\in H_\ell$ to obtain $\bar z^{(\ell)}_{t-1}$.
For compatibility with the temporal GRU/MLP operating in the tangent space, we finally use
\[
\widehat{\bar z}_{t-1} \;=\; \log_{0}\!\big(\bar z_{t-1}\big), 
\qquad
\widehat{\bar z}^{(\ell)}_{t-1} \;=\; \log_{0}\!\big(\bar z^{(\ell)}_{t-1}\big).
\]

\smallskip
Let $\psi\in\mathrm{Aut}(G_{t-1})$ with induced reindexing $(P_\psi z^{(t-1)})_u=z^{(t-1)}_{\psi^{-1}(u)}$.
In the global Einstein midpoint, the numerator and denominator are sums over the multiset
$\big\{\gamma(\kappa(z^{(t-1)}_u))\,\kappa(z^{(t-1)}_u)\big\}_{u\in V}$,
which are merely \emph{reordered} by $\psi$. Hence
\[
\kappa\!\big(\bar z_{t-1}(P_\psi z^{(t-1)})\big)
\;=\;
\kappa\!\big(\bar z_{t-1}(z^{(t-1)})\big),
\quad\Longrightarrow\quad
\bar z_{t-1}(P_\psi z^{(t-1)}) \;=\; \bar z_{t-1}(z^{(t-1)}),
\]
and applying $\log_{0}$ preserves equality:
$\widehat{\bar z}_{t-1}(P_\psi z^{(t-1)})=\widehat{\bar z}_{t-1}(z^{(t-1)})$.
The same argument holds layerwise because $\psi$ maps each $H_\ell$ to itself.
Therefore, the hyperbolic pooled summaries $\widehat{\bar z}_{t-1}$ and $\{\widehat{\bar z}^{(\ell)}_{t-1}\}_\ell$
are invariant to any admissible permutation at time $t-1$.

\smallskip
Let $X_{t-1}$ be the \emph{tangent} embeddings at $t-1$ ($\log_{0}(z^{t-1})$), and let
$p^{(t-1)}$ (or $\{p^{(t-1)}_\ell\}$) denote their mean pool(s). Since $p^{(t-1)}$ is an ordinary Euclidean
average and layer sets are preserved by automorphisms, $p^{(t-1)}$ is permutation invariant. The Evolve-GCN-H style
kernel then consumes only node-agnostic summaries,
\[
(W_t,u_t)\;=\;\phi_{\text{MLP}}\!\big( \text{GRU}\!\big(p^{(t-1)},\ u_{t-1}\big)\big)
\ \ \ \text{(or with layerwise pools)},
\]
where $u_{t-1}$ is learned hidden temporal vector, which proves invariance of $(W_t,u_t)$ to any per-snapshot permutation at time $t-1$. This completes the previous-step invariance argument, i.e, the temporal kernel renders the module invariant to permutations in the previous time step \citep{bronstein2021geometricdeeplearninggrids}.
\end{proof}
\section{Experiment Details}

\subsection{Problem Formulation}\label{subsec:problem_formulation}
We study dynamic parameter graphs extracted from the training of a base multilayer perceptron (MLP). 
A dynamic parameter graph is a sequence 
$\mathcal{G}=\{G_t=(V,E_t,W_t)\}_{t=1}^T$ of $T$ snapshots, 
where $V$ is the fixed node set (inputs, hidden units, and biases), 
$E_t$ is the set of undirected edges at time $t$, 
and $W_t=\{w^{\star,(t)}_{uv}:(u,v)\in E_t\}$ the associated signed weights.  
The goal is to learn node embeddings 
$\{z^{(t)}_i\in\mathbb{B}^d_c : i\in V,\;t=1,\dots,T\}$ 
in the $d$–dimensional Poincaré ball such that:  

(i) a Fermi–Dirac decoder $\psi^{(t)}(u,v)$ predicts adjacency labels,  
(ii) a signed regressor $\widehat{w}^{(t)}_{uv}$ estimates $w^{\star,(t)}_{uv}$, and  
(iii) trajectories $\{z^{(t)}_i\}_t$ evolve smoothly across time.  

Supervision is provided on positive edges, with negative samples generated adaptively, 
placing the task in the semi–supervised, temporal graph embedding setting.

\subsection{Additional Details: Classification of Images via INR Traces}
\paragraph{Base INR architecture and training.}  
For each image $x \in \{\text{MNIST, Fashion-MNIST}\}$ we train a shallow implicit neural representation (INR) to fit pixel intensities as a continuous function of coordinates. The INR has three fully connected layers of width 32 with $\tanh$ activations, mapping $u\in[0,1]^2$ (pixel coordinates) to grayscale value $x(u)$. Weights are initialized with Xavier initialization and optimized using Adam (learning rate $5\times 10^{-4}$) for up to 100 epochs, with early stopping if the PSNR of the reconstruction exceeds 40. This produces a sequence of checkpoints $\{\theta_{i,t}\}_{t=1}^{T_i}$ for image $i$, where $T_i\in[80,100]$ depending on early stopping.

\paragraph{Parameter-graph construction.}  
At each checkpoint $t$, the INR parameters $\theta_{i,t}$ are transformed into a signed parameter graph $G_{i,t}$:
\begin{itemize}
    \item nodes correspond to input, hidden, and bias units;  
    \item edges connect units across successive layers and from biases to their layer, labeled by the signed weight;  
    \item node features include layer index, node type, and $z$--scored sums of incident weights (absolute and signed);  
    \item edge features indicate whether the edge originates from a bias node and the layer jump $\Delta\ell$.  
\end{itemize}
This yields a temporal trace $\{G_{i,t}\}_{t=1}^{T_i}$ per image.

\paragraph{Trace dataset and splits.}  
Each image contributes one temporal trace. Dataset splits are made strictly at the \emph{image} level to prevent leakage across time. We use 45{,}000 images for training, 5{,}000 for validation, and 10{,}000 for testing. All checkpoints from an image remain in the same split. Each trace contains between 80 and 100 checkpoints, ensuring sufficient temporal depth.

\paragraph{Meta-network pretraining.}  
We pretrain our Geometric Temporal Hyperbolic Graph Meta-Network (GTH-GMN) on the training traces. The backbone consists of $L=4$ \emph{Hyperbolic Graph Attention (HGAT) layers}, each paired with one \emph{EvolveGCN--H temporal kernel}, so that the number of HGAT layers and temporal kernels is always equal. Embeddings live in $\mathbb{B}^d_c$ with dimension $d=125$ and curvature parameter $c=-1$. The optimization alternates between:  
\begin{enumerate}
    \item Euclidean updates of the attention kernels, regressors, and decoder;  
    \item Riemannian updates of node embeddings with trust-region control.  
\end{enumerate}
We use RAMSGrad with learning rate $10^{-3}$ and trust radius $\tau=0.1$.

\paragraph{Losses and curriculum.}  
The training objective combines:  
\begin{itemize}
    \item Fermi--Dirac link prediction loss (binary cross-entropy);  
    \item signed weight regression (L2 loss on log--magnitude + sign classification);  
    \item quadratic prior on the learned distance exponent;  
    \item temporal smoothness penalty on successive embeddings;  
    \item pairwise logistic ranking loss on semi--hard negative pairs.  
\end{itemize}
A gentle curriculum is applied: the ranking margin increases over epochs, and mining hardness is annealed to prevent early saturation.

\paragraph{Representation extraction.}  
During pretraining, edge embeddings within each checkpoint are pooled into a snapshot vector by mean aggregation. For downstream classification, snapshot vectors are pooled temporally (mean across checkpoints in the trace) to yield a fixed $n$--dimensional representation per image.

\paragraph{Classifier and evaluation.}  
An MLP classifier ($n \to 128 \to 10$, with ReLU) is trained on these representations using cross-entropy loss. The classifier is trained on the training set, tuned on the validation set, and evaluated on the test set. Accuracy on the test set is reported in Table~\ref{tab:inr_temporal}.

\subsection{Additional Details: Predicting DNN Generalization from Weights}
We evaluate on \textit{checkpoint traces} from implicit neural representation (INR) trials trained for CIFAR-10 classification. For each trial $i$ we collect the ordered checkpoints $\{\theta_{i,t}\}_{t=1}^{T_i}$ together with the corresponding per-epoch test accuracy $y_{i,t}$. Using the architecture metadata stored with each trial, we reconstruct a shape-compatible network skeleton and convert every checkpoint to a parameter-graph $G_{i,t}$, where nodes and edges encode their respective features. No image content is used; supervision is solely the scalar accuracy aligned to checkpoints.  
\begin{figure}[h]
    \centering
    \includegraphics[width=0.55\linewidth]{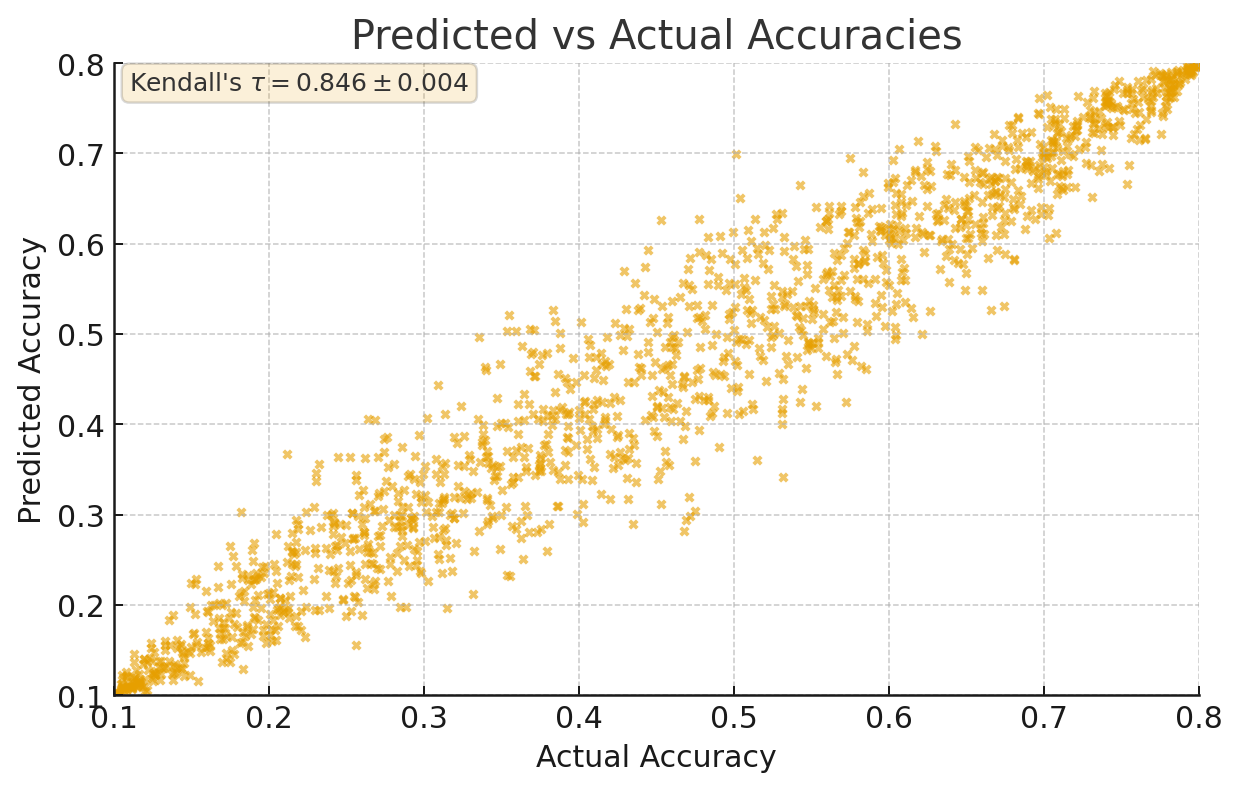}
    \caption{Predicted versus actual generalization performance on CIFAR-10 checkpoint traces. 
    Each point corresponds to a model trial at a given epoch, where predictions are obtained from the learned GTH-GMN embeddings. 
    The strong monotonic trend (Kendall’s $\tau=0.846\pm0.004$) indicates that hyperbolic embeddings capture meaningful structure in the parameter dynamics, even if absolute correlation lags \textit{slightly} behind NFN baselines.}
    \label{fig:pred_vs_actual_tau}
\end{figure}
Each trace is processed by the encoder introduced in the previous section over fixed-length, stride-rolled windows or full traces (depending on computational constraints). We adopt the \emph{last-step} supervision regime: intermediate steps are used only to evolve the state, and supervision is applied to the final step target. Dataset splits are made strictly by trial (not by checkpoint) to prevent leakage across traces. For all experiments, we embed parameter-graphs into a hyperbolic ball of dimension $d\in[80,125]$, with four HGAT layers and four EvolveGCN--H temporal kernels of hidden size 128. Hyperbolic embeddings are optimized with the two-phase scheme described in Section~\ref{subsec:linkdecoder_objective}, combining Euclidean updates for kernel parameters with Riemannian gradient steps for node positions. Training employs binary cross-entropy on Fermi--Dirac link logits, signed weight regression, a prior on the learned exponent, temporal smoothness penalties, and a ranking curriculum.  

We report Kendall’s $\tau$ for ranking fidelity, following \citet{zhou2023permutationequivariantneuralfunctionals,navon2024equivariantdeepweightspace,lim2023graphmetanetworksprocessingdiverse}. Results are shown in Table~\ref{tab:cifar10_tau} and illustrated qualitatively in Figure~\ref{fig:pred_vs_actual_tau}. Our method GTH-GMN achieves $\tau=0.846\pm0.004$, lower than NFN baselines that learn end–to–end from raw weight tensors. This gap is expected, since NFNs exploit stronger task-aligned inductive biases while our approach compresses parameters into permutation-invariant graphs and trains with geometric proxy losses under last-step supervision, inevitably discarding certain accuracy-correlated cues. Nonetheless, GTH-GMN offers a complementary benefit by jointly modeling link structure, signed weights, and temporal evolution in hyperbolic space, producing embeddings that are \textit{robust, interpretable, and faithful to training dynamics}.

\subsection{Additional Details: Predicting Sine Wave Frequency}
In addition to supervised objectives, we employ a SimCLR-style \citep{chen2020simpleframeworkcontrastivelearning} contrastive regularizer adapted to temporal checkpoint traces. This setup treats the optimization trajectory of a neural network analogously to an image in a classification task, operating in ``parameter-time space'' rather than pixel space.

For each trial, two correlated \textit{views} of the same checkpoint subsequence are generated by combining:
\begin{itemize}
    \item[(i)] \textbf{Gaussian perturbations} of node and edge features, and
    \item[(ii)] \textbf{Random temporal masking}, where a subset of checkpoints in the subsequence is dropped with probability $p=0.5$.
\end{itemize}

The encoder processes both views to produce pooled representations, which are learned using a contrastive loss \citep{chen2020simpleframeworkcontrastivelearning}. Positive pairs correspond to augmented views of the same checkpoint subsequence (representing the same underlying function dynamics), while negative pairs are drawn from other trials. This auxiliary objective ensures that the learned geometric representation captures the intrinsic signature of the optimization trajectory.


\subsection{Additional Details: Weights vs Distance Relationships}
\begin{figure}[h!]
    \centering
\scalebox{0.8}{
\begin{minipage}{0.49\textwidth}
    \centering
    \includegraphics[width=\linewidth]{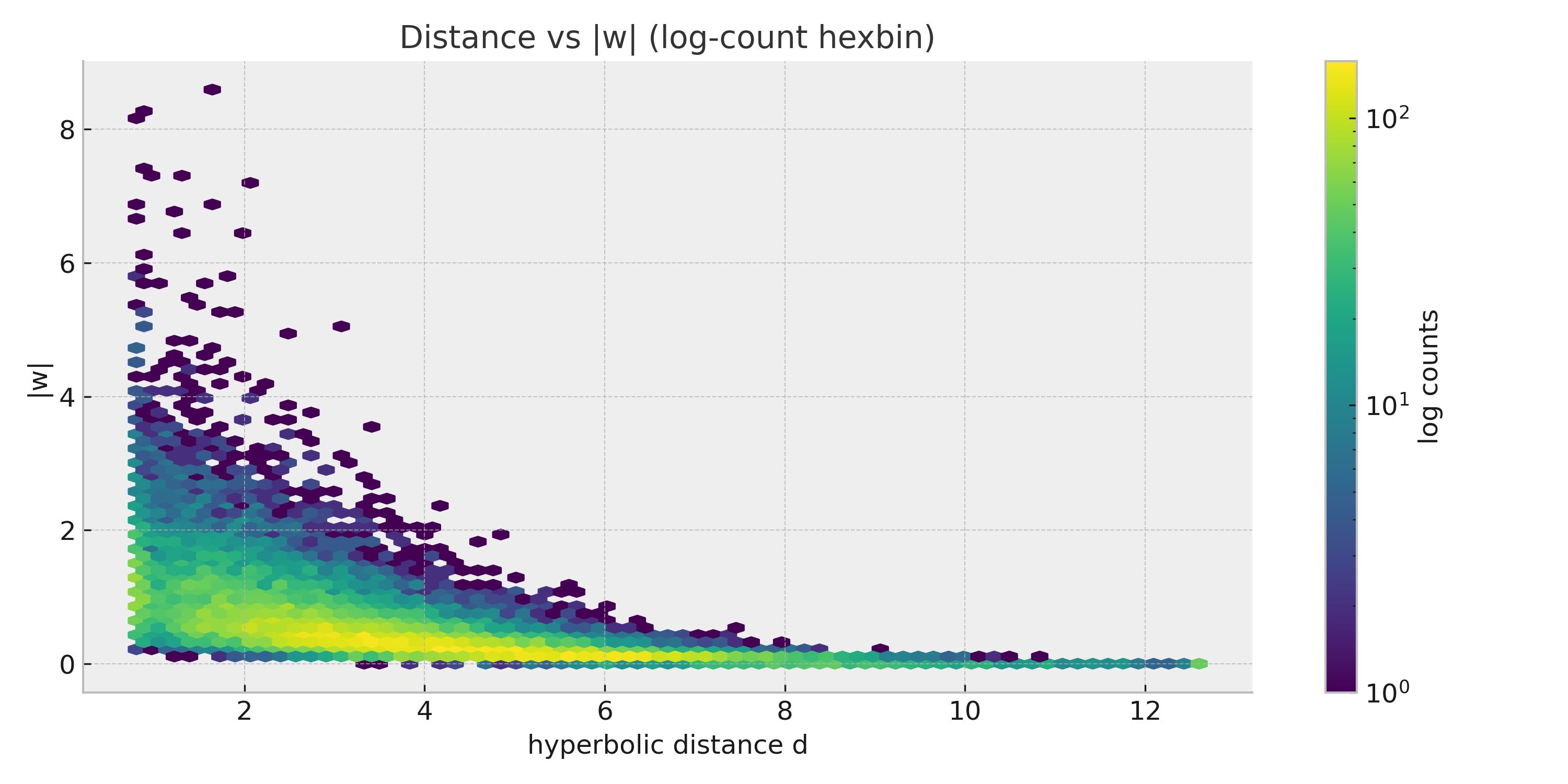}
\end{minipage}\hfill
\begin{minipage}{0.49\textwidth}
    \centering
    \includegraphics[width=\linewidth]{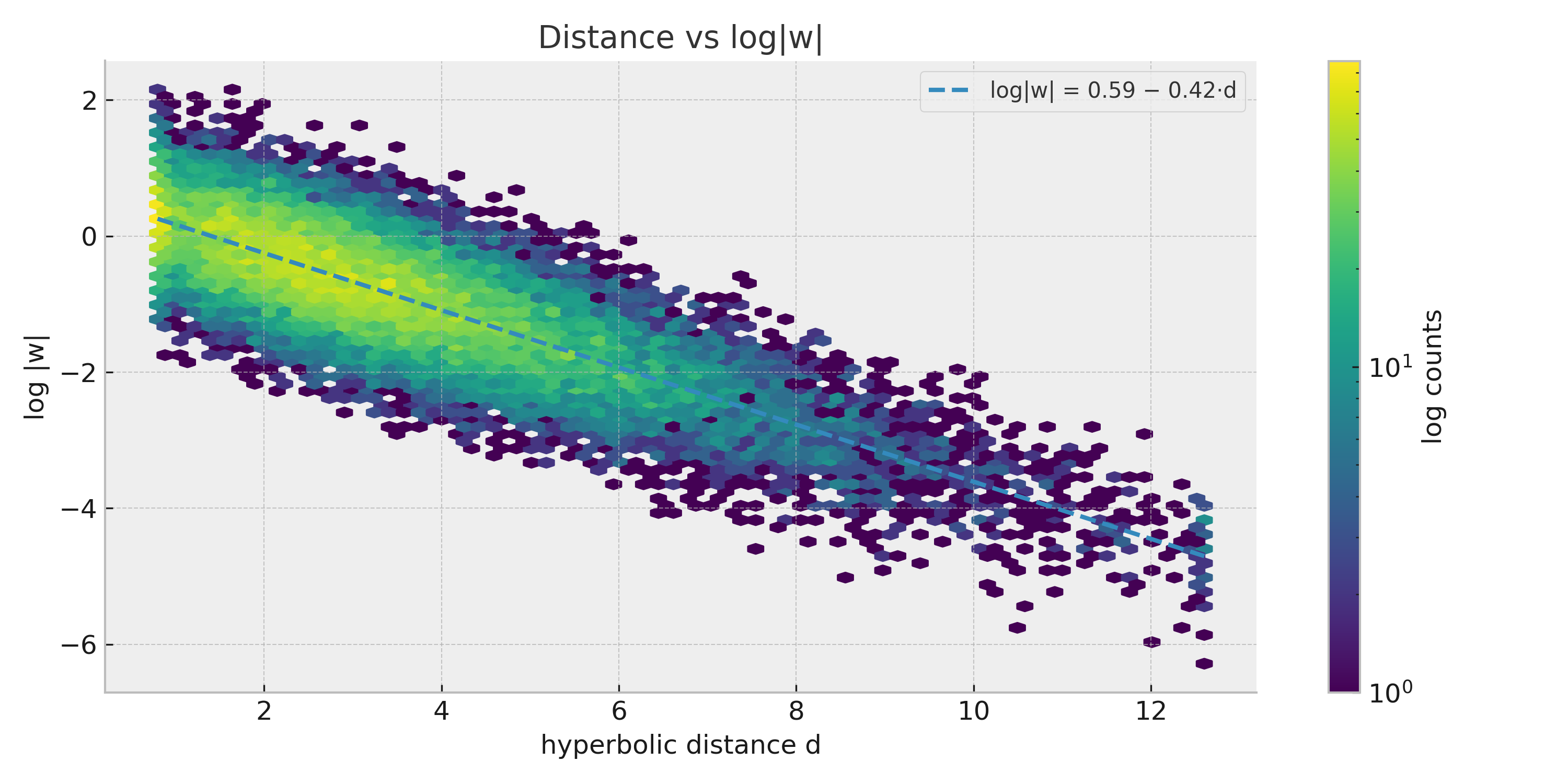}
\end{minipage}}
    \label{fig:hex1}
\end{figure}
\begin{figure}[h!]
    \centering
\scalebox{0.8}{
\begin{minipage}{0.49\textwidth}
    \centering
    \includegraphics[width=\linewidth]{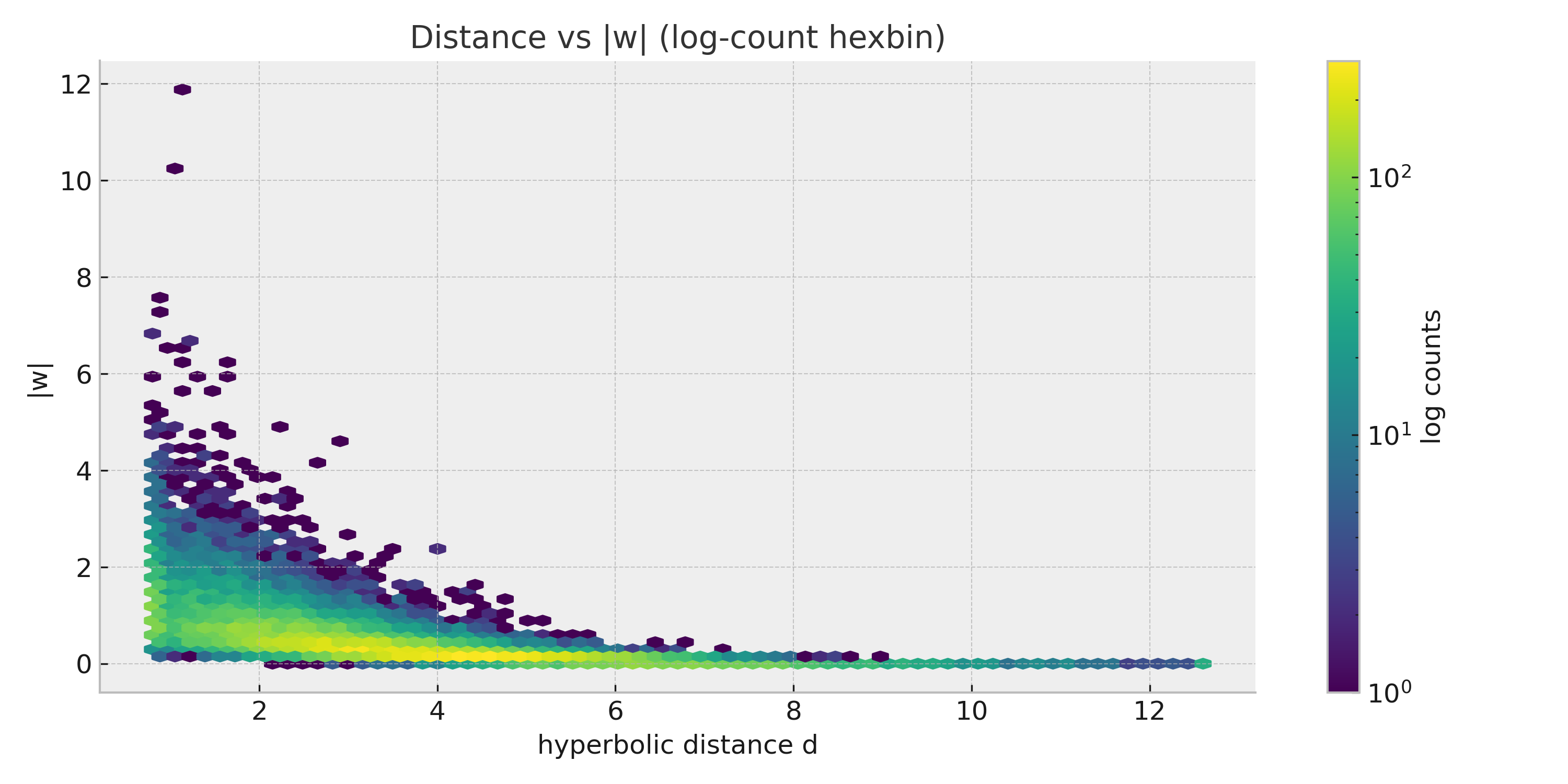}
\end{minipage}\hfill
\begin{minipage}{0.49\textwidth}
    \centering
    \includegraphics[width=\linewidth]{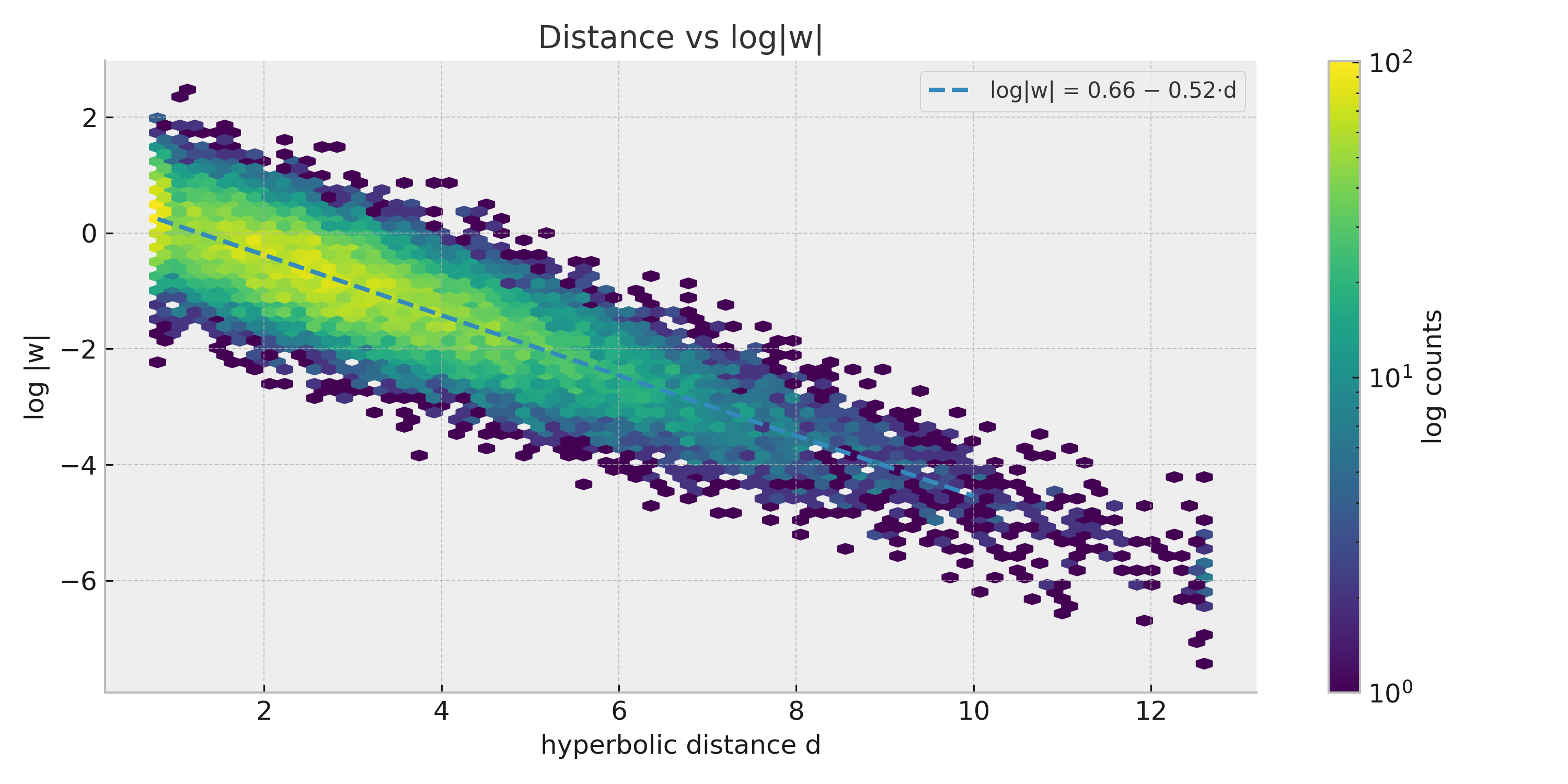}
\end{minipage}}
   \caption{Relationship between hyperbolic distance $d$ and edge weights $w$ on CIFAR-10 INR traces, shown as log-count hexbins. 
(Top Left, Bottom Left) Distance versus absolute weight $|w|$ (at $t \in T$), highlighting the concentration of large weights at small distances and the rapid decay at larger distances. 
(Top Right, Bottom Right) Distance versus log-magnitude $\log|w|$ (at $t' \in T$), which reveals an approximately linear inverse trend, consistent with a power--law dependence of weight magnitudes on hyperbolic distance. 
Together, these plots illustrate that the hyperbolic geometry learned by GTH-GMN aligns distances with functional weight strengths, capturing both edge existence and graded magnitude in a geometrically coherent manner.}

    \label{fig:hex2}
\end{figure}
To probe the strength of the geometric relationship captured by GTH-GMN, we visualize the alignment between learned hyperbolic distances and underlying edge weights on the CIFAR-10 INR traces. 
Figure~\ref{fig:hex2} reports hexbinned joint distributions of hyperbolic distance $d$ against both $|w|$ and $\log|w|$. 
\textit{The results reveal a clear inverse correlation: large weights tend to concentrate at smaller hyperbolic distances, while edges at larger distances exhibit systematically lower magnitudes}. On the logarithmic scale, the relationship is approximately linear, consistent with the power–law parameterization employed in our signed edge–weight regressor.  These findings support the claim that temporal hyperbolic embeddings faithfully encode functional structure: the geometry reflects not only the presence of connections but also their graded strengths across training, thereby grounding the link between parameter values and hyperbolic distance.

\section{Hyperparameter Investigation}
\begin{table}[h!]
\centering
\caption{Architecture and data settings for our Experiments.}
\label{tab:inr_cls_arch}
\begin{tabular}{llc}
\toprule
\textbf{Component} & \textbf{Hyperparameter} & \textbf{Range / Value} \\
\midrule
Manifold & Curvature $|c|$ & fixed to $1.0$ \\
 & Embedding dimension $d$ & $80 \;-\; 125$ \\
\midrule
HGAT (per layer) & \# layers $L$ & $4$ \\
 & \# heads $H$ & $4 \;-\; 8$ (tuned) \\
 & Head temperature & learned, init $0.5 \;-\; 1.0$ \\
 & Residual gate & learned scalar $\in(0,1)$ \\
 & Geodesic trust radius $\tau_\ell$ & $0.05 \;-\; 0.15$ \\
\midrule
EvolveGCN--H & \# temporal kernels & $4$ (equal to HGAT layers) \\
 & GRU hidden size & $64 \;-\; 128$ \\
 & Input to GRU & mean pool in $T_0$ (prev.\ snapshot) \\
\midrule
Graph construction & Node features & $(\ell, t, s^{\text{abs}}, s^{\text{sgn}})$ \\
 & Edge features & $(\text{is\_bias}, \Delta \ell)$ \\
\midrule
Temporal window & Sequence length $K$ & $64 \;-\; T$ (full trace, $T\!\in[80,100]$) \\
 & Stride & $4 \;-\; 8$ \\
\midrule
Contrastive views & Feature noise (Gaussian) & $\sigma = 0.1 \;-\; 0.3$ \\
(SimCLR-style) & Temporal masking & drop prob.\ $p=0.3 \;-\; 0.5$ \\
 & Temperature (NT-Xent) & $0.1 \;-\; 0.5$ \\
\bottomrule
\end{tabular}
\end{table}
\begin{table}[h!]
\centering
\caption{Optimization and loss settings for INR-trace image classification. Ranges correspond to grid or random search over hyperparameters.}
\label{tab:inr_cls_opt}
\begin{tabular}{llc}
\toprule
\textbf{Component} & \textbf{Hyperparameter} & \textbf{Range / Value} \\
\midrule
Euclidean params & Optimizer / LR & Adam, $\text{lr}=1\!\times\!10^{-3} \;-\; 3\!\times\!10^{-3}$ \\
(kernels, heads) & Weight decay & $10^{-6} \;-\; 10^{-4}$ \\
 & LR schedule & StepLR ($\text{step}=10 \;-\; 20$, $\gamma=0.6 \;-\; 0.8$) \\
 & Epochs & $30 \;-\; 50$ \\
\midrule
Manifold params & Optimizer & RAMSGrad (Riemannian) \\
(node positions $z$) & LR / trust radius & $\text{lr}=1\!\times\!10^{-3} \;-\; 3\!\times\!10^{-3}$,\; $\tau=0.05 \;-\; 0.15$ \\
 & $(\beta_1,\beta_2)$ & $(0.9,\,0.99 \;-\; 0.999)$ \\
 & Conformal precond. & divide by $\lambda_x^2$ \\
\midrule
Link decoder & Fermi--Dirac radius $R$ & init $1.5 \;-\; 2.5$ (learned) \\
 & Temperature $T$ & init $0.3 \;-\; 0.7$ (learned) \\
\midrule
Negatives & Neg.\ multiplier & $3 \;-\; 10 \times$ positives \\
 & Same-layer ratio & $0.3 \;-\; 0.7$ \\
\midrule
Losses (Euclid pass) & FD BCE (links) & weight ramp to $1.0$ \\
 & Signed weight recon & $\lambda_{\text{wrec}}=0.2 \;-\; 0.6$ \\
 & Sign BCE & $\lambda_{\text{sign}}=0.2 \;-\; 0.4$ \\
 & Exponent prior & $\lambda_{\alpha}=10^{-3} \;-\; 10^{-2}$ \\
 & Temporal smoothness & $\beta_{\text{smooth}}=0.2 \;-\; 0.6$ \\
 & Ranking (pairs) & $\lambda_{\text{rank}}=0.5 \;-\; 1.0$, margin $\gamma=0.2 \;-\; 0.5$ \\
\midrule
Losses ($z$-only pass) & Same terms & as Euclid pass; $X^{(t)}$ detached \\
\midrule
Practical caps & Pos.\ edges per step & $10{,}000 \;-\; 20{,}000$ \\
 & FD / distance chunks & $20{,}000 \;-\; 40{,}000$ edges / chunk \\
\bottomrule
\end{tabular}
\end{table}
\paragraph{Hyperparameter Investigation.}
Tables~\ref{tab:inr_cls_arch}–\ref{tab:inr_cls_opt} summarize the settings used in our studies. Architecture choices fix the manifold curvature at $|c|{=}1.0$ and sweep embedding dimension $d\in[80,125]$; HGAT uses $L{=}4$ layers with $H\in[4,8]$, learned head temperatures, a learned residual gate, and per–layer geodesic trust radii $\tau_\ell\in[0.05,0.15]$. EvolveGCN–H mirrors the HGAT depth with GRU size $64$–$128$, while the parameter graph encodes $(\ell,t,s^{\text{abs}},s^{\text{sgn}})$ at nodes and $(\text{is\_bias},\Delta\ell)$ at edges. Temporal windows use $K\in[64,T]$ with stride $4$–$8$, and SimCLR-style augmentations vary feature noise, temporal masking, and NT-Xent temperature. Optimization (Table~\ref{tab:inr_cls_opt}) employs Adam for Euclidean parameters with $\text{lr}\in[1{\times}10^{-3},3{\times}10^{-3}]$, weight decay $10^{-6}$–$10^{-4}$, StepLR, and $30$–$50$ epochs; node positions use Riemannian AMSGrad (RAMSGrad) with matched learning rates, trust radius $\tau\in[0.05,0.15]$, $(\beta_1,\beta_2)=(0.9,0.99$–$0.999)$, and conformal preconditioning. The FD decoder learns radius $R$ and temperature $T$ from initialized ranges, negatives mix same-layer and uniform samples, and losses combine FD BCE, signed weight reconstruction, sign BCE, exponent prior, temporal smoothness, and pairwise ranking with reported weights; the same losses are reused in a $z$–only pass with $X^{(t)}$ detached. For throughput, we cap positive edges and chunk FD/distance computations as listed.

\section{Algorithms}\label{subsec:algos_app}
\paragraph{Overview of Algorithms.}
Algorithm~\ref{alg:mlp_to_graph} converts a single MLP checkpoint into a permutation–invariant parameter graph by instantiating input, neuron, and bias nodes; coalescing directed parameter incidences into undirected edges with context and signed labels; and standardizing nodewise strength features to form $G_t=(X_t,E,W_t)$. Algorithm~\ref{alg:riem_opt} then updates node positions intrinsically on the Poincar\'e ball: Euclidean gradients are rescaled to Riemannian ones, Adam–style moments are accumulated, a geodesic trust region clips the tangent step, the exponential map applies the update with a safety projection, and first moments are parallel transported. The two procedures are independent and can be composed sequentially within each training step.
\paragraph{Optimization Curriculum}\label{subsec:linkdecoder_objective_app}
We map hyperbolic geometry to link probabilities with a Fermi–Dirac (FD) decoder acting on Poincar\'e geodesics \citep{nickel2017poincareembeddingslearninghierarchical, nickel2018learningcontinuoushierarchieslorentz, wen2024hyperbolicgraphdiffusionmodel}:
\begin{equation}
\psi^{(t)}(u,v)=\left(1+\exp\!\left(\tfrac{d_c(z^{(t)}_u,z^{(t)}_v)-R}{T}\right)\right)^{-1},
\label{eq:fd_decoder}
\end{equation}
with learned radius $R$ and temperature $T$. Training uses positives $E_t$ and dynamic negatives from a mixture of same–layer and uniform samplers (mixture capped at 50\%). The FD loss is binary cross–entropy,
\begin{equation}
\mathcal{L}_{\mathrm{FD}}=
-\,\mathbb{E}_{(u,v)\in E^{+}}\!\big[\log \psi^{(t)}(u,v)\big]
-\,\mathbb{E}_{(u,v)\in E^{-}}\!\big[\log (1-\psi^{(t)}(u,v))\big].
\label{eq:fd_loss}
\end{equation}

To capture edge magnitudes and signs, we add the signed regressor of Section~\ref{subsec:signedhead}. For $(u,v)\in E^{+}$ with weight $w^\star_{uv}$ and prediction $\widehat{w}^{(t)}_{uv}$ from \eqref{eq:sign_logit} \citep{goyal2018dyngemdeepembeddingmethod, pareja2019evolvegcnevolvinggraphconvolutional},
\begin{subequations}\label{eq:w_losses}
\begin{align}
\mathcal{L}_{\mathrm{wrec}}
&= \mathbb{E}_{(u,v)\in E^{+}}\!\left[\big\lVert \widehat{w}^{(t)}_{uv}-w^\star_{uv}\big\rVert_p\right], \label{eq:wrec_loss}\\
\mathcal{L}_{\mathrm{sign}}
&= -\,\mathbb{E}_{(u,v)\in E^{+}}\!\left[
y_{uv}\,\log \sigma\!\big(s^{(t)}_{uv}\big)
+ (1-y_{uv})\,\log\!\big(1-\sigma(s^{(t)}_{uv})\big)
\right]. \label{eq:sign_loss}
\end{align}
\end{subequations}

with $p\in\{1,2\}$ and $y_{uv}=\mathbb{1}[w^\star_{uv}>0]$. We regularize the exponent via
\begin{equation}
\mathcal{L}_{\alpha}=\mathbb{E}_{(u,v)\in E^{+}}\!\big[(\alpha_{uv}-\alpha_0)^2\big],
\label{eq:alpha_prior}
\end{equation}
encourage smooth trajectories \citep{goyal2018dyngemdeepembeddingmethod, HTGN},
\begin{equation}
\mathcal{L}_{\mathrm{smooth}}=\tfrac{1}{|V|}\sum_{i\in V} d_c\!\big(z^{(t)}_i,z^{(t-1)}_i\big)^2,
\label{eq:smooth_loss}
\end{equation}
and enforce distance–magnitude monotonicity with a logistic pairwise rank loss \citep{nickel2017poincareembeddingslearninghierarchical},
\begin{equation}
\mathcal{L}_{\mathrm{rank}}=\mathbb{E}_{u,(v^+,v^-)}
\log\!\big(1+\exp(d_c(z^{(t)}_u,z^{(t)}_{v^+})-d_c(z^{(t)}_u,z^{(t)}_{v^-})+\gamma)\big).
\label{eq:rank_loss}
\end{equation}
The total objective is
\begin{equation}
\mathcal{L}^{(t)}=
\lambda_{\mathrm{FD}}\mathcal{L}_{\mathrm{FD}}+
\lambda_{\mathrm{wrec}}\mathcal{L}_{\mathrm{wrec}}+
\lambda_{\mathrm{sign}}\mathcal{L}_{\mathrm{sign}}+
\lambda_{\alpha}\mathcal{L}_{\alpha}+
\lambda_{\mathrm{smooth}}\mathcal{L}_{\mathrm{smooth}}+
\lambda_{\mathrm{rank}}\mathcal{L}_{\mathrm{rank}},
\label{eq:total_loss}
\end{equation}
with a gentle curriculum: anneal $\gamma$ and harden mining over epochs.

Training runs in two phases: a Euclidean pass updates kernels, regressors, and decoder; a Riemannian pass updates node positions $z^{(t)}$ on the Poincar\'e ball. With conformal factor $\lambda_z^2=\tfrac{2}{1-c\|z\|^2}$, the Riemannian gradient rescales the Euclidean one \citep{bridson1999metric, Bonnabel_2013, nickel2017poincareembeddingslearninghierarchical, nickel2018learningcontinuoushierarchieslorentz}:
\begin{equation}
\nabla^{\mathrm{Riem}}_z \mathcal{L}=\frac{1}{\lambda_z^2}\,\nabla^{\mathrm{Euc}}_z \mathcal{L}.
\label{eq:riem_grad}
\end{equation}
We maintain first and second moments and take an Adam–style tangent step \citep{bécigneul2019riemannianadaptiveoptimizationmethods, sakai2020riemannianadaptiveoptimizationalgorithm}:
\begin{equation}
\Delta_i^{(t)}=-\,\eta\,\frac{m^{(t)}_i}{\sqrt{v^{(t)}_i}+\varepsilon}.
\label{eq:adam_step}
\end{equation}
A geodesic trust region clips the move, not its Euclidean norm. With
\begin{equation}
z^{\mathrm{try},(t)}_i=\exp_{z^{(t)}_i}(\Delta_i^{(t)}),\qquad
d^{(t)}_i=d_c\!\big(z^{(t)}_i, z^{\mathrm{try},(t)}_i\big),
\label{eq:geo_step}
\end{equation}
we scale $\Delta_i^{(t)}$ by $\min\{1,\tau/d^{(t)}_i\}$ and accept the intrinsic update
\begin{equation}
z^{(t+1)}_i=\exp_{z^{(t)}_i}(\Delta_i^{(t)}),
\label{eq:exp_update}
\end{equation}
followed by projection into the ball. First–moment vectors are then parallel transported:
\begin{equation}
m^{(t+1)}_i=\mathrm{PT}_{z^{(t)}_i\to z^{(t+1)}_i}\big(m^{(t)}_i\big).
\label{eq:pt_moment}
\end{equation}
The manifold step optimizes a $z$–only objective mirroring \eqref{eq:fd_loss}–\eqref{eq:rank_loss}, with $X^{(t)}$ detached. This decouples the hyperbolic graph attention head's stability from embedding motion and enforces controlled geodesic updates near the boundary.

\begin{algorithm}[h!]
\caption{Intrinsic Riemannian Optimization of Node Positions on the Poincar\'e Ball}
\label{alg:riem_opt}
\begin{algorithmic}[1]
\Require Snapshots $\{G_t\}_{t=1}^T$; initial positions $\{z^{(t)}_i\}$; moments $m^{(t)}_i=0, v^{(t)}_i=0$; learning rate $\eta$; trust radius $\tau$; tolerance $\varepsilon$; curvature $c$
\Ensure Updated positions $\{z^{(t)}_i\}_{t=1}^T$

\For{$t = 1$ to $T$} \Comment{after Euclidean parameter update}
  \State Detach $X^{(t)}$; compute Euclidean gradient $\nabla^{\mathrm{Euc}}_{z^{(t)}_i}\mathcal{L}$ for losses \eqref{eq:fd_loss}–\eqref{eq:rank_loss}
  \State Convert to Riemannian gradient with conformal factor $\lambda_{z}^2=\tfrac{2}{1-c\|z\|^2}$:
  \[
  \nabla^{\mathrm{Riem}}_{z^{(t)}_i}\mathcal{L} = \lambda_{z^{(t)}_i}^{-2}\,\nabla^{\mathrm{Euc}}_{z^{(t)}_i}\mathcal{L}
  \]
  \State Update moments (Adam/AMSGrad) \citep{bécigneul2019riemannianadaptiveoptimizationmethods, sakai2020riemannianadaptiveoptimizationalgorithm}:
  \[
  m^{(t)}_i \gets \beta_1 m^{(t)}_i + (1-\beta_1)\nabla^{\mathrm{Riem}}_{z^{(t)}_i}\mathcal{L}
  \]
  \[
  v^{(t)}_i \gets \beta_2 v^{(t)}_i + (1-\beta_2)\left(\nabla^{\mathrm{Riem}}_{z^{(t)}_i}\mathcal{L}\right)^{\odot 2}
  \]
  \State Compute tangent step:
  \[
  \Delta_i = -\,\eta\,\frac{m^{(t)}_i}{\sqrt{v^{(t)}_i}+\varepsilon}
  \]
  \State Trial update and geodesic distance:
  \[
  z^{\mathrm{try}}_i = \exp_{z^{(t)}_i}(\Delta_i), \qquad d_i = d_c(z^{(t)}_i, z^{\mathrm{try}}_i)
  \]
  \State Trust scaling: $s_i = \min\{1, \tau/(d_i+\varepsilon)\}$;\quad $\Delta_i \gets s_i \Delta_i$
  \State Accept update and project inside ball:
  \[
  z^{(t+1)}_i = \exp_{z^{(t)}_i}(\Delta_i), \qquad z^{(t+1)}_i \gets \Pi_{\mathbb{B}^d_c}(z^{(t+1)}_i; \varepsilon)
  \]
  \State Parallel transport first moment:
  \[
  m^{(t+1)}_i = \mathrm{PT}_{z^{(t)}_i \to z^{(t+1)}_i}(m^{(t)}_i)
  \]
\EndFor
\end{algorithmic}
\end{algorithm}
\begin{algorithm}[h!]
\caption{MLP $\rightarrow$ Parameter Graph (single snapshot at epoch $t$)}
\label{alg:mlp_to_graph}
\begin{algorithmic}[1]
\Require Layer widths $(n_0,\dots,n_L)$; weights $\{W_\ell\}_{\ell=1}^L$; biases $\{b_\ell\}_{\ell=1}^L$
\Ensure Snapshot $G_t=(X_t,E,W_t)$

\State $V \gets \{\, i^{(0)}_1,\dots,i^{(0)}_{n_0} \,\}$ \Comment{input nodes}
\For{$\ell \gets 1$ to $L$}
  \State add bias node $b^{(\ell)}$ to $V$
  \State add neuron nodes $i^{(\ell)}_1,\dots,i^{(\ell)}_{n_\ell}$ to $V$
  \State set $(\ell(u),\tau(u))$ for all newly added $u$ \Comment{layer index, type}
\EndFor

\State $\mathcal{A}_{\rightarrow} \gets \emptyset$ \Comment{directed edge bag with context + weight}
\For{$\ell \gets 1$ to $L$}
  \For{$p \gets 1$ to $n_{\ell-1}$}
    \For{$q \gets 1$ to $n_\ell$}
      \State append $\big(i^{(\ell-1)}_p \to i^{(\ell)}_q,\; (\text{is\_bias}=0,\Delta\ell=1),\; w \gets W_\ell[q,p]\big)$ to $\mathcal{A}_{\rightarrow}$
    \EndFor
  \EndFor
  \For{$q \gets 1$ to $n_\ell$}
    \State append $\big(b^{(\ell)} \to i^{(\ell)}_q,\; (\text{is\_bias}=1,\Delta\ell=0),\; w \gets b_\ell[q]\big)$ to $\mathcal{A}_{\rightarrow}$
  \EndFor
\EndFor

\State $M \gets$ empty map from unordered pairs $\{u,v\}$ to running sums
\ForAll{$(u\!\to\!v,\; e,\; w) \in \mathcal{A}_{\rightarrow}$}
  \State $k \gets \{u,v\}$
  \State update $M[k].\text{sum\_ctx} \mathrel{+}= e$; \quad $M[k].\text{sum\_w} \mathrel{+}= w$; \quad $M[k].\text{cnt} \mathrel{+}= 1$
\EndFor
\State $E \gets \emptyset$, \; $W_t \gets \emptyset$
\ForAll{$k=\{u,v\}$ in $M$}
  \State $e_{uv} \gets M[k].\text{sum\_ctx} / M[k].\text{cnt}$ \Comment{coalesced context (mean)}
  \State $w^\star_{uv} \gets M[k].\text{sum\_w} / M[k].\text{cnt}$ \Comment{coalesced label (mean)}
  \State insert $\{u,v\}$ into $E$ with context $e_{uv}$; \quad set $W_t[\{u,v\}] \gets w^\star_{uv}$
\EndFor

\State \textbf{for all} $u \in V$: \; $s^{\text{abs}}(u) \gets \sum_{v:\{u,v\}\in E} |W_t[\{u,v\}]|$;\quad $s^{\text{sgn}}(u) \gets \sum_{v:\{u,v\}\in E} W_t[\{u,v\}]$

\State $\mu_{\text{abs}},\sigma_{\text{abs}} \gets$ mean/std of $\{ s^{\text{abs}}(u): u\in V\}$
\State $\mu_{\text{sgn}},\sigma_{\text{sgn}} \gets$ mean/std of $\{ s^{\text{sgn}}(u): u\in V\}$
\ForAll{$u \in V$}
  \State $\widetilde{s}^{\text{abs}}(u) \gets \begin{cases}
  \dfrac{s^{\text{abs}}(u)-\mu_{\text{abs}}}{\sigma_{\text{abs}}}, & \sigma_{\text{abs}}>0\\
  0, & \text{otherwise}
  \end{cases}$
  \State $\widetilde{s}^{\text{sgn}}(u) \gets \begin{cases}
  \dfrac{s^{\text{sgn}}(u)-\mu_{\text{sgn}}}{\sigma_{\text{sgn}}}, & \sigma_{\text{sgn}}>0\\
  0, & \text{otherwise}
  \end{cases}$
\EndFor

\State $X_t \gets \big\{ x_u = [\,\ell(u),\tau(u),\widetilde{s}^{\text{abs}}(u),\widetilde{s}^{\text{sgn}}(u)\,] \;:\; u\in V \big\}$
\State \Return $G_t=(X_t,E,W_t)$
\end{algorithmic}
\end{algorithm}

\section{Hyperbolic Embeddings of INRs}

\begin{figure}[h!]
    \centering
\scalebox{0.8}{
\begin{minipage}{0.48\textwidth}
    \centering
    \includegraphics[width=\linewidth]{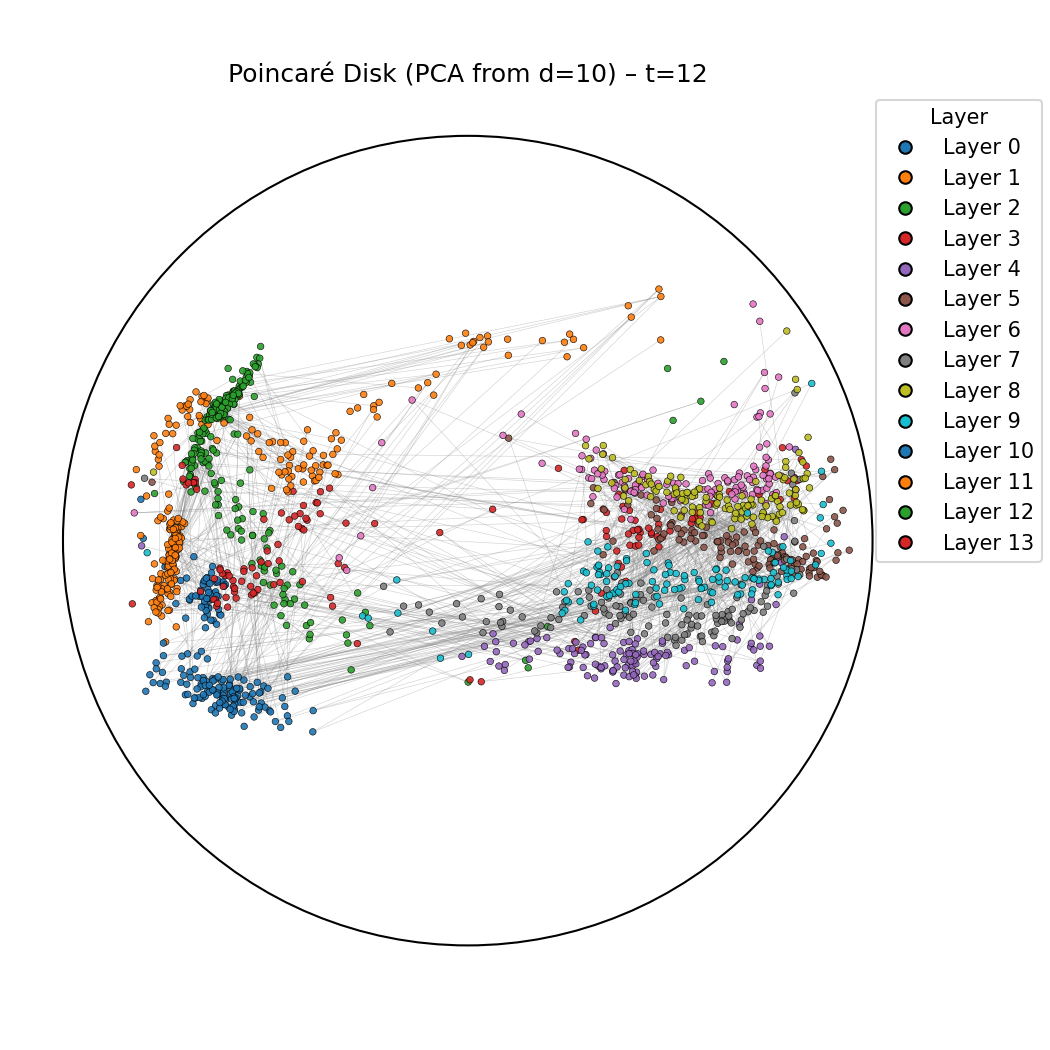}
\end{minipage}\hfill
\begin{minipage}{0.48\textwidth}
    \centering
    \includegraphics[width=\linewidth]{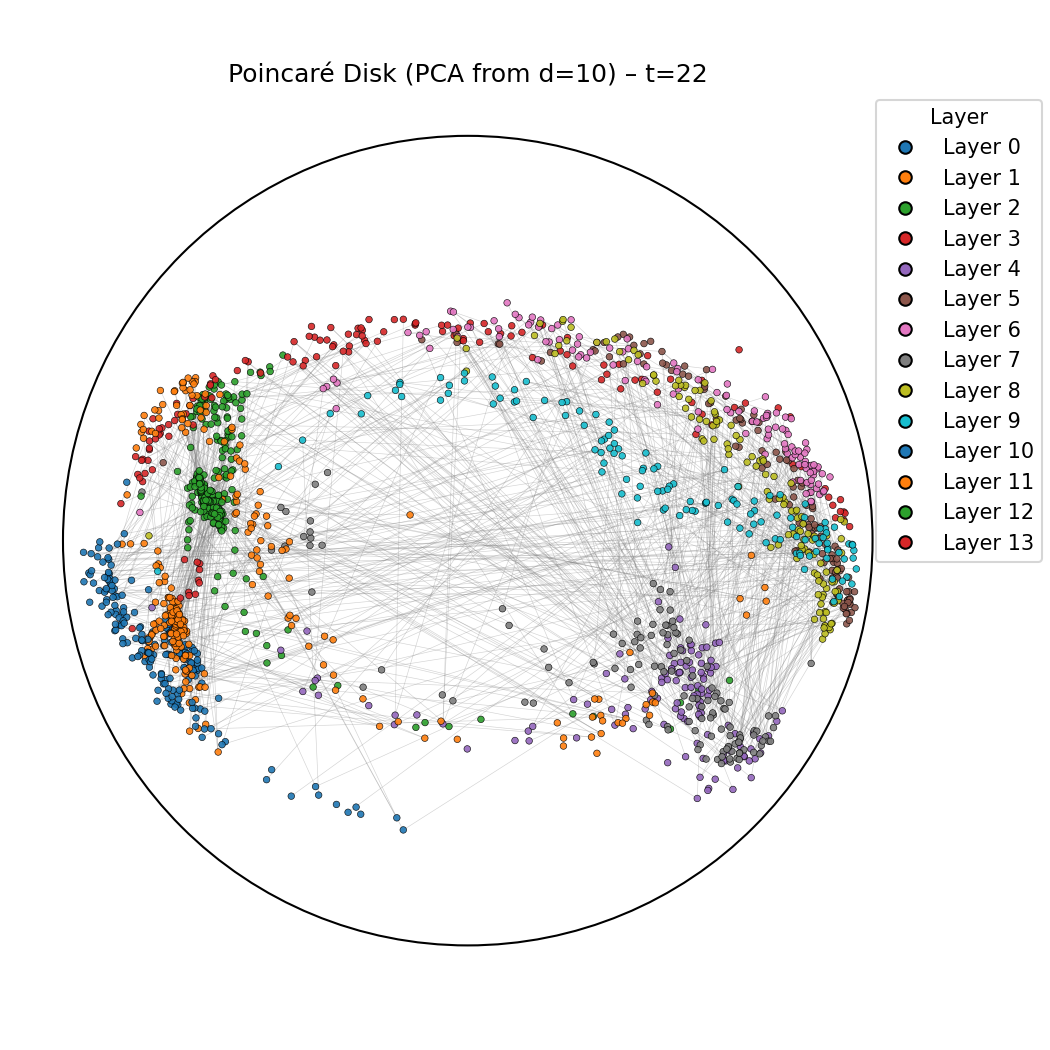}
\end{minipage}}

\scalebox{0.8}{
\begin{minipage}{0.48\textwidth}
    \centering
    \includegraphics[width=\linewidth]{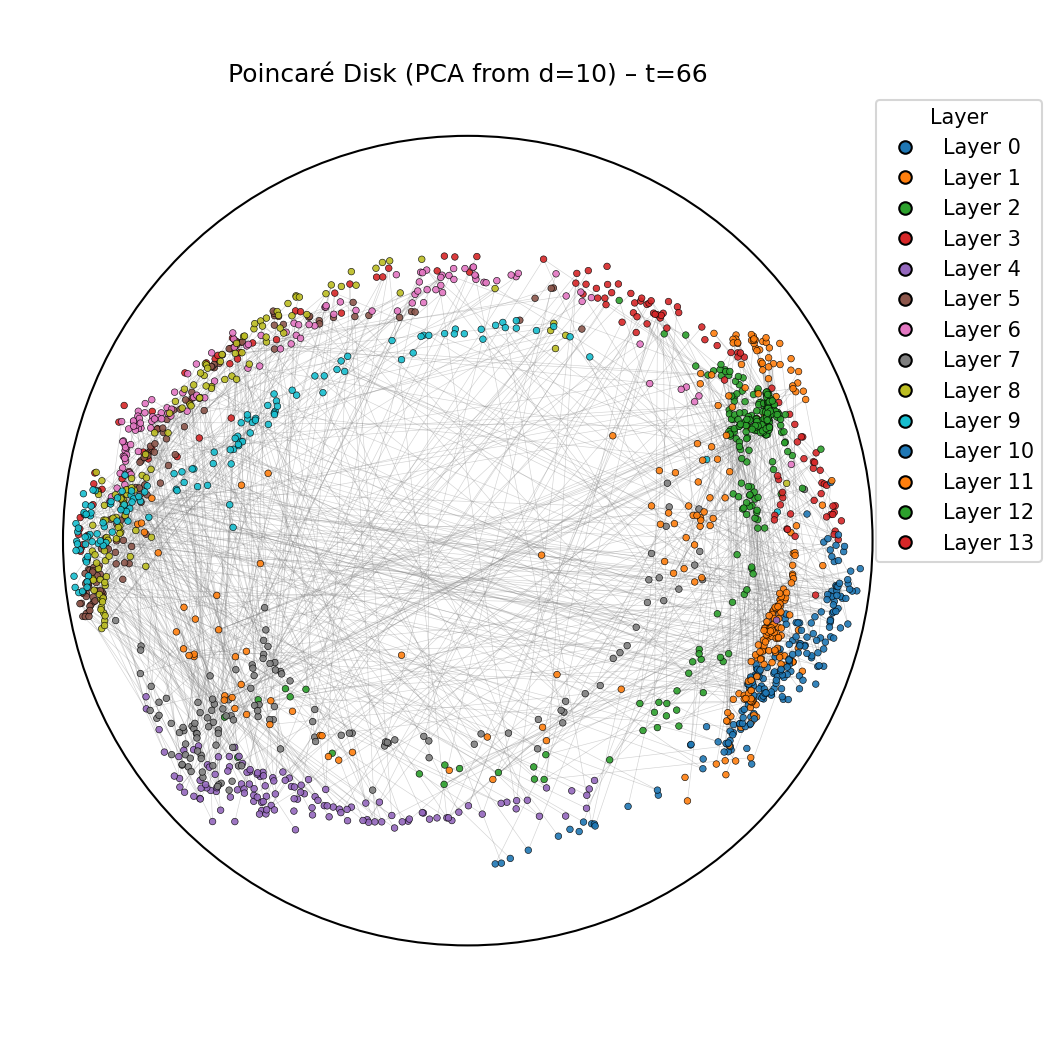}
\end{minipage}\hfill
\begin{minipage}{0.48\textwidth}
    \centering
    \includegraphics[width=\linewidth]{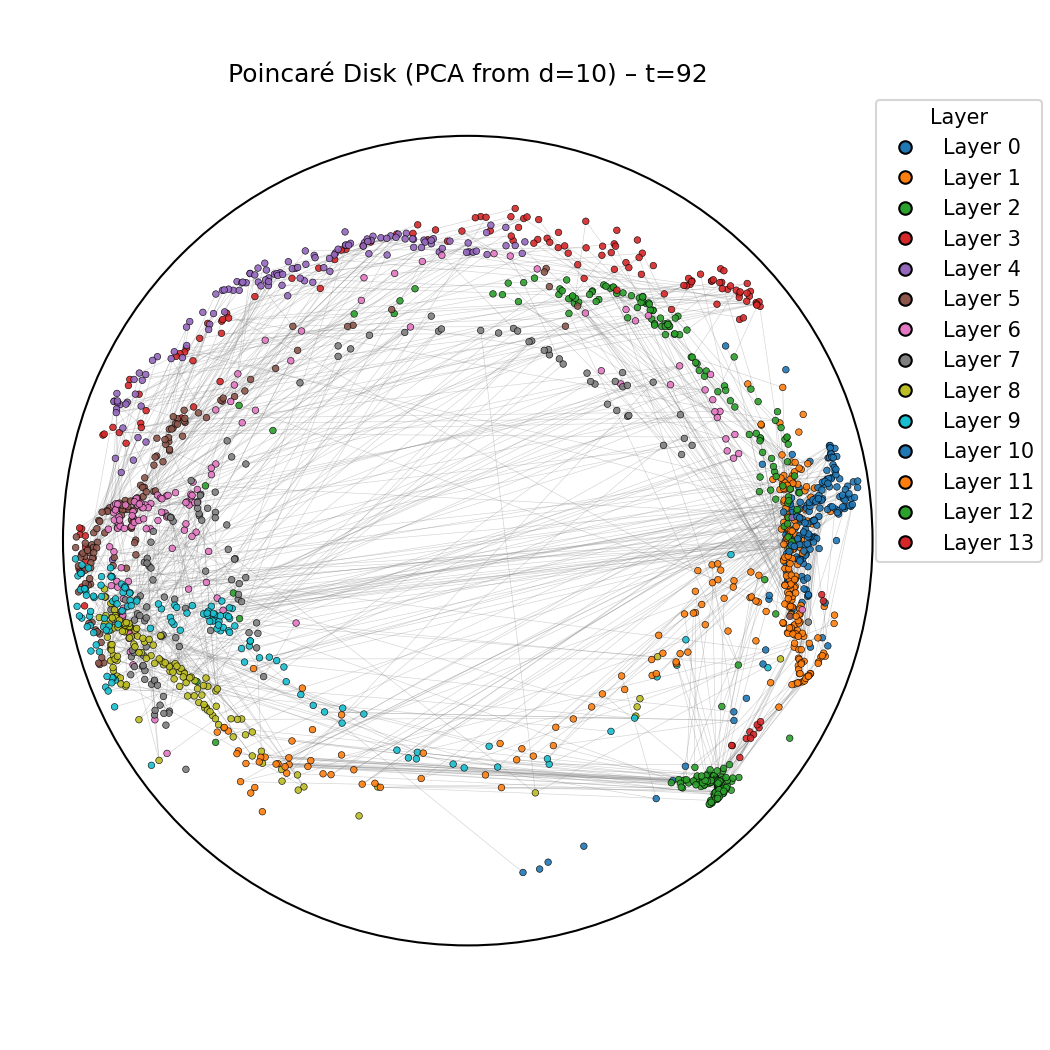}
\end{minipage}}

\scalebox{0.8}{
\begin{minipage}{0.48\textwidth}
    \centering
    \includegraphics[width=\linewidth]{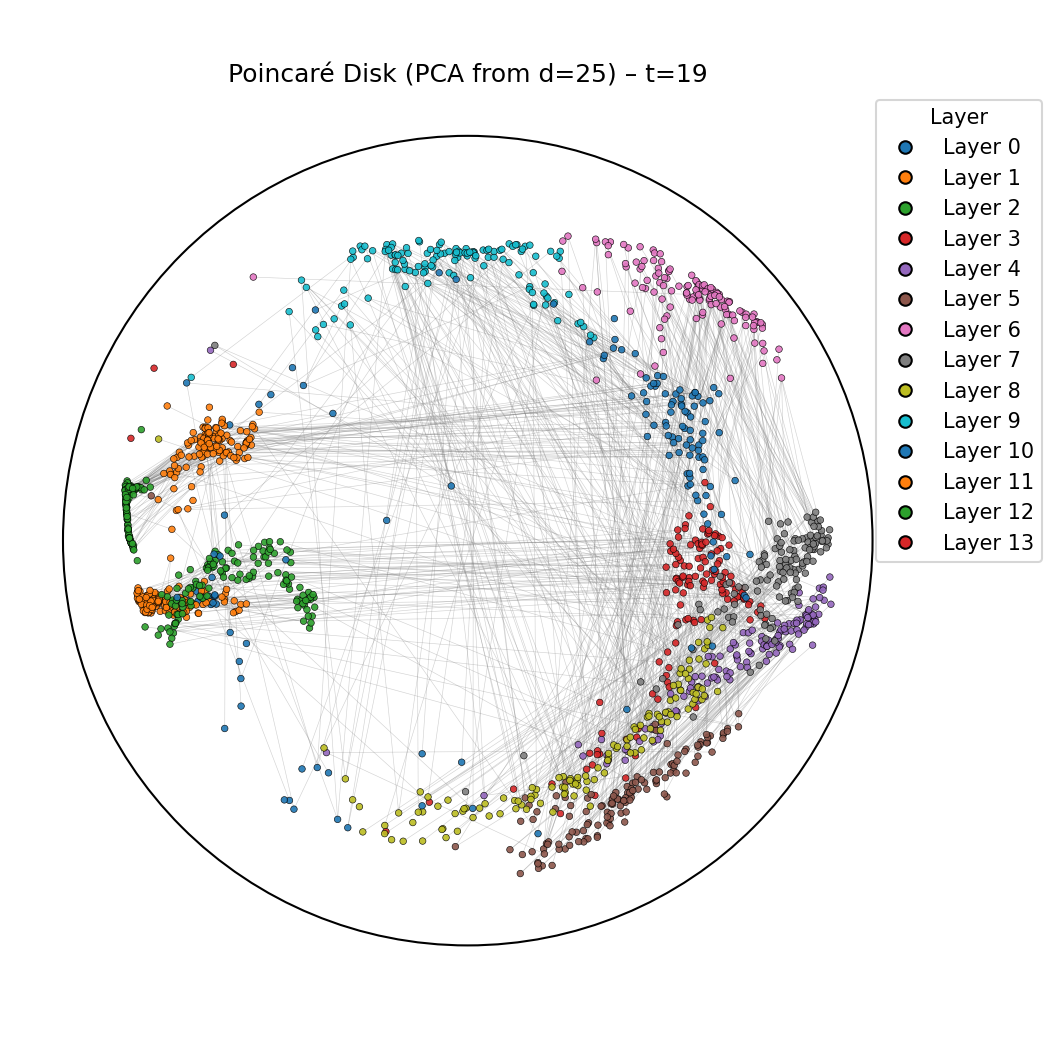}
\end{minipage}\hfill
\begin{minipage}{0.48\textwidth}
    \centering
    \includegraphics[width=\linewidth]{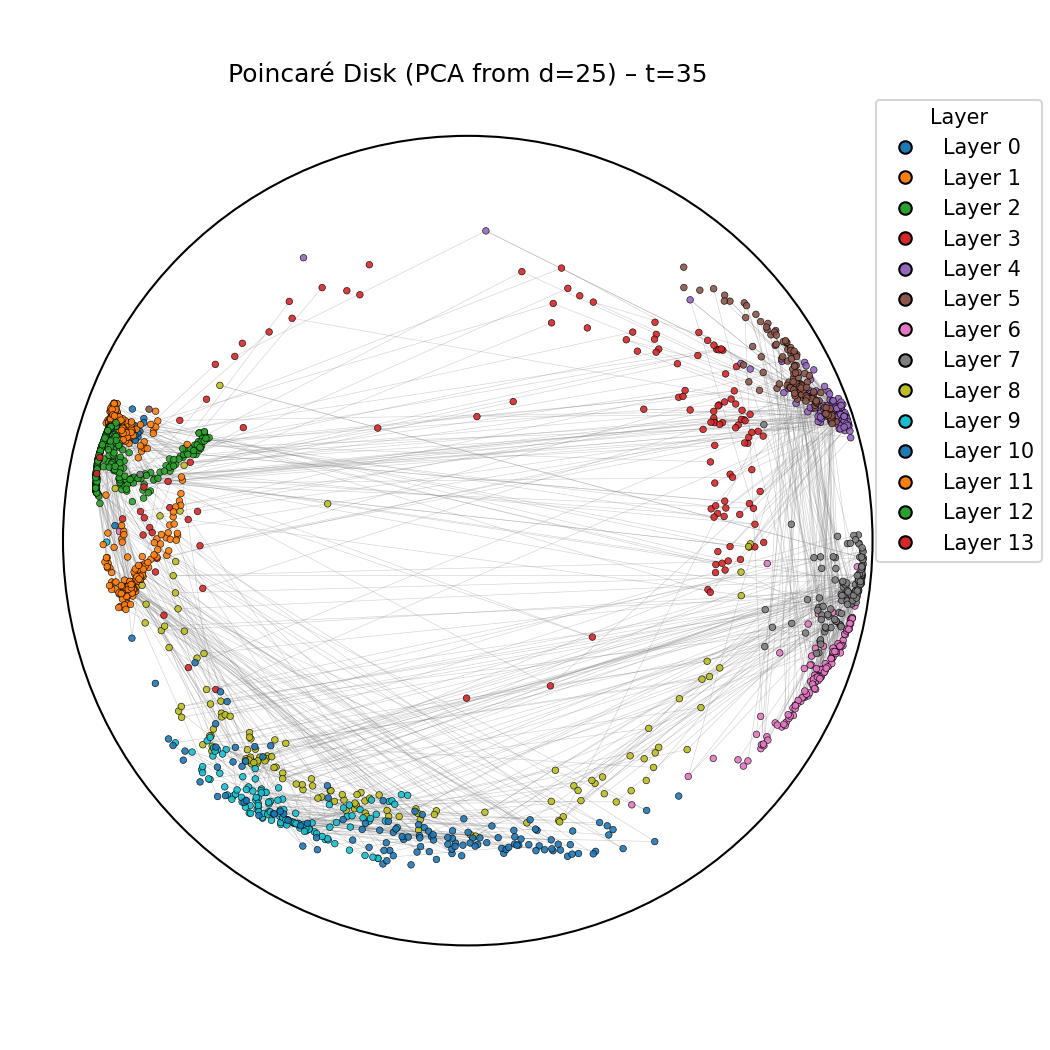}
\end{minipage}}
    \caption{Hyperbolic embeddings of parameter graphs from a 25-layer MLP trained as an INR on 
CIFAR-10, shown at checkpoints. Embeddings are learned in 
$d=10, 25$ dimensions with 4 HGAT layers and 4 EvolveGCN--H kernels (tangent dim.\ 128) and 
projected to the Poincaré disk via PCA. Nodes are colored by layer index, with edges in gray, 
illustrating the progressive reorganization and separation of layers over training.}
    \label{fig:poincare_embeds_app1}
\end{figure}
We visualize hyperbolic embeddings of parameter graphs from a 25-layer INR–MLP at several checkpoints \ref{fig:poincare_embeds_app1}. Embeddings are learned with 4 HGAT layers and 4 EvolveGCN--H kernels (tangent dim.\ 128) in $d\!=\!10$ and $d\!=\!25$, then projected to the Poincar\'e disk via PCA for display. Points are colored by layer, and edges are gray. Early snapshots show mixed layers and long inter-layer edges. As training proceeds, layers separate along angular directions, intra-layer bands tighten, and high-strength nodes drift radially inward, indicating hub formation and growing specialization. The smooth drift of clusters across checkpoints reflects temporal kernel meta-evolution rather than abrupt reconfiguration. Because these plots are PCA projections from a higher-dimensional ball, exact angles are not preserved; nevertheless, the progressive stratification and the shortening of within-layer geodesics are consistent across panels and align with the learned distance–magnitude coupling.

\section{Computational cost analysis}
\label{subsec:cost}

We analyze the time and space complexity of GTH--GMN in terms of the size of the parameter graphs
and the temporal window length. Let a parameter graph snapshot $G_t$ have $N$ nodes and $E$ edges.
Embeddings live in a $d$ dimensional Poincar\'e ball, the hyperbolic encoder uses $H$ attention
heads and $L$ HGAT layers, the temporal kernel has window length $K$ and GRU hidden size $h$, and
each training batch contains $E^{+}_{\text{batch}}$ positive edges.

\paragraph{Time Complexity of the Hyperbolic Encoder.}
For a single HGAT layer on $G_t$, hyperbolic queries, keys and values are obtained by applying the
affine maps $\Phi_{W,b}$ at each node and head. Each such map performs one logarithmic map
$\log_{0}$, one matrix–vector product $W \log_{0}(x)$, one exponential map $\exp_{0}$, and a bias
transport plus exponential at $\Phi_{W}(x)$, all on $d$ dimensional vectors. The cost per head and
node is therefore $O(d^{2} + d)$, and aggregated over all nodes and heads this yields
\begin{equation}
\mathrm{Cost}_{\text{QKV}} = O\big(H N d^{2}\big).
\end{equation}
Attention logits and softmax normalization are computed per edge and head. For each directed edge
$(u,v)$, the Poincar\'e distance $d_{c}(q_{u},k_{v})$ requires a constant number of inner products
and norms in $\mathbb{R}^{d}$, so $O(d)$ operations, followed by scalar exponentiation and
normalization. Einstein gyromidpoint aggregation in the Klein model combines value vectors from the
incoming edges of each node and maps the result back via $\kappa^{-1}$ and $\log_{0}$, which again
contributes $O(d)$ per edge and head plus $O(d)$ per node and head. Summed over all edges, nodes
and heads, the attention and aggregation stage costs
\begin{equation}
\mathrm{Cost}_{\text{att}} = O\big(H E d + H N d\big).
\end{equation}
The total cost of a single HGAT layer is therefore
\begin{equation}
\mathrm{Cost}_{\text{HGAT, layer}}
  = O\big(H N d^{2} + H E d\big),
\end{equation}
and a stack of $L$ layers has
\begin{equation}
\mathrm{Cost}_{\text{HGAT}}
  = O\big(L H N d^{2} + L H E d\big).
\end{equation}
In the regimes we study, $E$ is much larger than $N$ and $d$ is moderate, so the edge term
$O(L H E d)$ dominates and the cost is effectively linear in the number of edges. The hyperbolic
operations (exp, log, distance and parallel transport) are all $O(d)$ vector calculations and do
not change the asymptotic order relative to a Euclidean graph attention network with the same
$(L,H,d)$.

\paragraph{Time Complexity of Temporal Kernel Evolution.}
Temporal evolution follows the EvolveGCN--H scheme \citep{pareja2019evolvegcnevolvinggraphconvolutional}, where each HGAT layer maintains a GRU with
hidden size $h$ over a window of length $K$. A single GRU step processes a pooled $d$ dimensional
summary and costs $O(h^{2} + h d)$. For $K$ steps and $L$ layers, the total temporal cost is
\begin{equation}
\mathrm{Cost}_{\text{temp}} = O\big(L K (h^{2} + h d)\big).
\end{equation}
This term is linear in the window length and independent of $E$. For the configurations used in our
experiments, $H E d$ dominates $K (h^{2} + h d)$, so temporal coupling is lower order than the
HGAT edge processing.

\paragraph{Time Complexity of the Link Decoder and Signed Regressor.}
The Fermi--Dirac decoder and signed weight regressor operate on positive edges and their sampled
negatives. For each edge we compute a Poincar\'e distance $d_{c}(z_{u},z_{v})$, a Fermi--Dirac
probability, a power law magnitude, and a sign logit from a tangent inner product. Each of these
involves a constant number of $d$ dimensional vector operations, so the cost per edge is $O(d)$.
If $E^{+}_{\text{batch}}$ positive edges are present in the batch and $\kappa$ negatives are
sampled per positive, then
\begin{equation}
\mathrm{Cost}_{\text{dec}} = O\big(d (1 + \kappa) E^{+}_{\text{batch}}\big).
\end{equation}
In practice $E^{+}_{\text{batch}}$ is capped (for example at $10^{4}$ to $2\times 10^{4}$), and
distance computations are performed in chunks of at most $40\,000$ edges to keep buffers bounded.
This term is therefore linear in a controlled edge count and does not affect the overall scaling in
$E$ and $K$.

\paragraph{Two Phase Optimization Overhead.}
Each training step consists of a Euclidean pass and a Riemannian refinement pass. In the Euclidean
pass we backpropagate through the HGAT stack, temporal kernels and decoders, updating all
Euclidean parameters. In the Riemannian pass we update only the node embeddings $z_{i}$ on the
Poincar\'e ball. This involves rescaling gradients by the conformal factor $\lambda_{z}^{2}$,
applying RAMSGrad updates in the tangent space, mapping back via the exponential map, and
parallel–transporting optimizer moments, all on $d$ dimensional vectors for each node. The cost is
therefore
\begin{equation}
\mathrm{Cost}_{\text{Riem}} = O(N d).
\end{equation}
We reuse the same sampled edges and losses and do not recompute the attention stack or temporal
kernels in this phase, so the two phase scheme introduces a constant factor overhead over a single
Euclidean training pass with the same architecture while preserving the linear dependence on $E$
and $K$.

\paragraph{Space Complexity.}
Memory consumption is dominated by current node embeddings, attention buffers and temporal states.
Storing node embeddings requires $O(N d)$ memory. Attention coefficients and intermediate messages
are kept for the current batch of edges, which costs $O(E H + E d)$. The temporal kernels maintain
GRU hidden states and pooled summaries of size $O(L h + K d)$. In addition, the decoder holds
temporary buffers for at most $40\,000$ edges at a time for distance and loss computation, which is
bounded by the edge caps discussed above. Crucially, we never store embeddings for all checkpoints
in a trace: only the current temporal window and the recurrent state are retained. The overall
space complexity is therefore
\begin{equation}
\mathrm{Space} = O\big(N d + E H + K h\big),
\end{equation}
rather than $O(T N d)$ in the length $T$ of the full optimization trace.

\paragraph{Scaling Behaviour.}
Collecting all components, the full time complexity of one training step is
\begin{equation}
\underbrace{O\!\left(L H N d^{2} + L H E d\right)}_{\text{HGAT encoder}}
\;+\;
\underbrace{O\!\left(L K (h^{2} + h d)\right)}_{\text{temporal kernels}}
\;+\;
\underbrace{O\!\left(d (1+\kappa) E^{+}_{\text{batch}}\right)}_{\text{decoder}}
\;+\;
\underbrace{O(N d)}_{\text{Riemannian refinement}}.
\end{equation}
In our setting, the parameter graphs satisfy $E \gg N$ and the embedding
dimension $d$ is moderate. Under these conditions, the edge term
$L H E d$ dominates the node–dependent terms $L H N d^{2}$ and $N d$.
Moreover, $E^{+}_{\text{batch}}$ is capped in practice, so the decoder
contributes only a controlled linear factor in $E$.

Under these assumptions, the effective leading dependence simplifies to
\begin{equation}
O\!\left(L H E d\right)
\;+\;
O\!\left(L K (h^{2} + h d)\right)
\;+\;
O\!\left(d (1+\kappa) E^{+}_{\text{batch}}\right),
\end{equation}
which is linear in the number of edges $E$ and in the temporal window
length $K$. Hyperbolic operations and the Riemannian refinement introduce
only $O(d)$ costs per node or edge, adding a modest constant factor relative
to a Euclidean graph meta network with the same $(L,H,d)$ but not altering
the asymptotic scaling in model size.

\section{Ablations}\label{sec:ablations_app}
\paragraph{Design of the ablations.}
The full objective can be viewed as a core prediction loss equipped with a set of 
geometric and temporal regularizers. The core part consists of the Fermi–Dirac link 
loss $\mathcal{L}_{\text{FD}}$, which teaches the model which edges should exist, and 
the signed regression head $\mathcal{L}_{\text{wrec}} + \mathcal{L}_{\text{sign}}$, 
which aligns the predicted weights with the ground truth values. On top of this, the 
slope prior $\mathcal{L}_{\alpha}$ and the ranking loss $\mathcal{L}_{\text{rank}}$ 
encourage a monotone relation between hyperbolic distance and weight magnitude, while 
the temporal smoothness term $\mathcal{L}_{\text{smooth}}$ prevents node embeddings 
from moving abruptly between consecutive checkpoints (equation \ref{eq:total_loss}).

In the ablation study we retain $\mathcal{L}_{\text{FD}}$ so that the model
continues to learn meaningful link structure, and we selectively remove
geometric terms to examine their individual contribution. Removing
$\mathcal{L}_{\text{rank}}$ tests whether enforcing distance–magnitude
ordering matters primarily for interpretability or also for INR accuracy.
Removing both $\mathcal{L}_{\text{rank}}$ and $\mathcal{L}_{\text{smooth}}$
probes the importance of temporal regularity for maintaining a coherent
geometry across the optimization trace. Finally, removing the signed
regression head ($\mathcal{L}_{\text{wrec}}$, $\mathcal{L}_{\text{sign}}$,
and $\mathcal{L}_{\alpha}$) asks whether explicit supervision on magnitudes
and signs is required to bind weights to distance, or whether link prediction
alone can sustain the geometric coupling. The following ablations on MNIST and
Fashion–MNIST evaluate these questions in detail. We conduct ablations on the INR–trace image classification task because it is the setting in which
all components of the composite objective are jointly active: link prediction, signed weight regres-
sion, distance–magnitude priors, ranking, and temporal smoothness all contribute to shaping the
learned hyperbolic geometry. This task provides long, dense temporal traces (80–100 checkpoints
per image), rich variation in weight magnitudes and signs, and a strong supervisory signal, making
it the most sensitive and diagnostic environment for assessing how each loss term influences the
geometry. Moreover, this is the setting in which our temporal and geometric meta–network attains high
classification accuracy, ensuring that ablations probe a regime where the model operates at full
capacity rather than in a degraded or low-signal setting.
\paragraph{Ablation questions.}
We design our ablations around three fundamental questions (Table \ref{tab:abl_ques}):
\begin{enumerate}
    \item Does the model still produce a meaningful geometry if it is never explicitly told that stronger weights should be closer? (No Ranking Loss)
    \item If neither local geometric ordering nor temporal continuity is enforced, does the temporal hyperbolic geometry collapse? (No Ranking and Temporal Smoothness)
    \item Can hyperbolic distances encode magnitude and polarity purely from link prediction? (No Sign Magnitude)
\end{enumerate}
Each ablation scenario disables a specific subset of loss terms in order to answer exactly one of these questions while keeping the remaining components of the objective intact.

\paragraph{Ablations on geometric objectives (MNIST / Fashion-MNIST).}
We examine four variants of GTH–GMN on the INR classification task in order to isolate the role of
each geometric loss. The variants are: 
\begin{itemize}
    \item The full objective.
    \item A model without the ranking loss
($\lambda_{\text{rank}}=0$).
\item  A model without both ranking and temporal smoothness
($\lambda_{\text{rank}}=\lambda_{\text{smooth}}=0$).
\item  A model without the signed regression
head (dropping $\mathcal{L}_{\text{wrec}}$, $\mathcal{L}_{\text{sign}}$, and $\mathcal{L}_\alpha$
while keeping the Fermi–Dirac decoder).
\end{itemize}
\newcommand{\cmark}{\checkmark}
\newcommand{\xmark}{\ding{55}}

\begin{table}[t]
    \centering
    \caption{Ablation scenarios and their relation to the three guiding questions. 
    Each row indicates which loss terms are active in the corresponding variant.}
    \begin{tabular}{lccccccc}
        \toprule
        \textbf{Variant} & \textbf{Question} & $L_{\text{FD}}$ & $L_{\text{wrec}}$ & $L_{\text{sign}}$ & $L_{\alpha}$ & $L_{\text{rank}}$ & $L_{\text{smooth}}$ \\
        \midrule
        Full objective 
        & -- 
        & \cmark & \cmark & \cmark & \cmark & \cmark & \cmark \\
        No ranking loss 
        & (1) 
        & \cmark & \cmark & \cmark & \cmark & \xmark & \cmark \\
        No ranking \& no smoothness 
        & (2) 
        & \cmark & \cmark & \cmark & \cmark & \xmark & \xmark \\
        No signed regression head 
        & (3) 
        & \cmark & \xmark & \xmark & \xmark & \cmark & \cmark \\
        \bottomrule
    \end{tabular}
    \label{tab:abl_ques}
\end{table}
For each condition we report Kendall’s $\tau$
between predicted and actual accuracies, and the fitted slope of the distance–$\log|w|$ relation
(Table \ref{tab:ablation}).

\begin{table}[h!]
    \centering
    \caption{Ablation Study Table}
    \label{tab:ablation_summary}
    \scalebox{0.9}{
    \begin{tabular}{l c c c}
        \hline
        \textbf{Condition} & \textbf{Metric} & \textbf{MNIST Value} & \textbf{Fashion-MNIST Value} \\
        \hline
        \multirow{2}{*}{Full Objective} & $\tau$ & $0.946$ & $0.935$ \\
        & Slope ($b$) & ${-0.617}$ & ${-0.322}$ \\
        \hline
        \multirow{2}{*}{No Ranking Loss} & $\tau$ & $0.811$ & $0.797$ \\
        & Slope ($b$) & ${-0.473}$ & ${-0.189}$ \\
        \hline
        \multirow{2}{*}{No Rank/Smooth} & $\tau$ & $0.765$ & $0.764$ \\
        & Slope ($b$) & ${-0.365}$ & ${-0.061}$ \\
        \hline
        \multirow{2}{*}{No Signed Head} & $\tau$ & $0.738$ & $0.726$ \\
        & Slope ($b$) & ${-0.211}$ & ${+0.125}$ \\
        \hline
    \end{tabular}
    }
    \label{tab:ablation}
\end{table}

\paragraph{Full objective.}
With all losses active, the model exhibits the strongest alignment between its hyperbolic representations and the underlying predictive signal: Kendall’s $\tau$ reaches ${0.946}$ on MNIST and ${0.935}$ on Fashion–MNIST. Geometrically, the distance–weight relation is sharply expressed. The inverse slope is steepest on MNIST (${-0.617}$) and moderately steep on Fashion–MNIST (${-0.322}$), reflecting the dataset-specific geometric requirements. The data shows a narrow concentration of points around this trend. Large weights consistently lie close to the origin, weaker weights move outward, and the trajectory across checkpoints remains stable. This is the regime in which the geometry, temporal evolution, and predictive signal remain coherent, forming the reference structure against which the ablations can be interpreted (Figures \ref{img:tau_on_full} ,\ref{fig:full_objective_geom_combined} ).

\begin{figure}[h!]
    \centering
\scalebox{0.8}{
\begin{minipage}{0.48\textwidth}
    \centering
    \includegraphics[width=\linewidth]{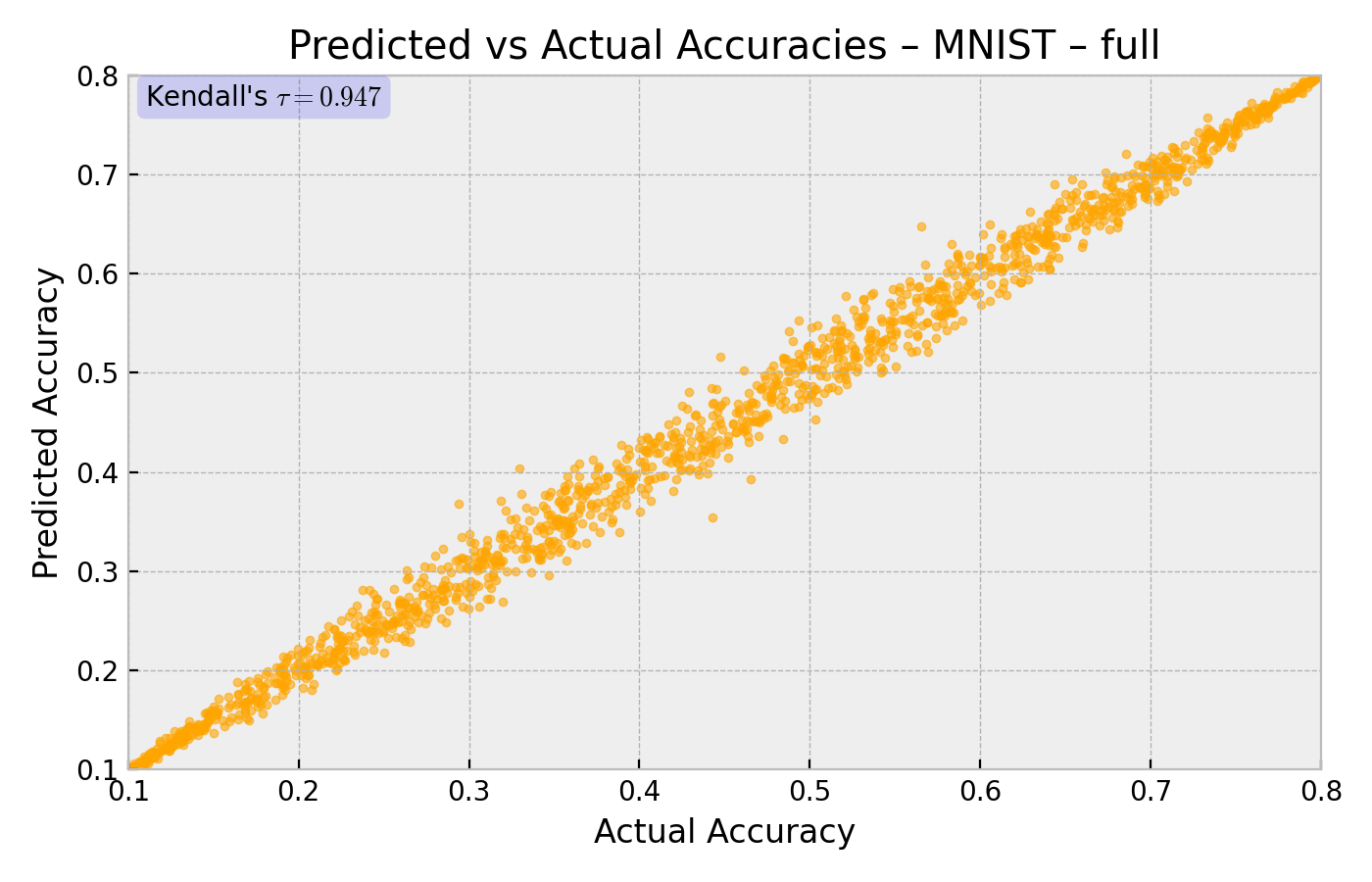}
\end{minipage}\hfill
\begin{minipage}{0.48\textwidth}
    \centering
    \includegraphics[width=\linewidth]{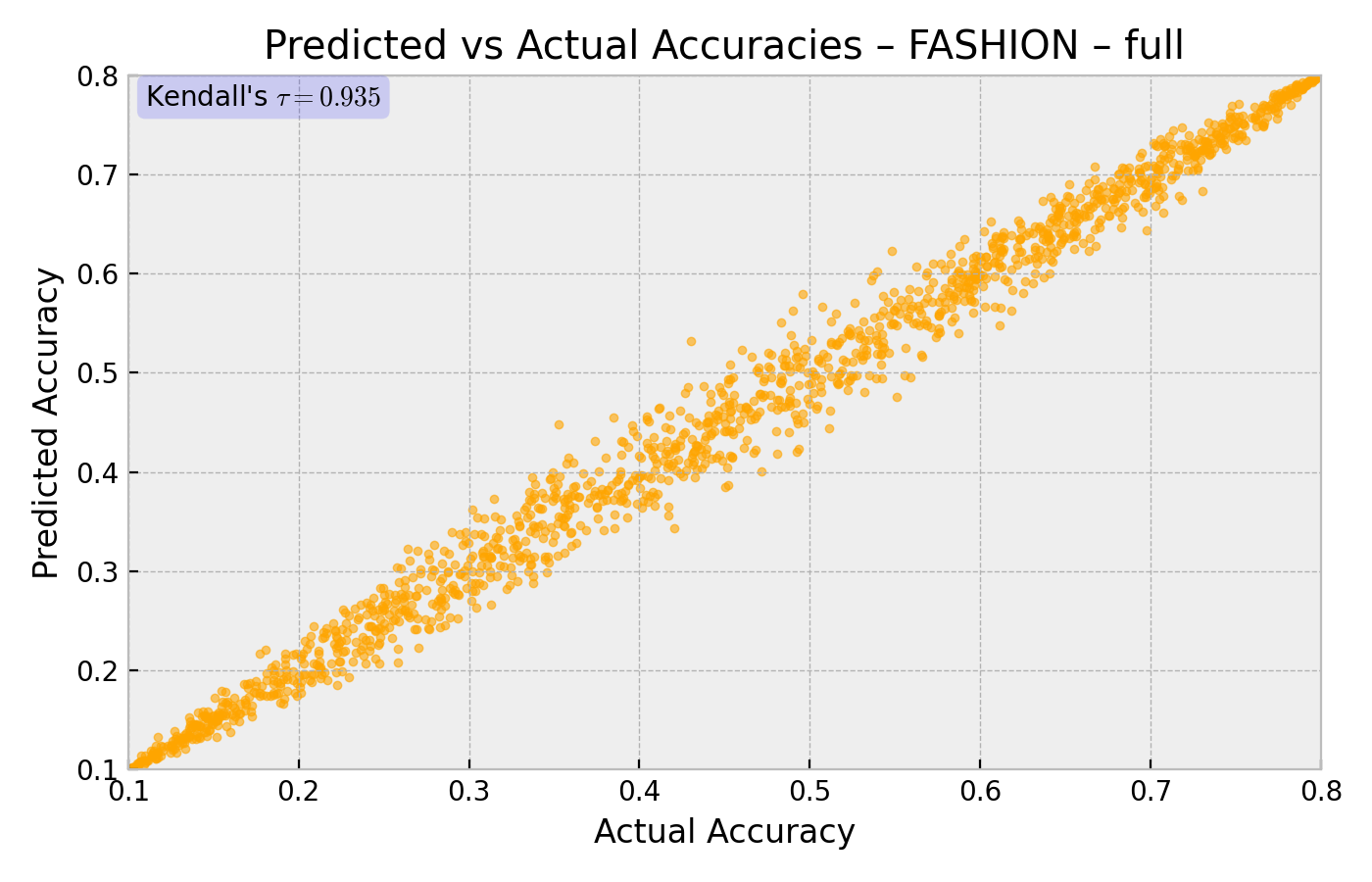}
\end{minipage}}
\caption{\small{Kendall $\tau$ over the full objective for the INR classification task on MNIST and Fashion-MNIST datasets. GTH-GMN with full objective achieves a Kendall's $\tau$ of ${0.946}$ on MNIST and ${0.935}$ on Fashion–MNIST datasets.}}
\label{img:tau_on_full}
\end{figure}
\begin{figure}[h!]
    \centering
    \scalebox{0.8}{
        \begin{minipage}{\textwidth} 
            \centering
            
            \begin{minipage}{0.49\textwidth}
                \centering
                \includegraphics[width=\linewidth]{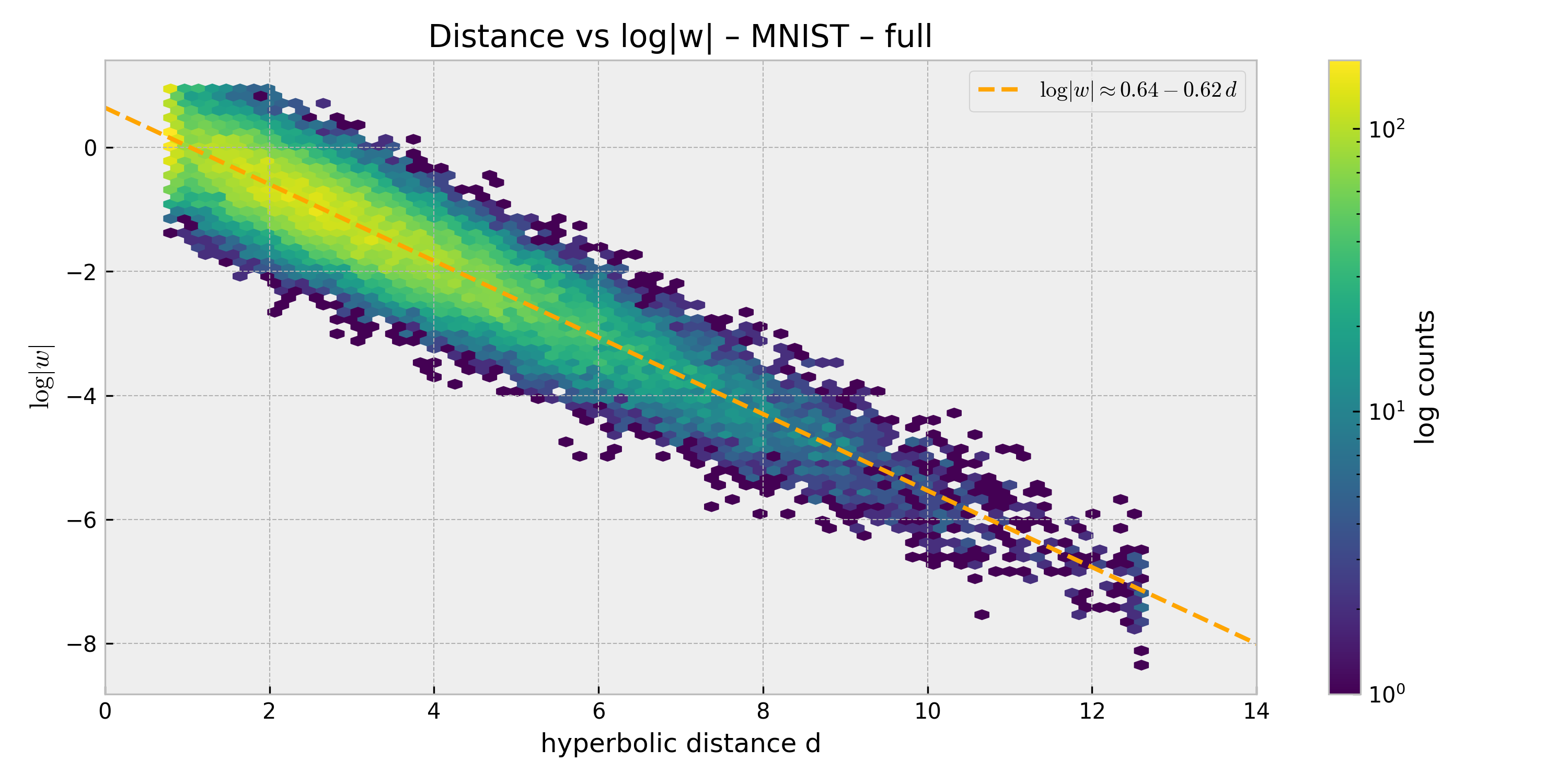}
            \end{minipage}\hfill
            \begin{minipage}{0.49\textwidth}
                \centering
                \includegraphics[width=\linewidth]{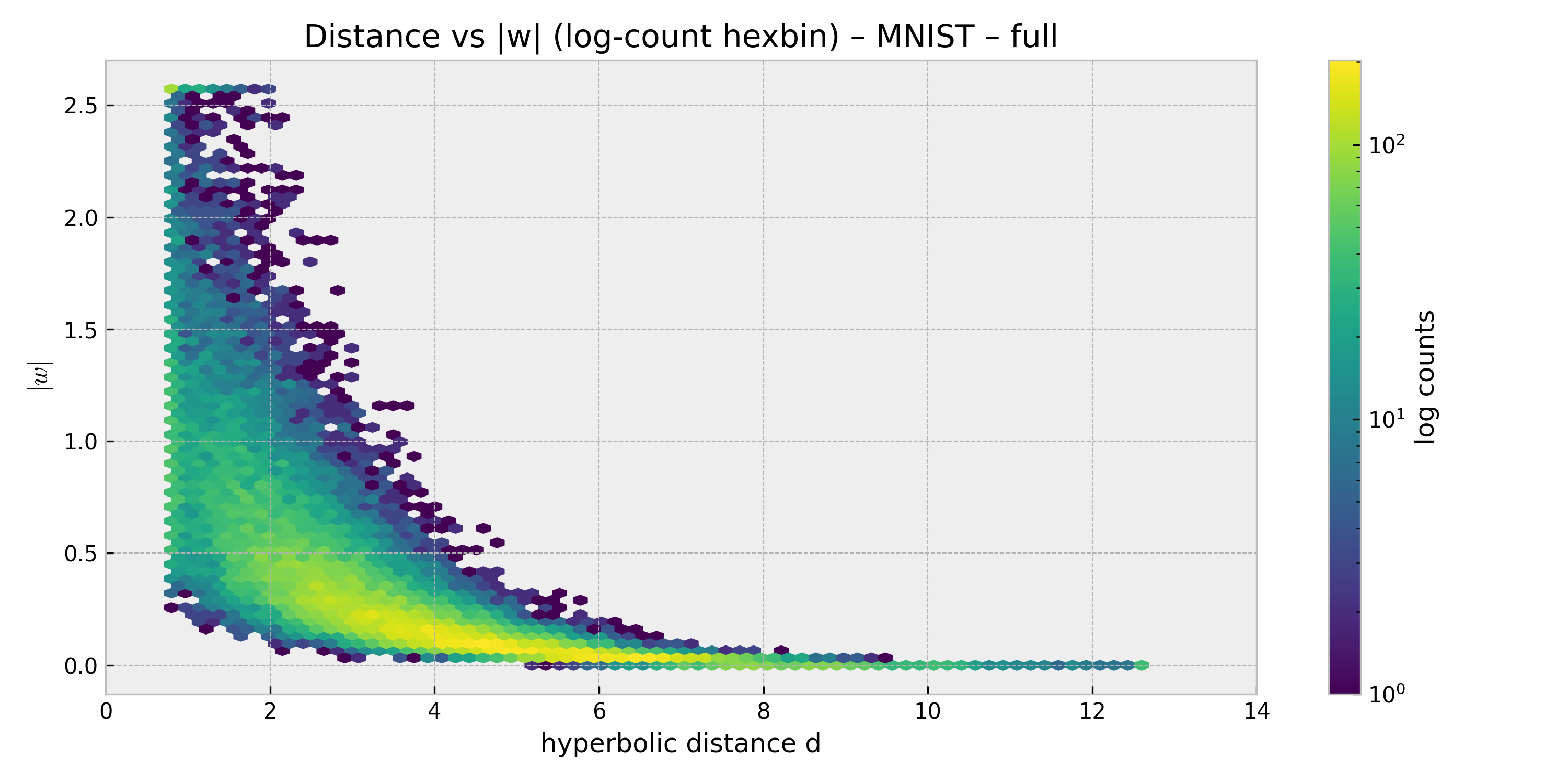}
            \end{minipage}

            \vspace{1.0em} 

            \begin{minipage}{0.49\textwidth}
                \centering
                \includegraphics[width=\linewidth]{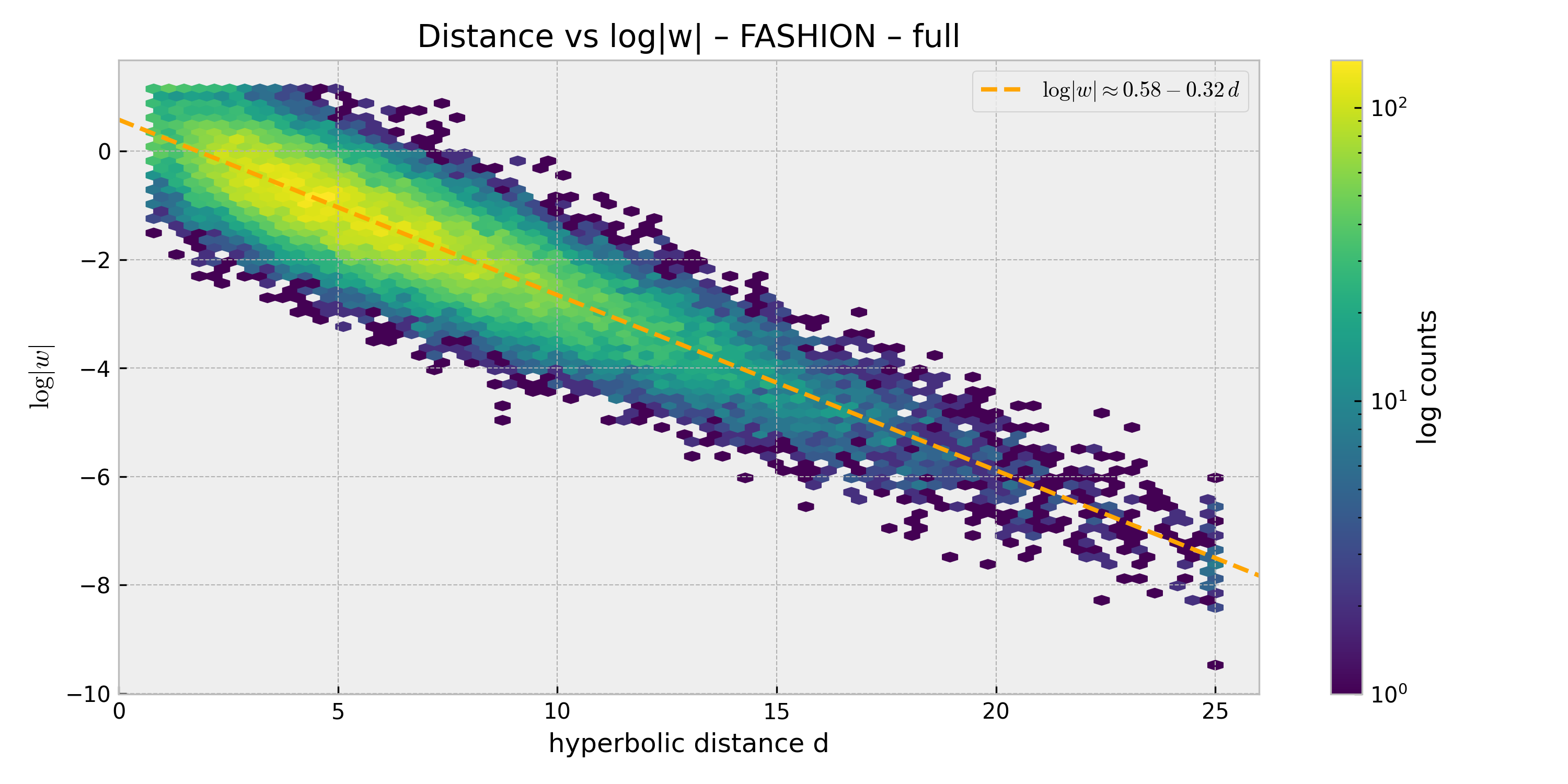}
            \end{minipage}\hfill
            \begin{minipage}{0.49\textwidth}
                \centering
                \includegraphics[width=\linewidth]{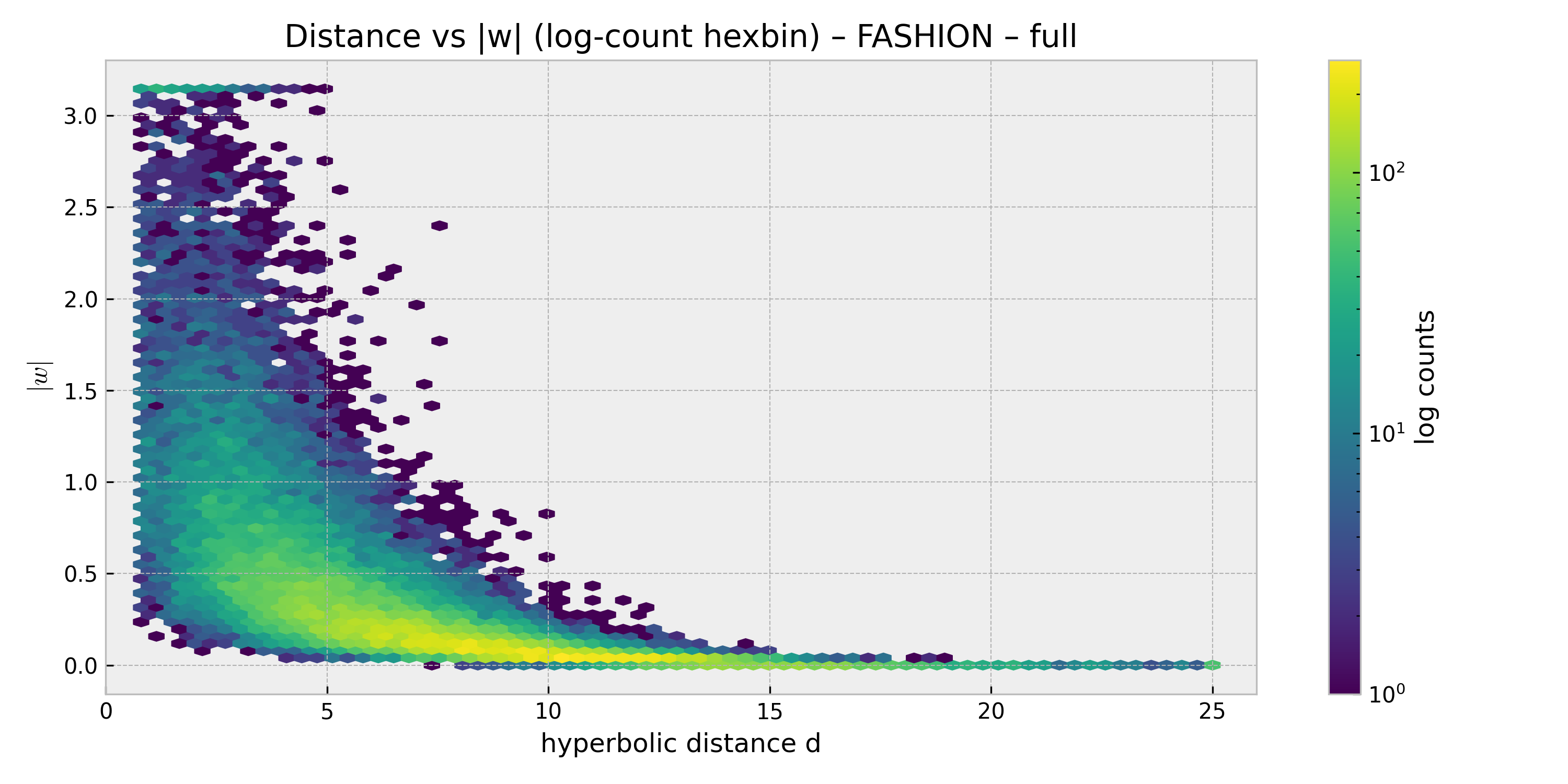}
            \end{minipage}

        \end{minipage}
    }
    \caption{\small{Analysis of the relationship between hyperbolic distance ($d$) and edge weights for the "full objective" condition. The top row displays results for MNIST, while the bottom row shows Fashion-MNIST. For each dataset, the left image shows the hexbin plot of distance vs. $\log|w|$ with a fitted linear regression line (dashed orange). The right image shows the hexbin plot of distance vs. the raw weight magnitude $|w|$. The fitted slope for MNIST in this condition ( ${-0.617d}$) indicates a strong decay of edge weights, significantly steeper than for Fashion-MNIST (${-0.322d}$), reflecting distinct geometric influences on model training for each dataset.}}
    \label{fig:full_objective_geom_combined}
\end{figure}

\begin{figure}[h!]
    \centering
\scalebox{0.8}{
\begin{minipage}{0.48\textwidth}
    \centering
    \includegraphics[width=\linewidth]{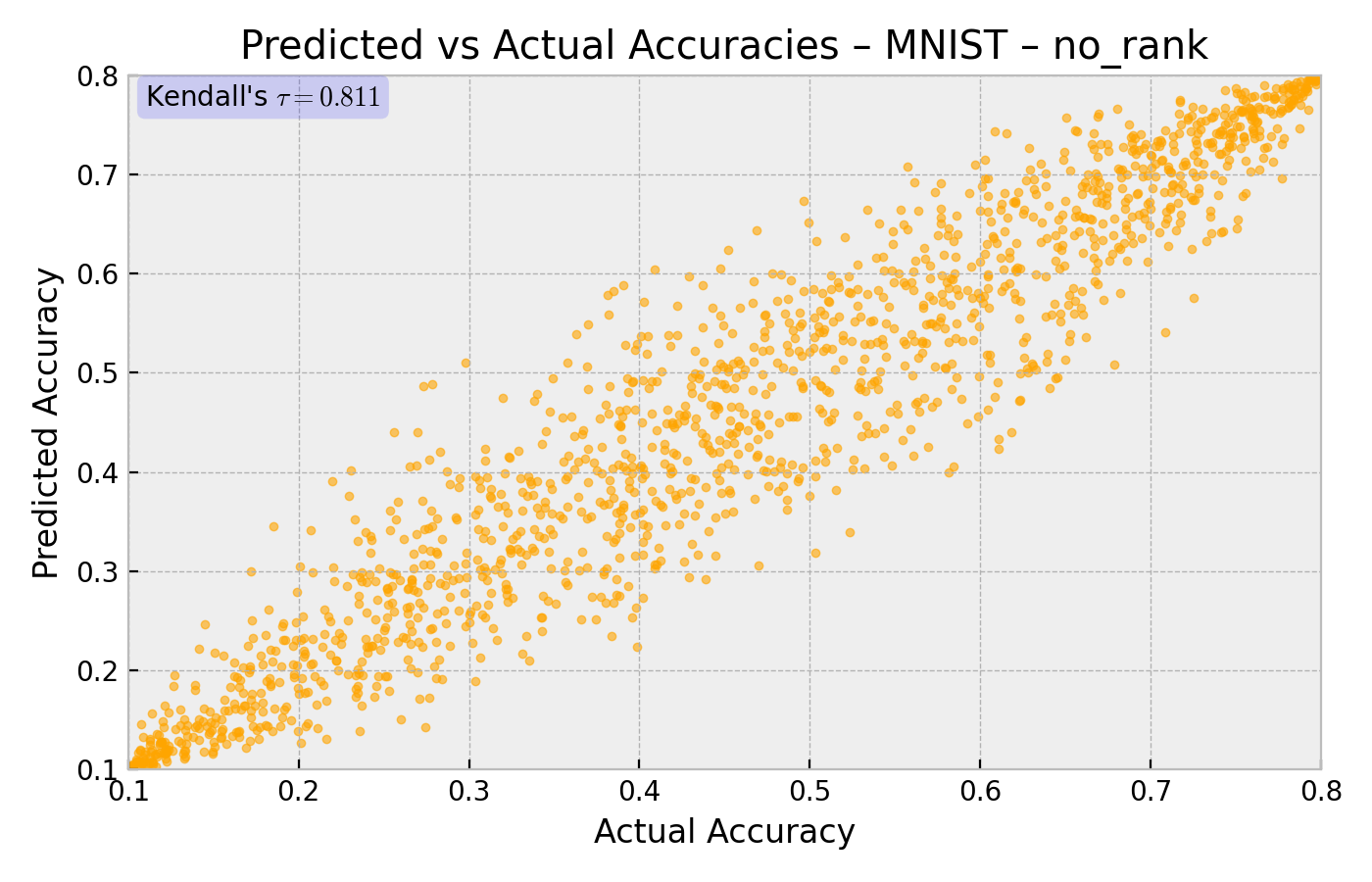}
\end{minipage}\hfill
\begin{minipage}{0.48\textwidth}
    \centering
    \includegraphics[width=\linewidth]{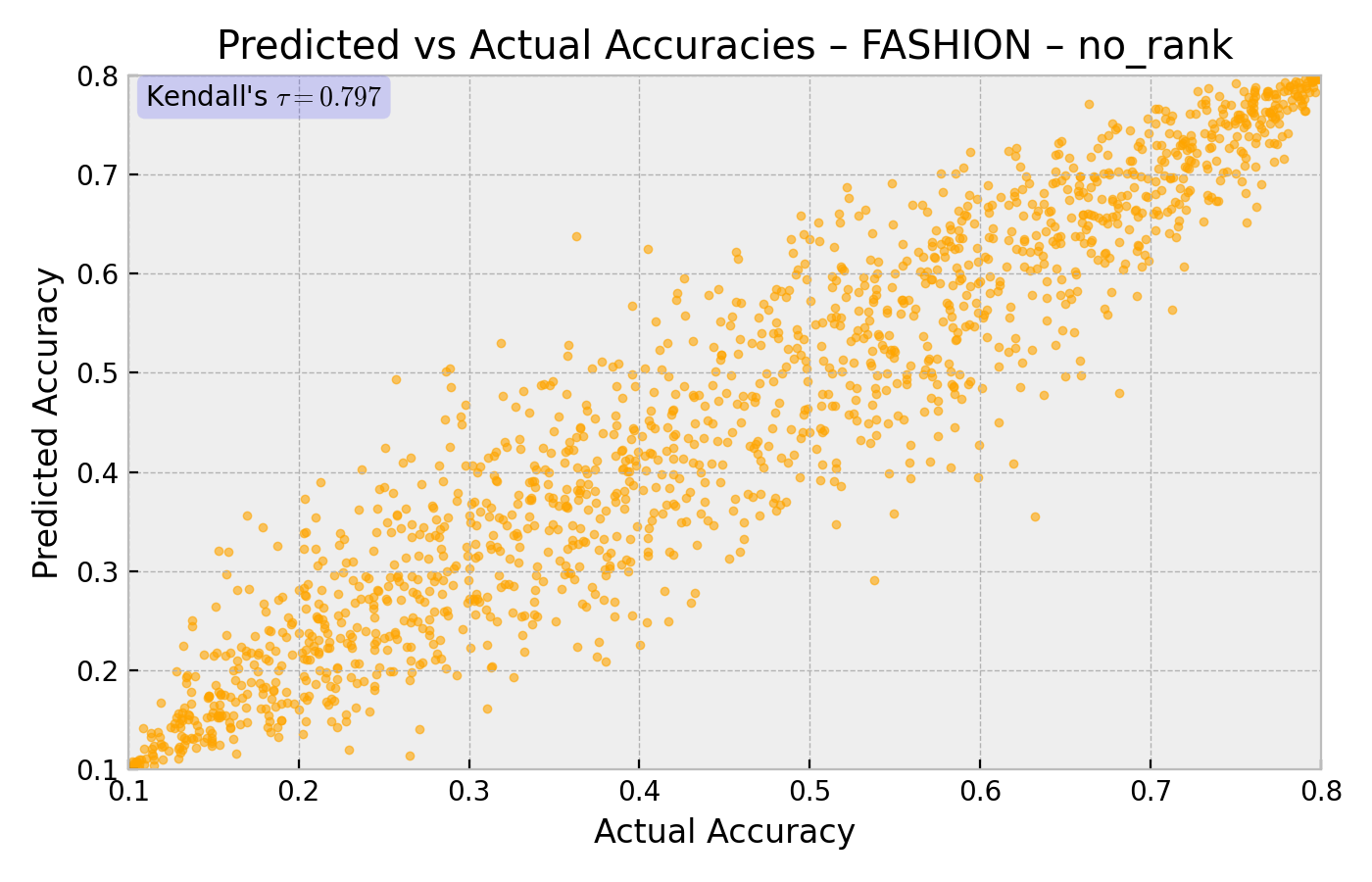}
\end{minipage}}
\caption{\small{Kendall $\tau$ over the "No ranking loss" condition for the INR classification task on MNIST and Fashion-MNIST datasets. GTH-GMN without $\mathcal{L}_{\text{rank}}$ achieves a Kendall's $\tau$ of $0.811$ on MNIST and $0.797$ on Fashion–MNIST datasets, indicating a noticeable degradation in the monotonicity of the predictive signal compared to the full objective.}}
\label{img:tau_on_no_rank}
\end{figure}
\begin{figure}[h!]
    \centering
    \scalebox{0.8}{
        \begin{minipage}{\textwidth} 
            \centering
            
            \begin{minipage}{0.49\textwidth}
                \centering
                \includegraphics[width=\linewidth]{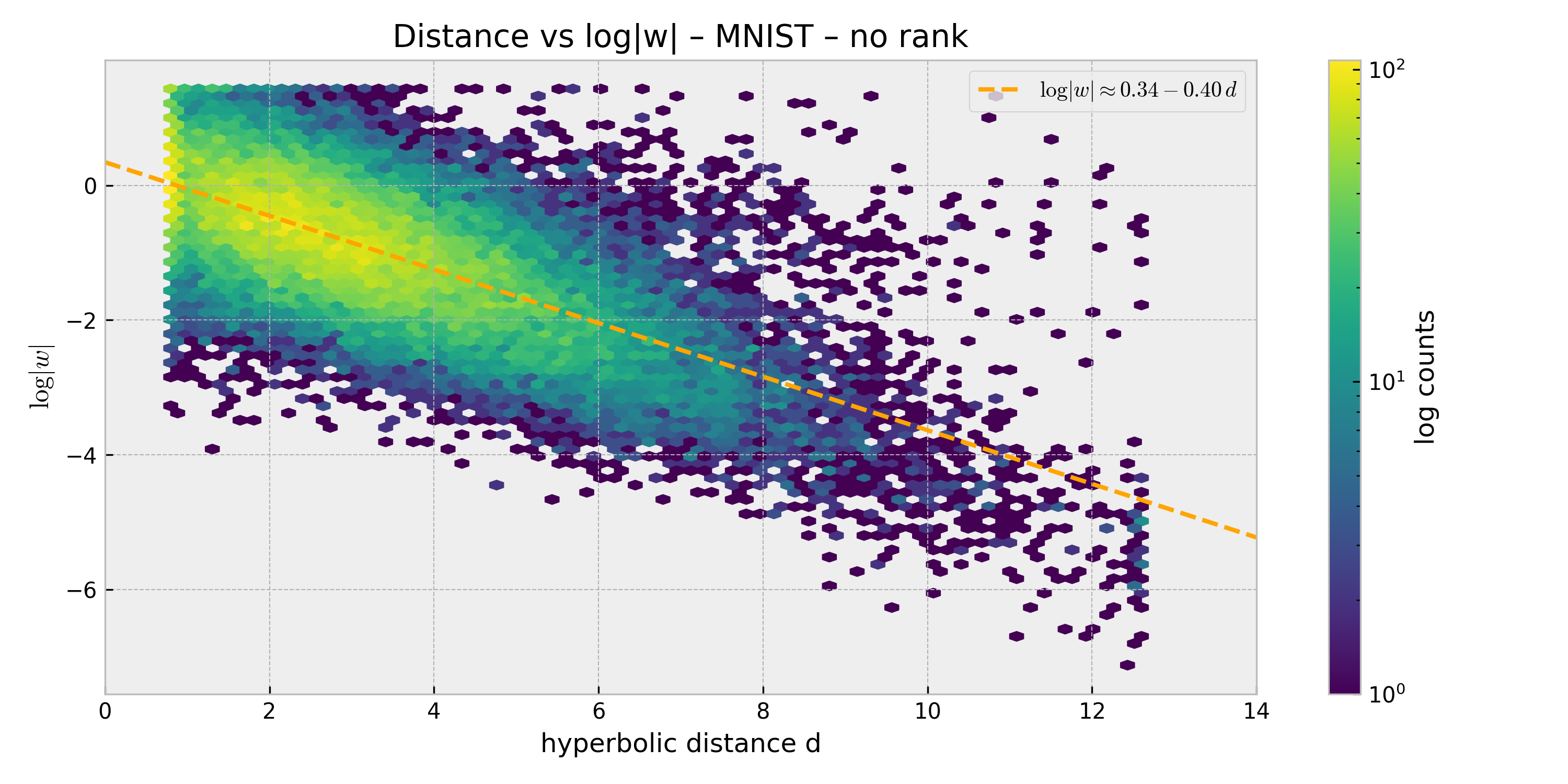}
            \end{minipage}\hfill
            \begin{minipage}{0.49\textwidth}
                \centering
                \includegraphics[width=\linewidth]{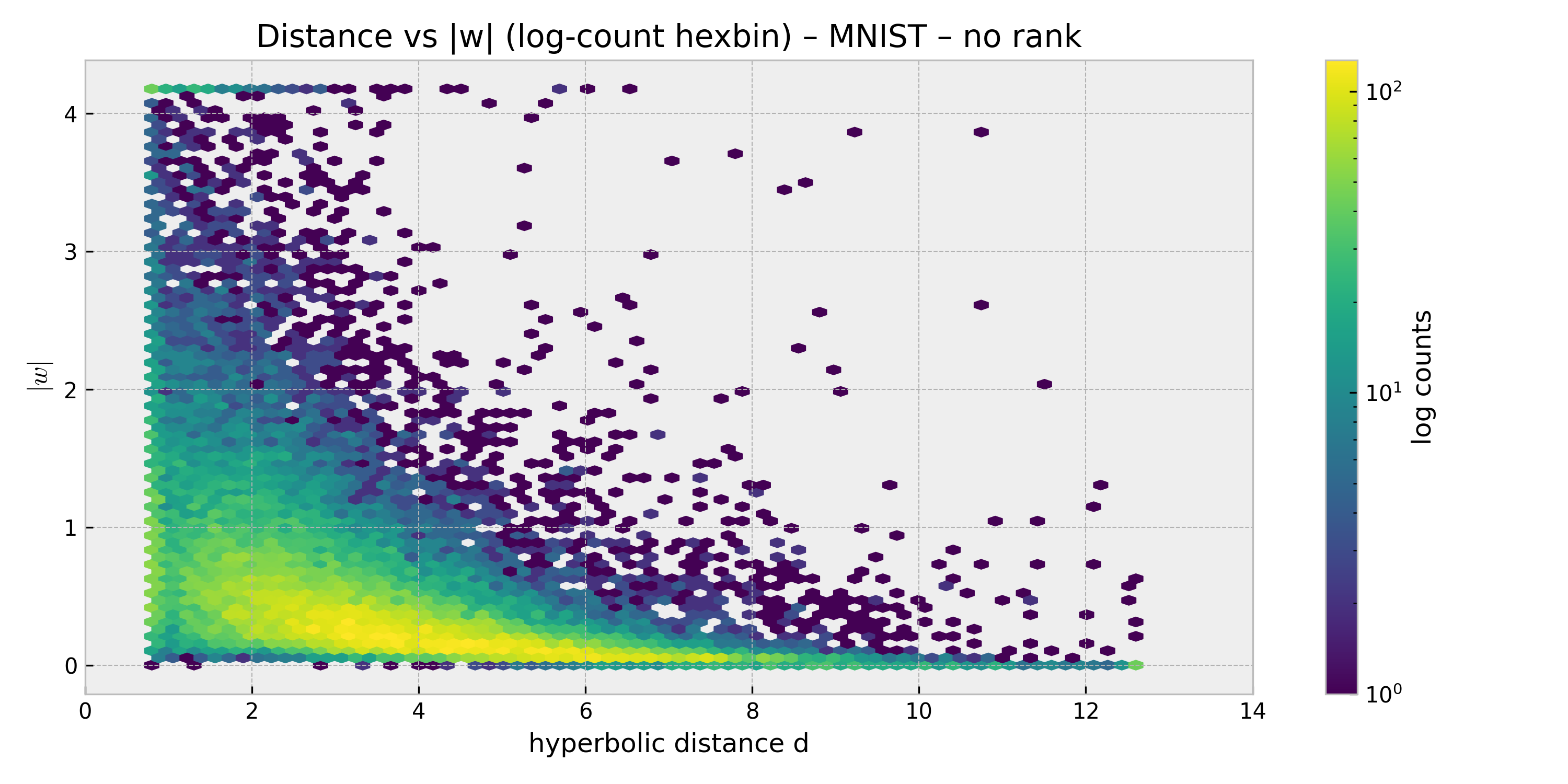}
            \end{minipage}

            \vspace{1.0em} 

            \begin{minipage}{0.49\textwidth}
                \centering
                \includegraphics[width=\linewidth]{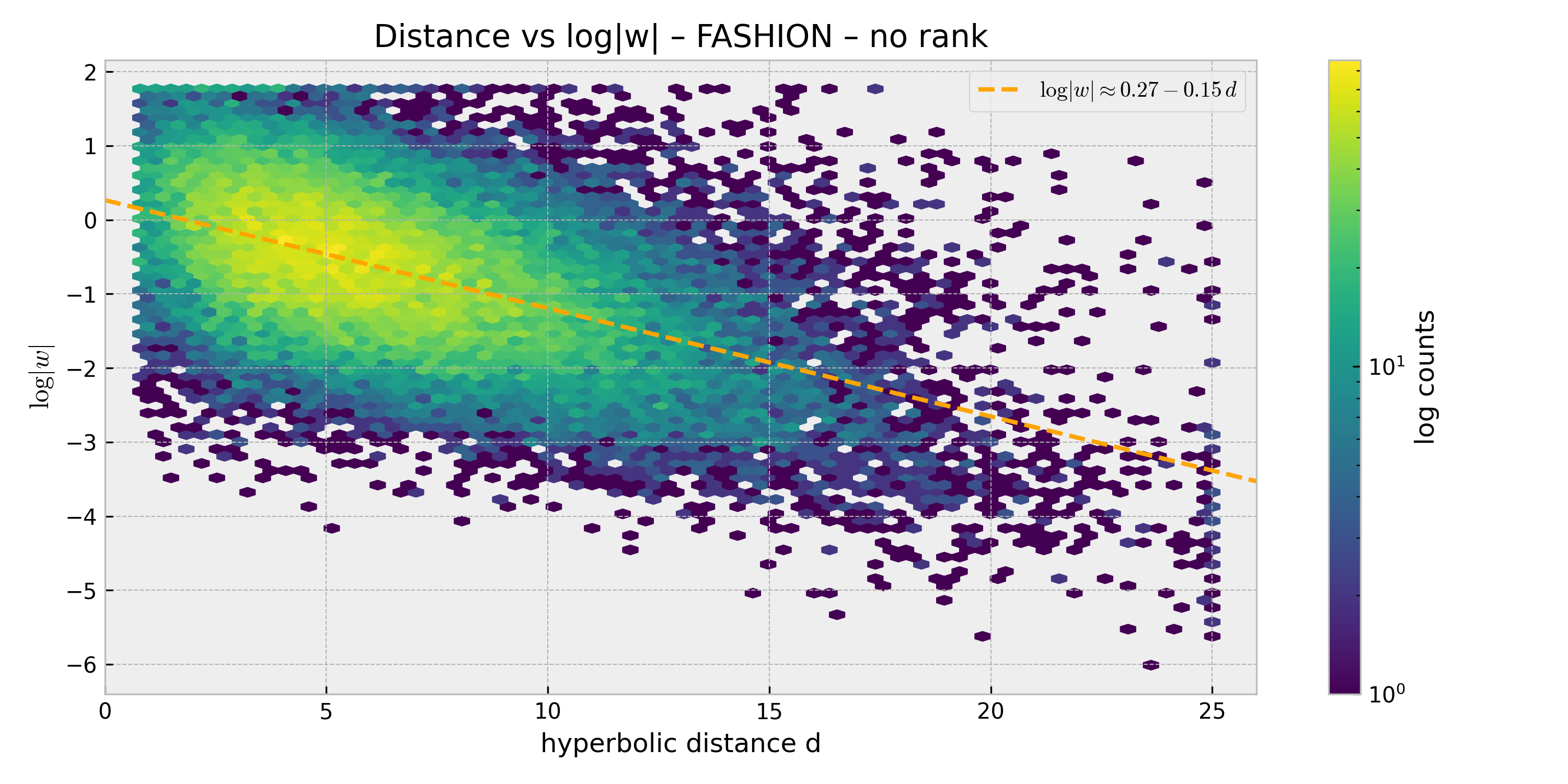}
            \end{minipage}\hfill
            \begin{minipage}{0.49\textwidth}
                \centering
                \includegraphics[width=\linewidth]{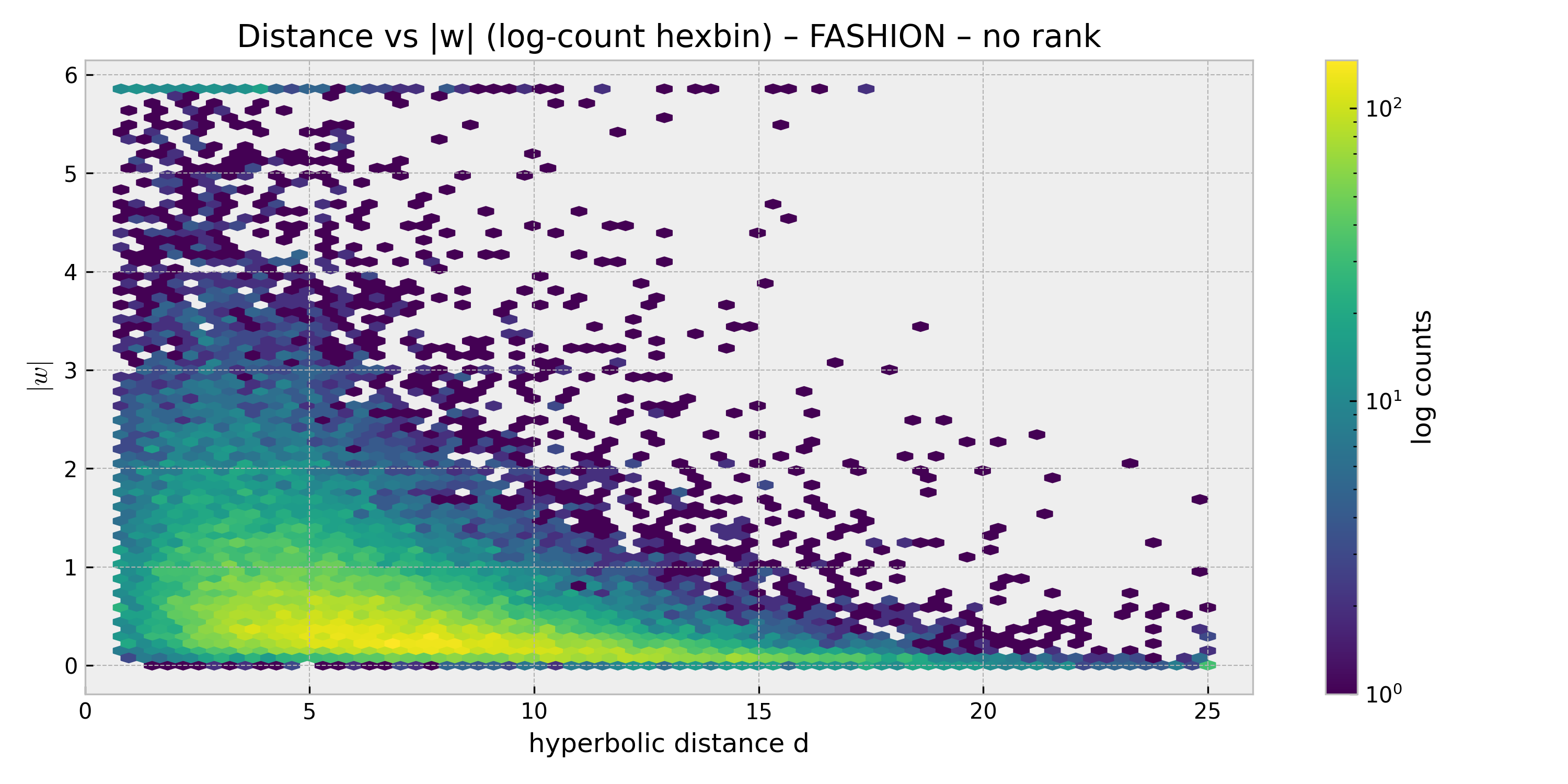}
            \end{minipage}

        \end{minipage}
    }
\caption{\small{Analysis of the relationship between hyperbolic distance ($d$) and edge weights for the "No ranking loss" condition. The top row displays results for MNIST, while the bottom row shows Fashion-MNIST. For each dataset, the left image shows the hexbin plot of distance vs. $\log|w|$ with a fitted linear regression line (dashed orange). The right image shows the hexbin plot of distance vs. the raw weight magnitude $|w|$. The fitted slope for MNIST in this condition ( $-0.473d$) indicates a moderate decay of edge weights, which is substantially shallower than the full objective model, while the decay for Fashion-MNIST ( $-0.189d$) is notably weaker. This indicates that removing the ranking loss leads to a loss of the crisp, monotone geometric ordering.}}
\label{fig:no_ranking_geom_combined}
\end{figure}


\paragraph{No ranking loss.}
Removing $\mathcal{L}_{\text{rank}}$ has a negative effect on both accuracy and the learned geometry. Kendall’s $\tau$ drops to ${0.811}$ in MNIST and ${0.797}$ in Fashion–MNIST, reflecting that the model retains a broad performance gradient, but loses the finer monotonic structure present in the full objective (Figure \ref{img:tau_on_no_rank}). The fitted slope of the distance–$\log|w|$ relation (Figure \ref{fig:no_ranking_geom_combined}) flattens significantly to ${-0.473}$ on MNIST and ${-0.189}$ on Fashion–MNIST, and the hexbins widen, indicating more frequent local inversions where strong and weak edges intermingle. The geometry remains meaningful, but it no longer enforces a crisp ranking between weight magnitudes.
\begin{figure}[h!]
    \centering
\scalebox{0.8}{
\begin{minipage}{0.48\textwidth}
    \centering
    \includegraphics[width=\linewidth]{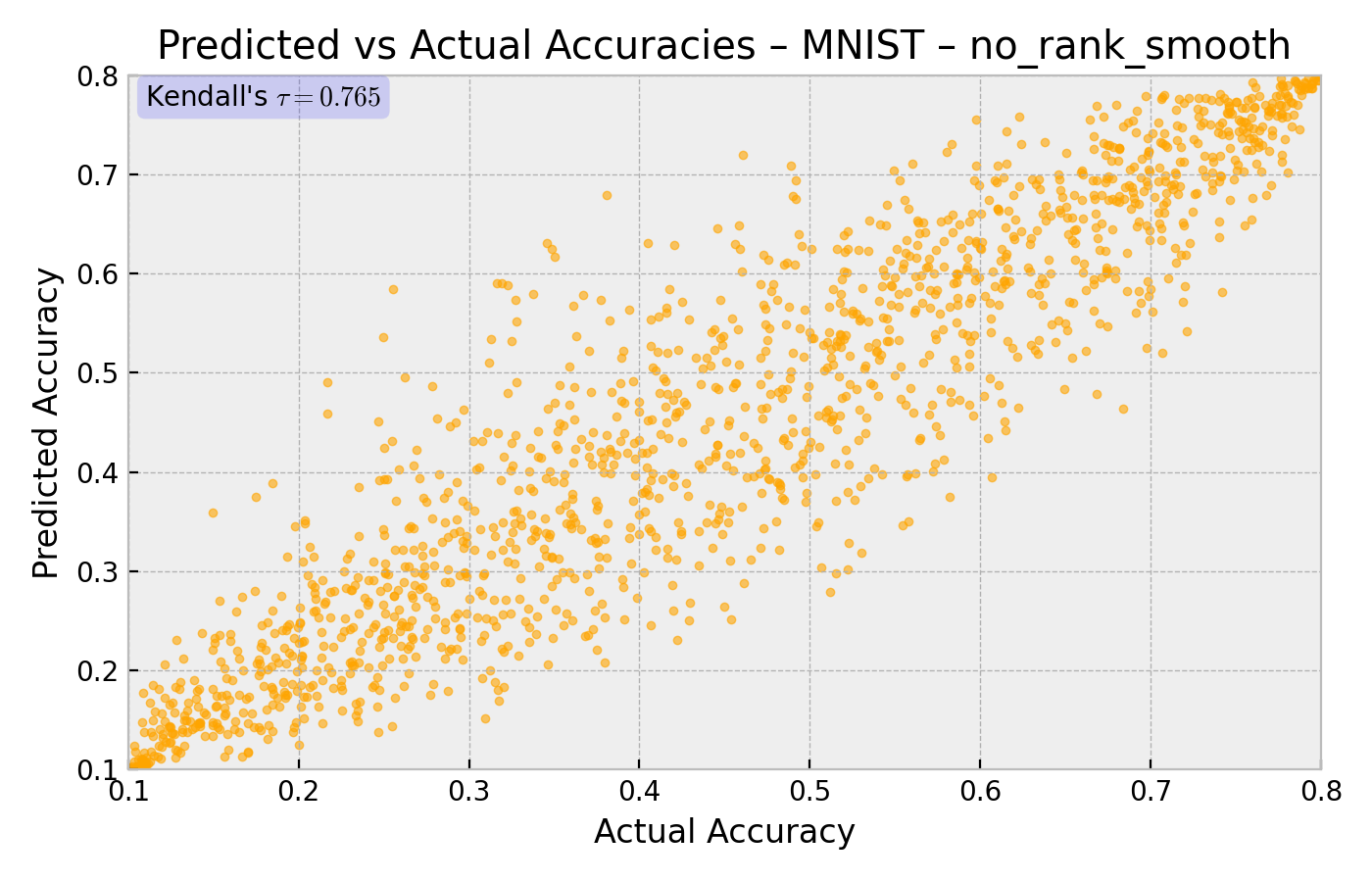}
\end{minipage}\hfill
\begin{minipage}{0.48\textwidth}
    \centering
    \includegraphics[width=\linewidth]{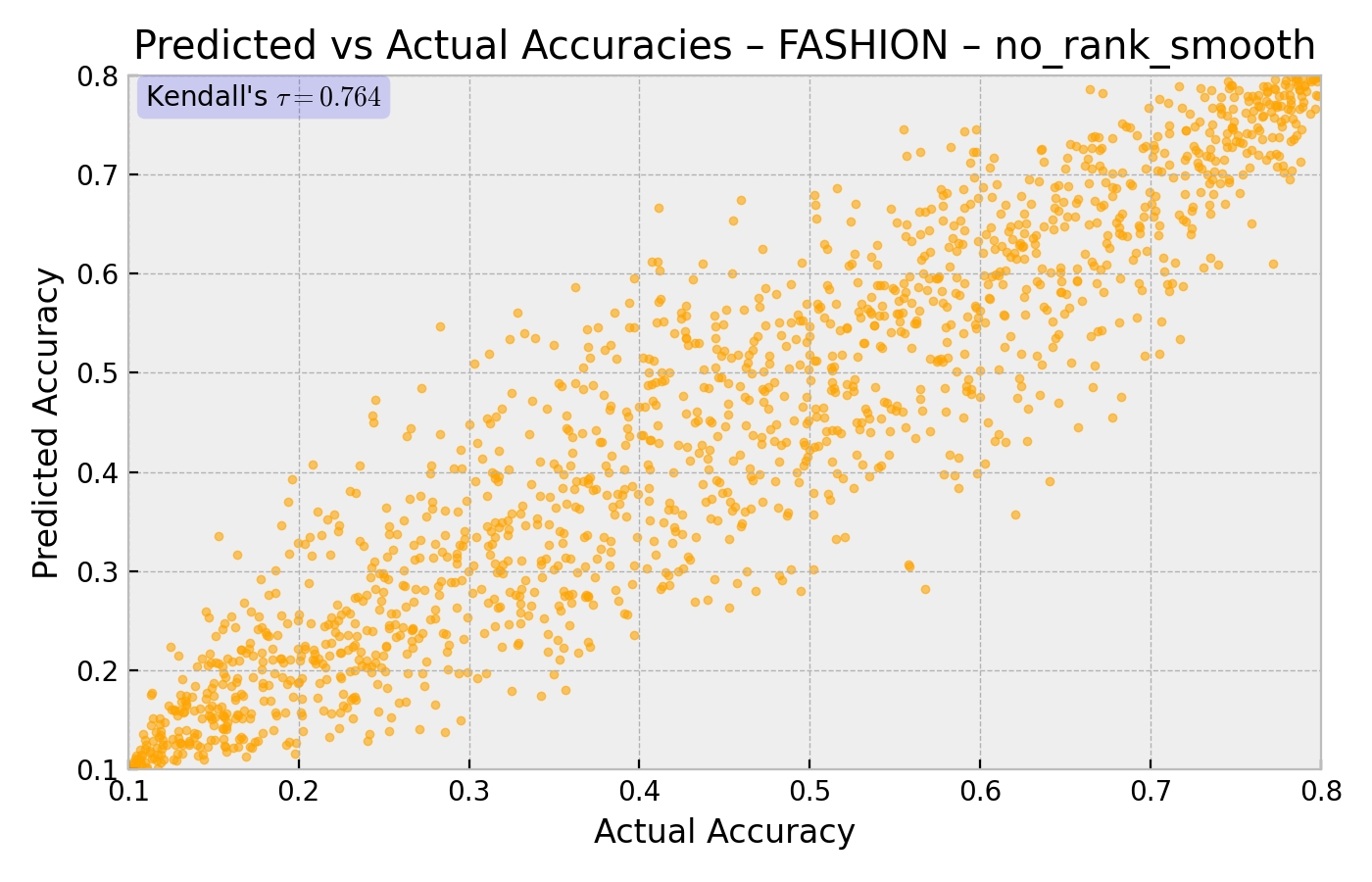}
\end{minipage}}
\caption{\small{Kendall $\tau$ over the "No ranking and no temporal smoothness" condition for the INR classification task on MNIST and Fashion-MNIST datasets. GTH-GMN without both $\mathcal{L}_{\text{rank}}$ and $\mathcal{L}_{\text{smooth}}$ achieves a Kendall's $\tau$ of $0.765$ on MNIST and $0.764$ on Fashion–MNIST datasets, indicating a further decline in the ordering coherence of the predictive signal.}}
\label{img:tau_on_no_rank_smooth}
\end{figure}
\begin{figure}[h!]
    \centering
\scalebox{0.8}{
\begin{minipage}{0.48\textwidth}
    \centering
    \includegraphics[width=\linewidth]{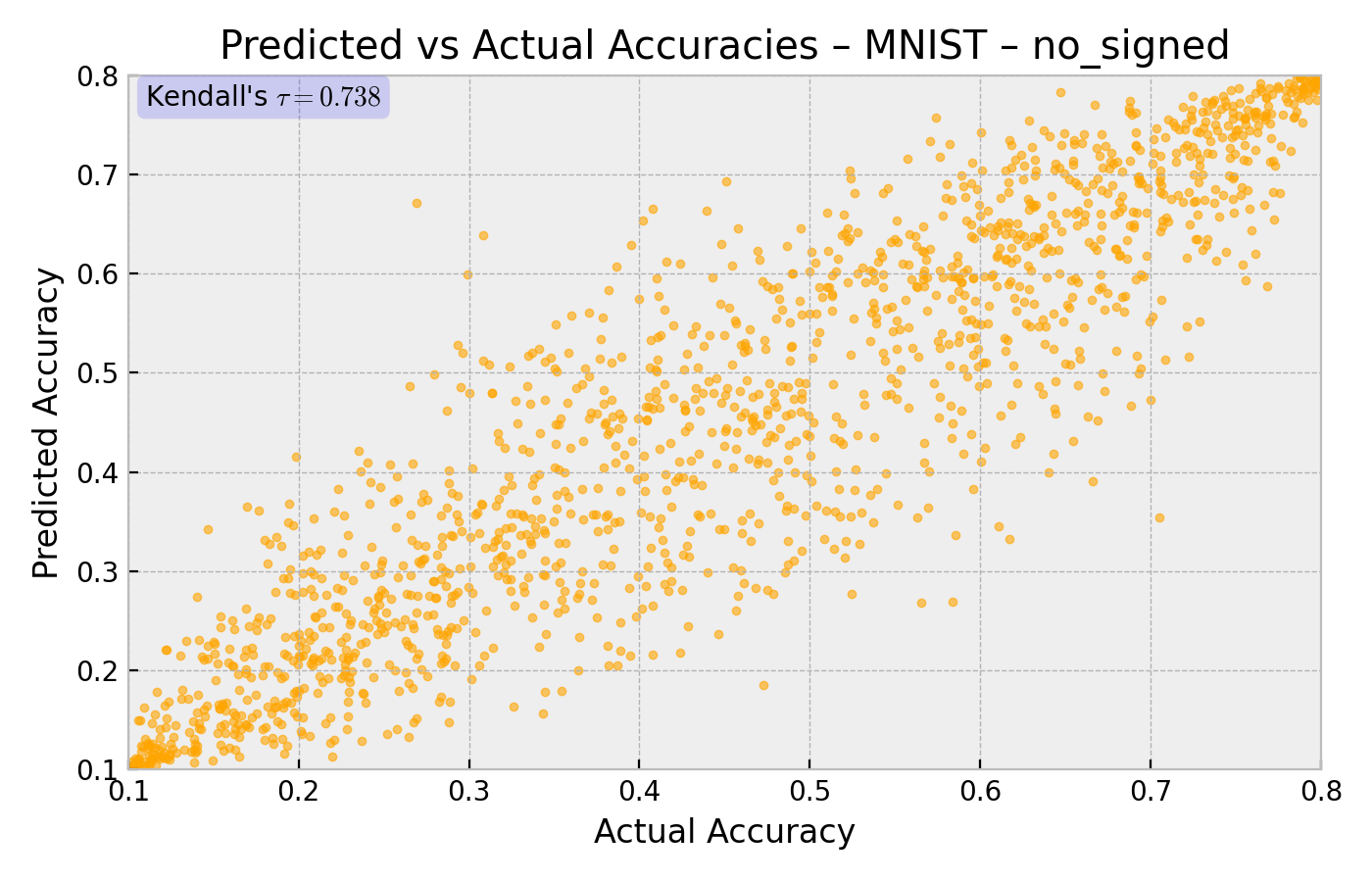}
\end{minipage}\hfill
\begin{minipage}{0.48\textwidth}
    \centering
    \includegraphics[width=\linewidth]{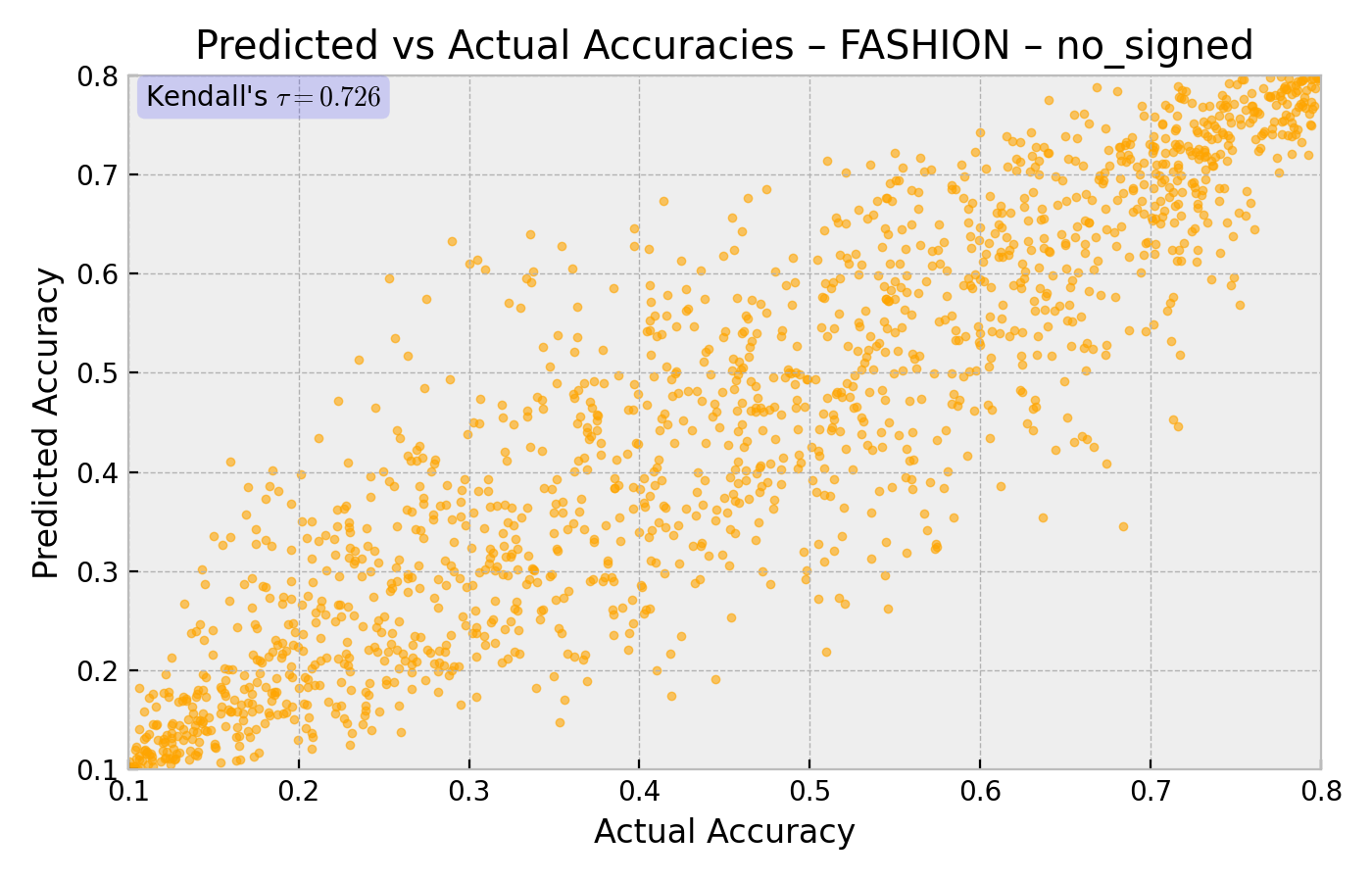}
\end{minipage}}
\caption{\small{Kendall $\tau$ over the "No signed regression head" condition for the INR classification task on MNIST and Fashion-MNIST datasets. GTH-GMN without the signed head achieves a Kendall's $\tau$ of $0.738$ on MNIST and $0.726$ on Fashion–MNIST datasets, representing the strongest degradation in the geometric ordering of the predictive signal.}}
\label{img:tau_on_no_signed}
\end{figure}

\begin{figure}[h!]
    \centering
    \scalebox{0.8}{
        \begin{minipage}{\textwidth} 
            \centering
            
            \begin{minipage}{0.49\textwidth}
                \centering
                \includegraphics[width=\linewidth]{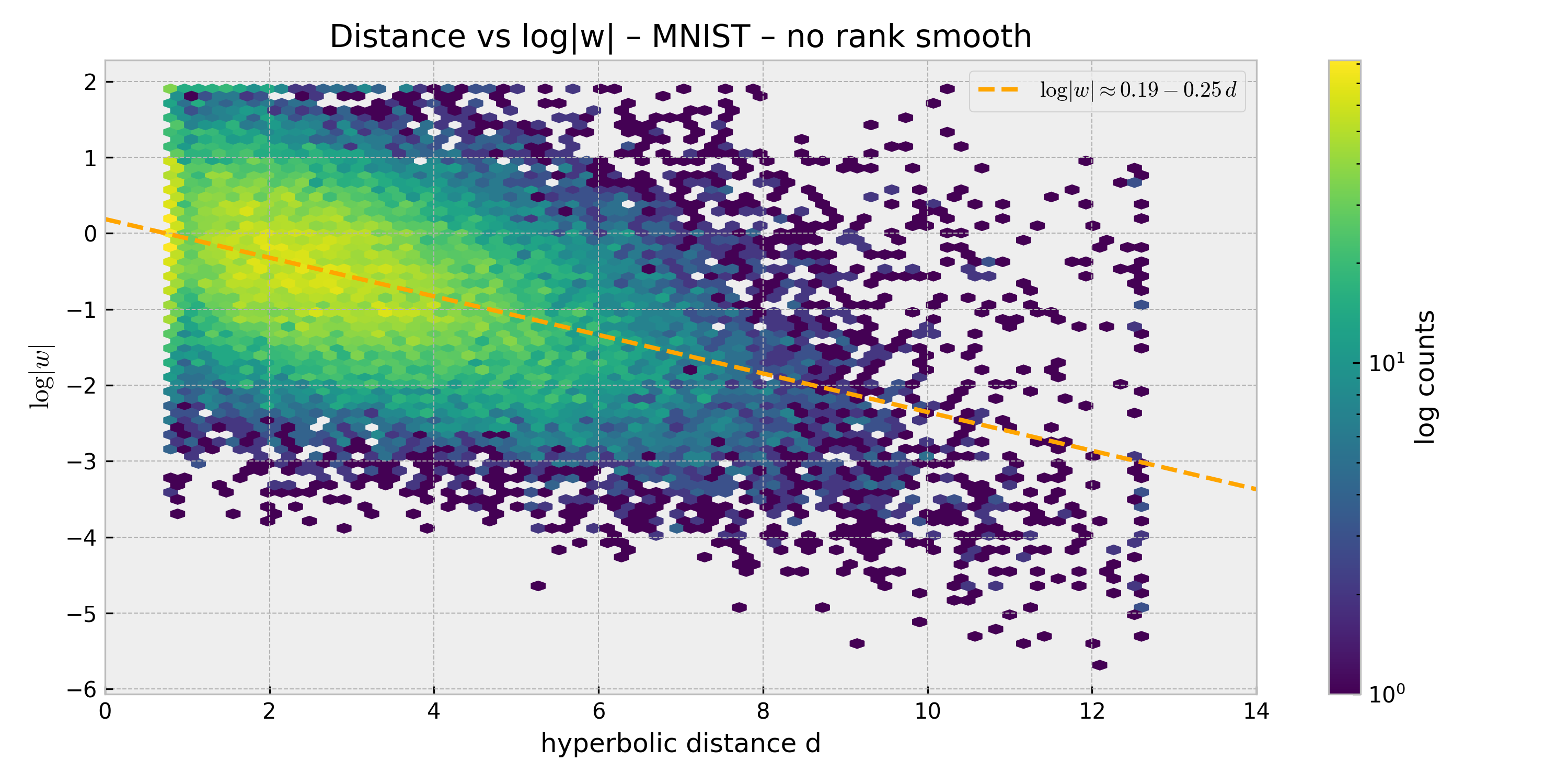}
            \end{minipage}\hfill
            \begin{minipage}{0.49\textwidth}
                \centering
                \includegraphics[width=\linewidth]{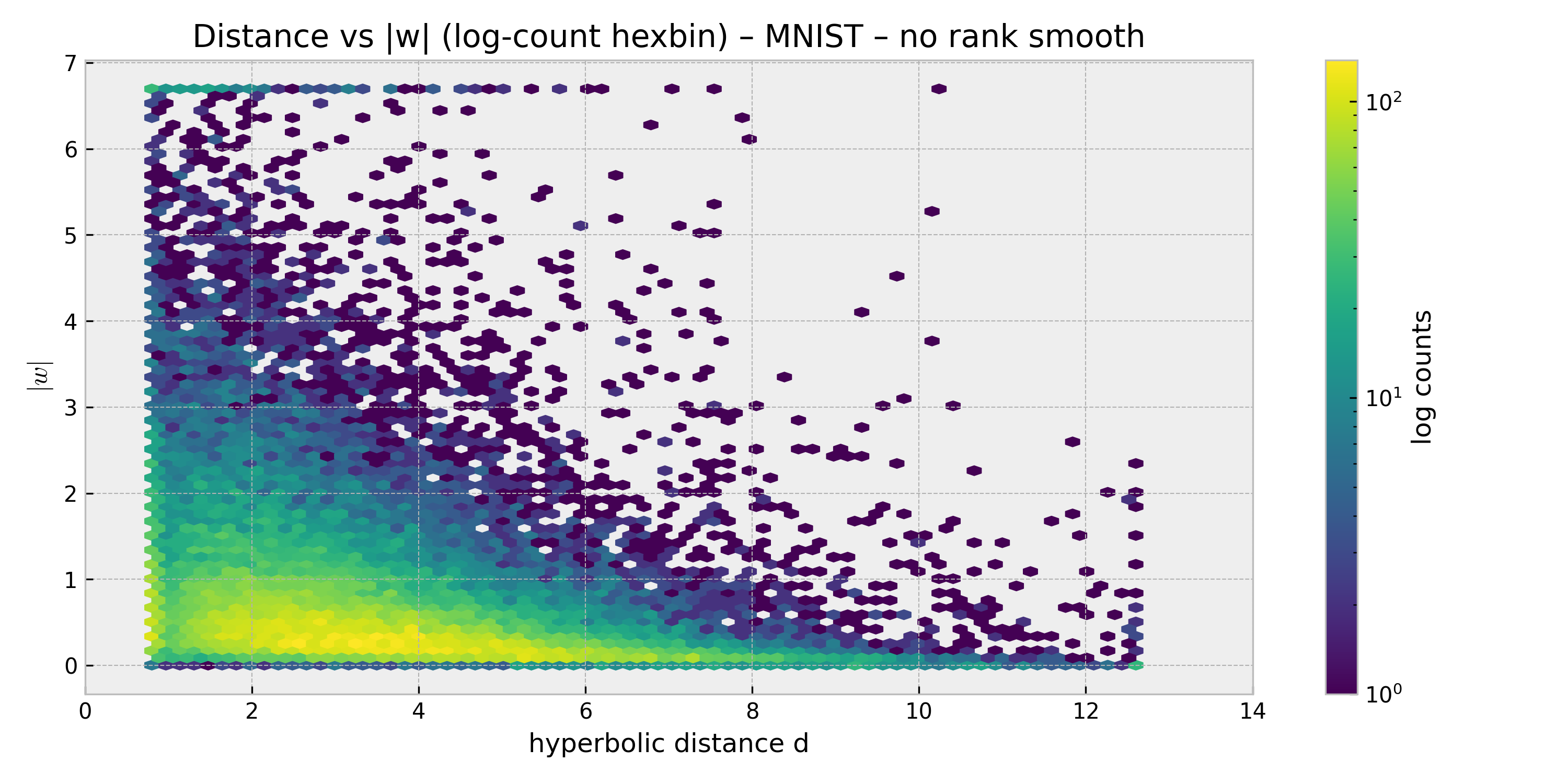}
            \end{minipage}

            \vspace{1.0em} 

            \begin{minipage}{0.49\textwidth}
                \centering
                \includegraphics[width=\linewidth]{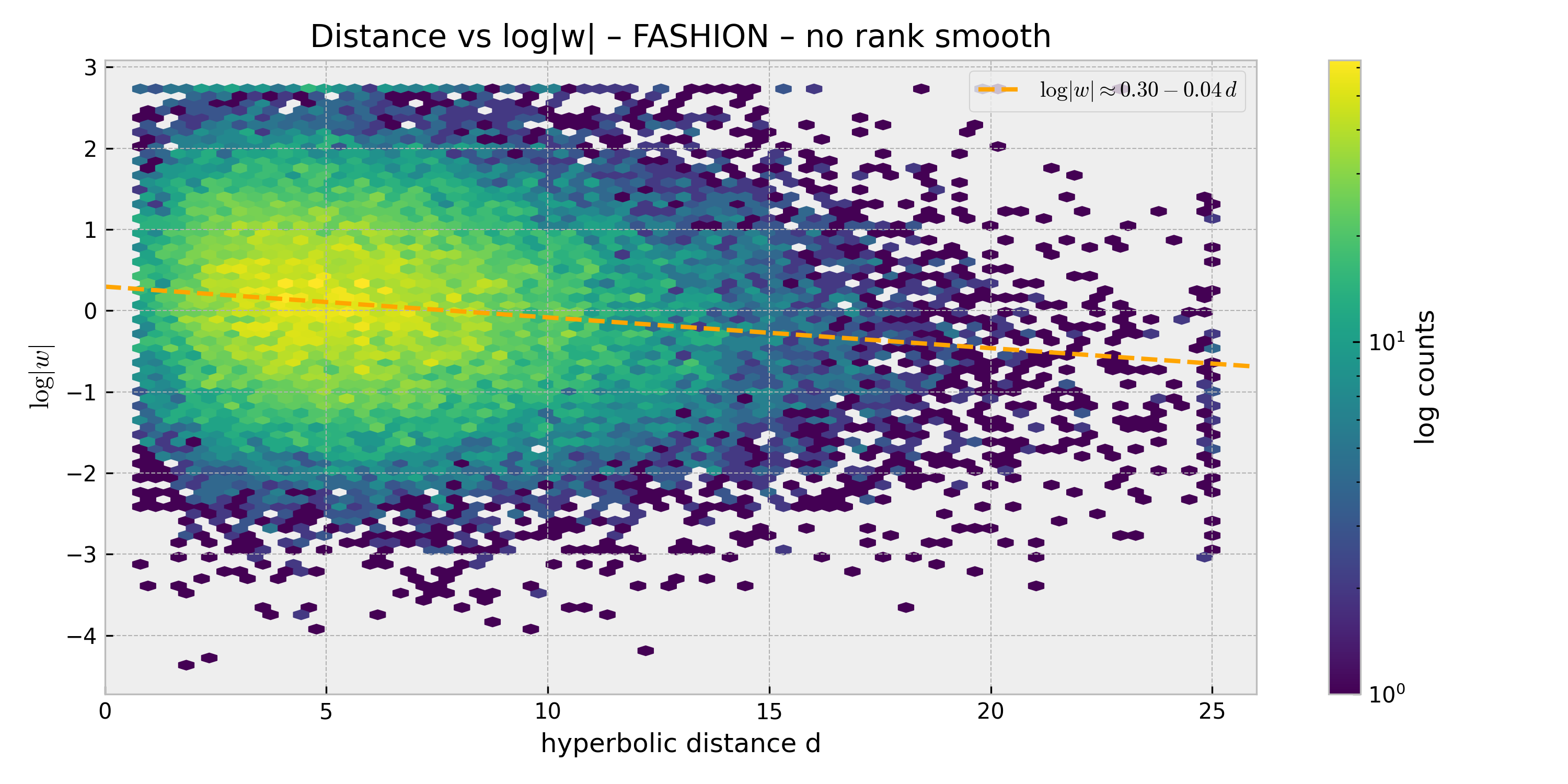}
            \end{minipage}\hfill
            \begin{minipage}{0.49\textwidth}
                \centering
                \includegraphics[width=\linewidth]{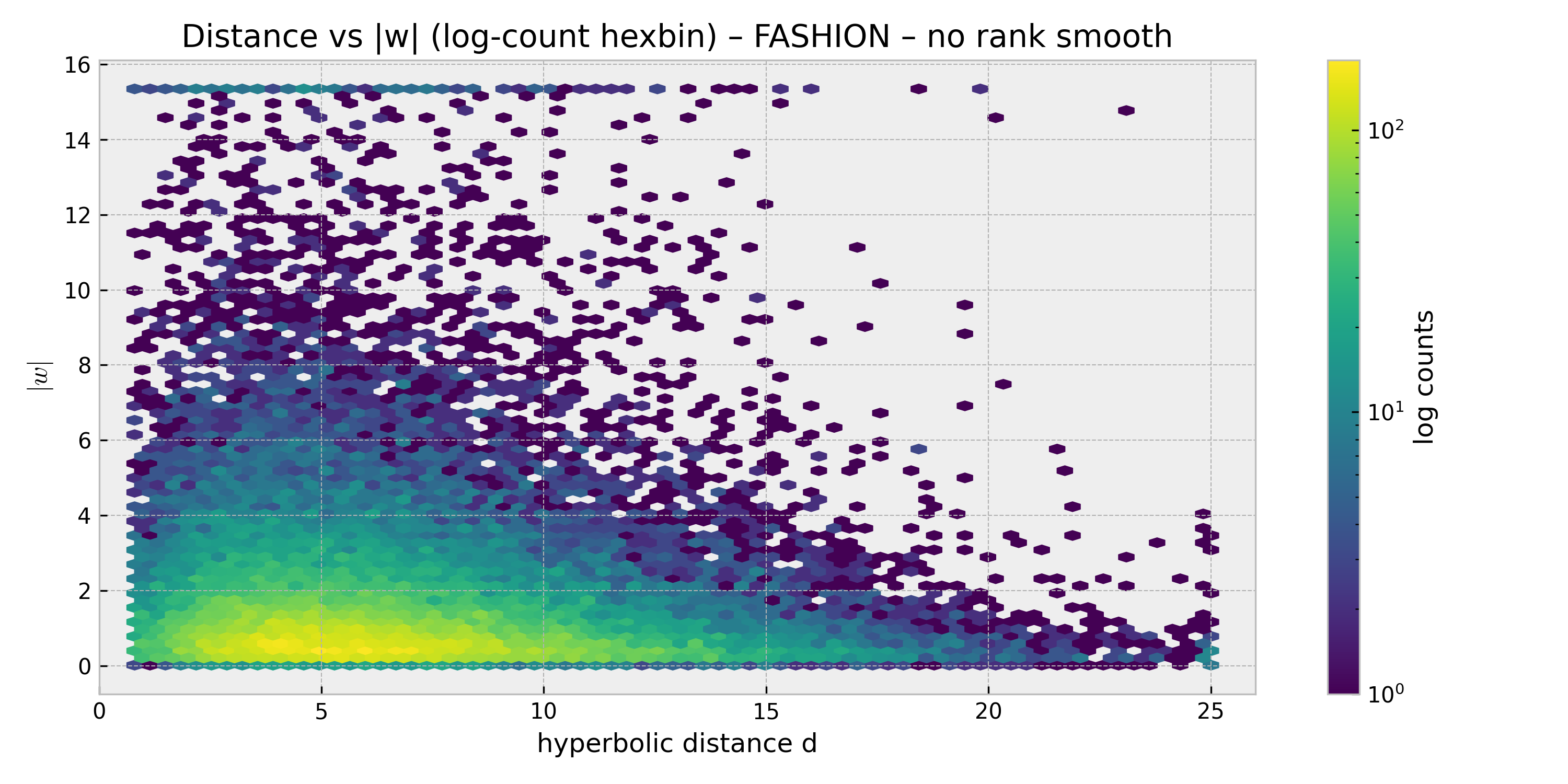}
            \end{minipage}

        \end{minipage}
    }
\caption{\small{Analysis of the relationship between hyperbolic distance ($d$) and edge weights for the "No ranking and no temporal smoothness loss" condition. The top row displays results for MNIST, while the bottom row shows Fashion-MNIST. For each dataset, the left image shows the hexbin plot of distance vs. $\log|w|$ with a fitted linear regression line (dashed orange). The right image shows the hexbin plot of distance vs. the raw weight magnitude $|w|$. The fitted slope for MNIST in this condition ($-0.365d$) indicates a substantially shallower decay of edge weights compared to the full model, while the decay for Fashion-MNIST is minimal ( $-0.061d$), reflecting a loss of coherent geometric structure when both regularization terms are removed.}}
\label{fig:no_rank_smooth_geom_combined}
\end{figure}

\paragraph{No ranking and no temporal smoothness.}
When both $\mathcal{L}_{\text{rank}}$ and $\mathcal{L}_{\text{smooth}}$ are removed, the geometry loses coherence across time as well as within each snapshot. Kendall’s $\tau$ declines further to ${0.765}$ on MNIST and ${0.764}$ on Fashion–MNIST (Figure \ref{img:tau_on_no_rank_smooth}), and the distance–$\log|w|$ slope becomes much shallower, falling to ${-0.365}$ on MNIST and ${-0.061}$ on Fashion–MNIST. The scatter rises sharply, and the expected ordering “stronger weights lie closer” breaks down in many regions. Without the temporal smoothness term, node embeddings can jump between checkpoints, making the temporal signal harder to follow and reducing the stability of the mapping between weights and their geometric placement. This ablation shows that ranking and smoothness act jointly: one enforces local ordering, the other maintains temporal continuity, and removing both produces a visibly unstable geometric structure (Figure \ref{fig:no_rank_smooth_geom_combined}).

\paragraph{No signed regression head.}
Removing the signed regression head produces the strongest degradation in the geometry. INR accuracy remains comparatively close to the full model, yet Kendall’s $\tau$ drops to ${0.738}$ on MNIST and ${0.726}$ on Fashion–MNIST (Figure \ref{img:tau_on_no_signed}). The distance–magnitude slopes collapse significantly on MNIST (${-0.211}$) and even reverse on Fashion–MNIST (${+0.125}$) (Figure \ref{fig:no_signed_geom_combined}), and the hexbins lose almost all discernible structure. Without supervision on magnitudes and signs, the model only learns which edges exist; it no longer learns how strong or influential they are, nor how polarity should be represented in tangent directions. Hyperbolic distances therefore reflect connectivity but not graded strength, and the embedding becomes insensitive to functional variation. This confirms that the signed head is the main mechanism that binds the geometry to weight values rather than merely assisting link prediction.

\begin{figure}[h!]
    \centering
    \scalebox{0.8}{
        \begin{minipage}{\textwidth} 
            \centering
            
            \begin{minipage}{0.49\textwidth}
                \centering
                \includegraphics[width=\linewidth]{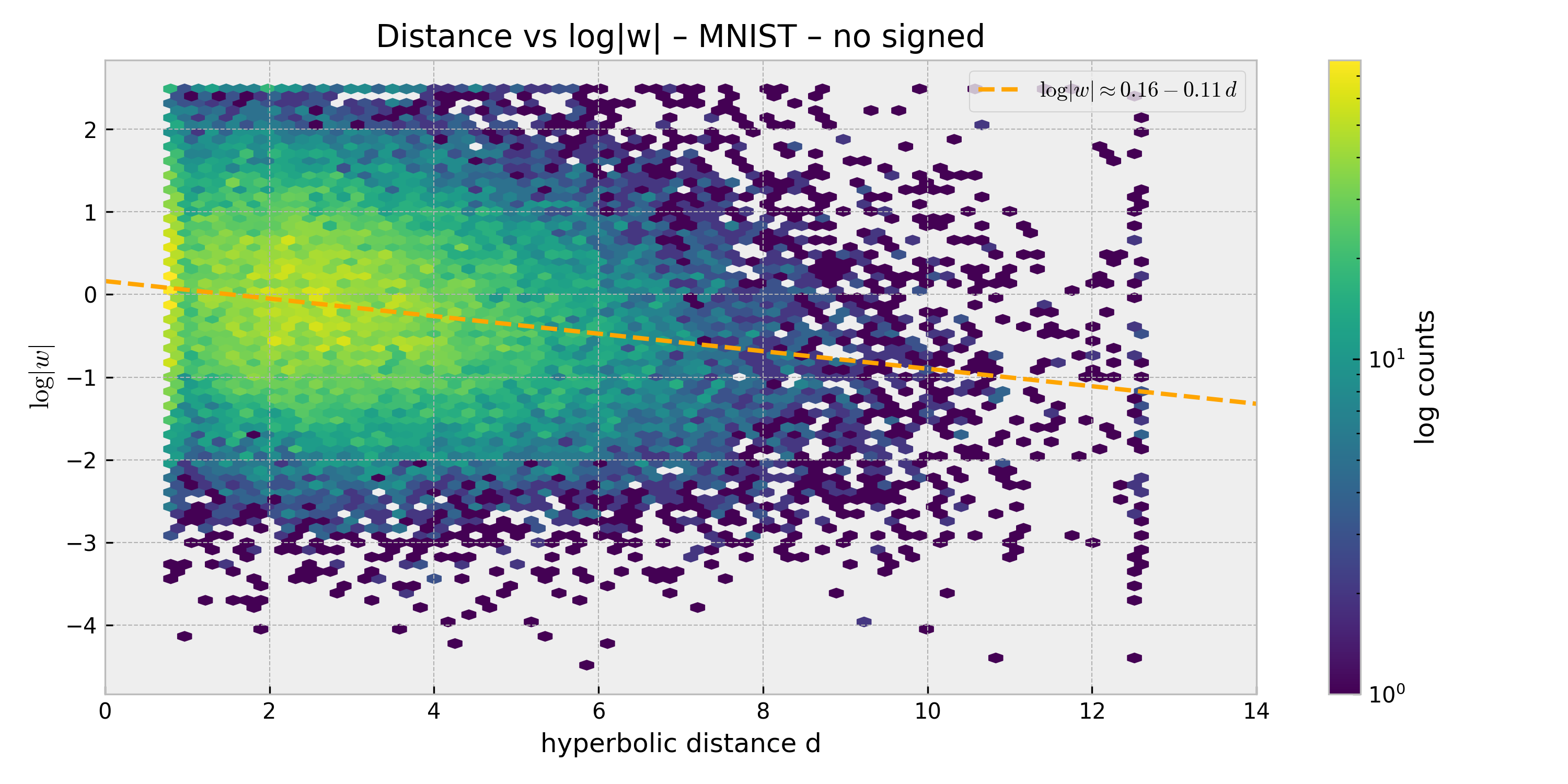}
            \end{minipage}\hfill
            \begin{minipage}{0.49\textwidth}
                \centering
                \includegraphics[width=\linewidth]{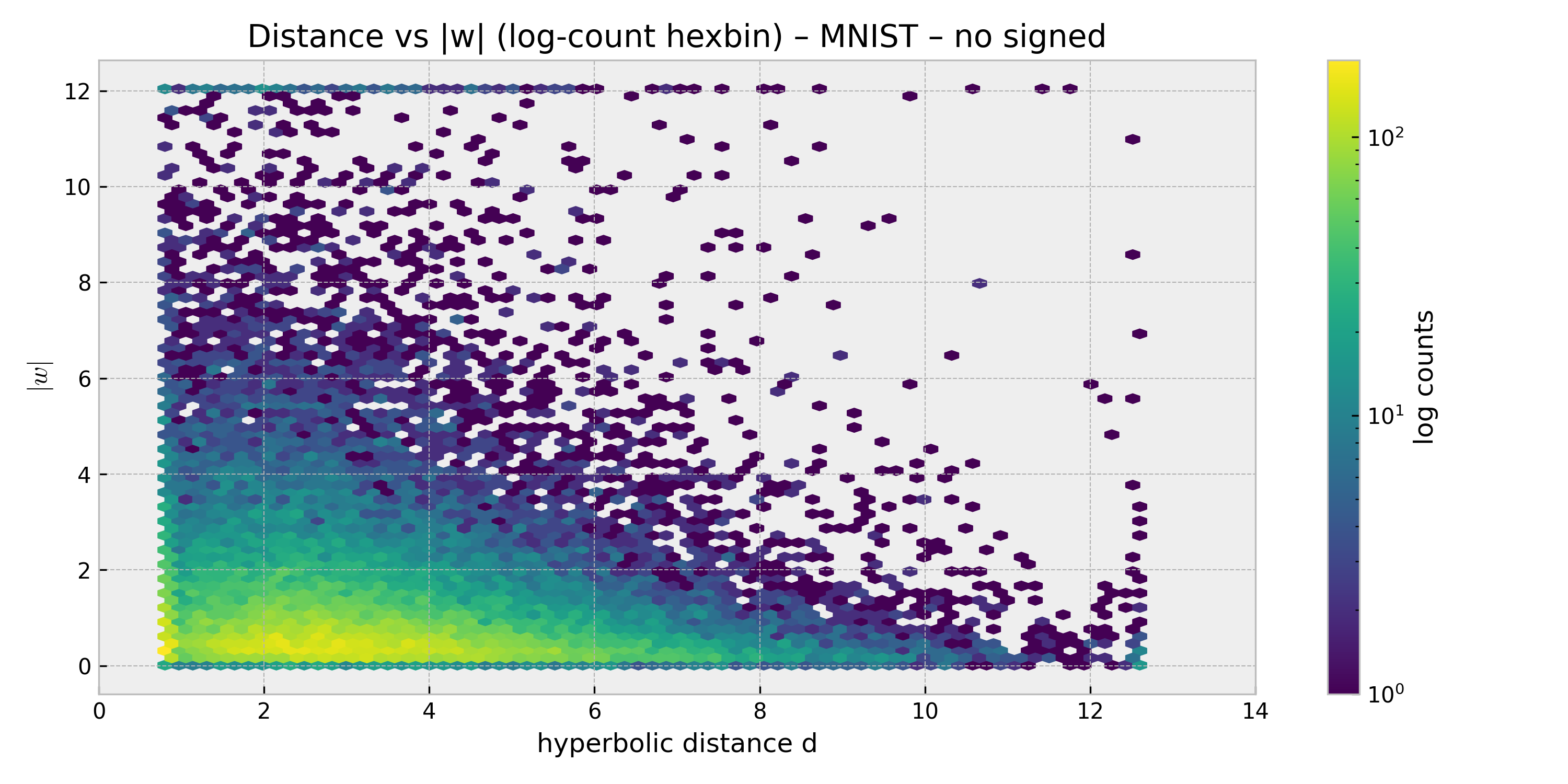}
            \end{minipage}

            \vspace{1.0em} 

            \begin{minipage}{0.49\textwidth}
                \centering
                \includegraphics[width=\linewidth]{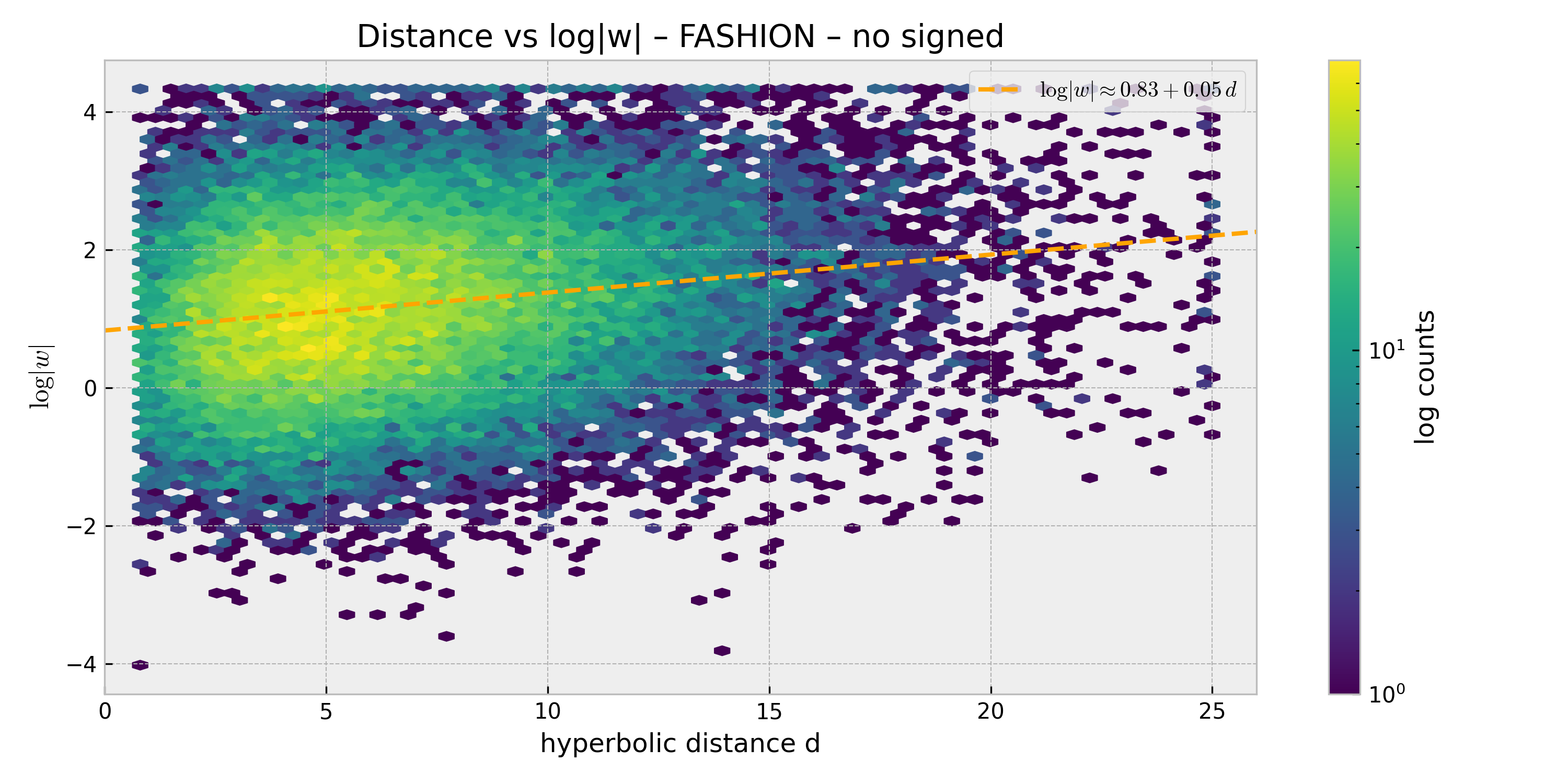}
            \end{minipage}\hfill
            \begin{minipage}{0.49\textwidth}
                \centering
                \includegraphics[width=\linewidth]{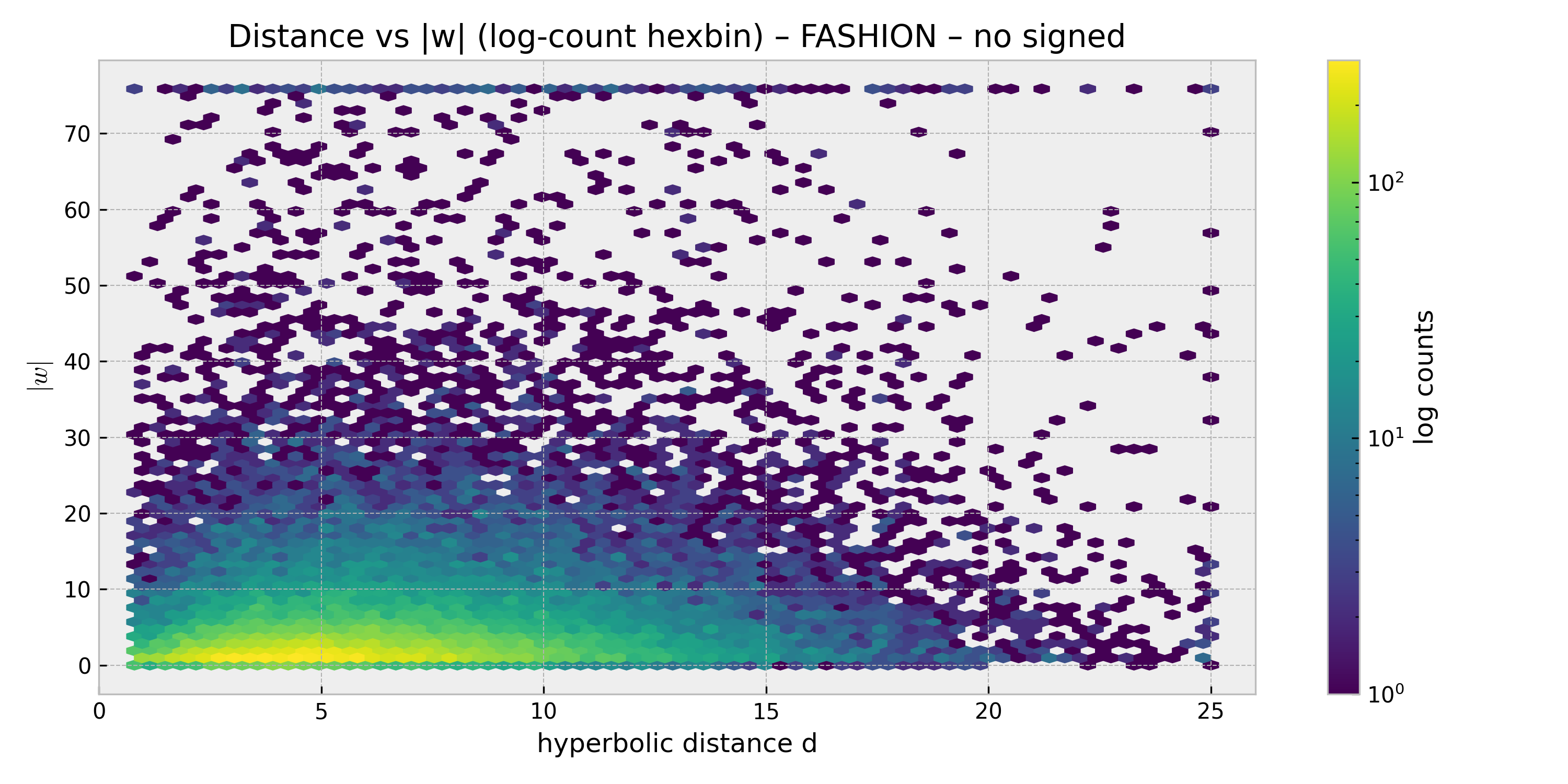}
            \end{minipage}

        \end{minipage}
    }
\caption{\small{Analysis of the relationship between hyperbolic distance ($d$) and edge weights for the "No signed regression head" condition. The top row displays results for MNIST, while the bottom row shows Fashion-MNIST. For each dataset, the left image shows the hexbin plot of distance vs. $\log|w|$ with a fitted linear regression line (dashed orange). The right image shows the hexbin plot of distance vs. the raw weight magnitude $|w|$. The fitted slope for MNIST in this condition ( $-0.211d$) is near-zero, and the slope for Fashion-MNIST ($+0.125d$) is \textbf{positive}, indicating that the expected decay of weights with distance has collapsed or reversed. This confirms that the signed head is the primary mechanism binding the hyperbolic geometry to the functional magnitude of the weights.}}
\label{fig:no_signed_geom_combined}
\end{figure}
Taken together, these ablations show that the ranking and smoothness losses preserve the temporal and geometric coherence of the embeddings, while the signed head is essential for encoding magnitude–distance coupling. The INR classification task remains robust in several ablations, yet the auxiliary metrics and geometric plots reveal a steady erosion of the hyperbolic structure that supports our explanatory interpretations when each constraint is removed.

\section{Future Works}\label{app:future works}
Our approach of geometrically representing the temporal dynamics of deep neural networks naturally extends theoretically beyond MLPs because any neural architecture can be represented as a 
parameter graph \citep{lim2023graphmetanetworksprocessingdiverse}. For CNNs, convolutional kernels give structured edge sets between input and output channel nodes forming a \textit{multi-graph}, with kernel position and related structure stored as edge features. For Transformers, attention layers induce subgraphs that connect query, key, value, and output channels, with head and layer indices and normalization parameters encoded as node and edge features \citep{lim2023graphmetanetworksprocessingdiverse}. The deeper layers form the MLP parameter graphs and can be modeled as stated in this work for both CNNs and Transformers. Since the hyperbolic encoder operates only on node and edge structure, these 
architectures require no modification to the core method: their parameters simply instantiate richer 
graph topologies that the same temporal geometric machinery can process. Apart from that, recent work has shown that optimization trajectories admit coarse equivalence classes through topological conjugacy and Koopman spectral analysis \citep{KoopmanNeurips2024}. These dynamical characterizations complement our hyperbolic parameter graph approach. A promising future direction is to incorporate such global dynamical information into the temporal update module or the final task-specific readout, allowing these signals to enrich the geometric structure captured by GTH-GMN through a broader global context.

\end{document}